\documentclass{article} 
\usepackage{iclr2021_conference,times}

\usepackage{hyperref}
\usepackage{url}

\usepackage{amsfonts}
\usepackage{graphicx}
\usepackage{subfigure}
\usepackage{xcolor}
\usepackage[ruled,vlined]{algorithm2e}
\usepackage{algorithmic}
\usepackage{amsthm}
\usepackage{amsmath}
\allowdisplaybreaks

\newcommand{\mdp}{\mathcal{M}}
\newcommand{\sspace}{\mathcal{S}}
\newcommand{\aspace}{\mathcal{A}}
\newcommand{\saspace}{\mathcal{Z}}
\newcommand{\dataspace}{\mathcal{D}}\newcommand{\pispace}{\Pi}
\newcommand{\rawrew}{\mathcal{R}}
\newcommand{\rawerew}{\mathbf{r}}
\newcommand{\rawtrns}{P}
\newcommand{\rew}{\rawrew}
\newcommand{\erew}{\rawerew}
\newcommand{\trns}{\rawtrns}

\newcommand{\erewm}{\rawerew_{\mdp}}
\newcommand{\trnsm}{\rawtrns_{\mdp}}
\newcommand{\val}{\mathbf{v}}
\newcommand{\qval}{\mathbf{q}}

\newcommand{\lowval}{\val}

\newcommand{\uncraw}{\mathbf{u}}
\newcommand{\uncsa}{\uncraw_{D,\delta}}
\newcommand{\unc}{\uncraw_{D,\delta}^{\pi}}
\newcommand{\uncp}{\uncraw_{D,\delta}^{\pi'}}

\newcommand{\countd}{\dot{\mathbf{n}}_D}

\newcommand{\countdsa}{\ddot{\mathbf{n}}_D}
\newcommand{\ircountdsa}{\countdsa^{-\frac12}}


\newcommand{\Id}{I}

\newcommand{\act}{A}
\newcommand{\actpi}{\act^\pi}
\newcommand{\actpip}{\act^{\pi'}}
\newcommand{\valpi}{\val^{\pi}}
\newcommand{\qvalpi}{\qval^{\pi}}
\newcommand{\bell}{\mathcal{B}}
\newcommand{\bellpi}{\bell^{\pi}}

\newcommand{\subopt}{\textsc{SubOpt}}


\newcommand{\valpid}{\val^{\pi}_D}

\newcommand{\lowvalpi}{\lowval^\pi}

\newcommand{\lowvalpid}{\lowval^\pi_D}

\newcommand{\optp}{\pi^{*}}


\newcommand{\erewd}{\rawerew_D}
\newcommand{\trnsd}{\rawtrns_D}
\newcommand{\real}{\mathbb{R}}
\newcommand{\dist}{{Dist}}

\newcommand{\onev}{\dot{\mathbf{1}}}
\newcommand{\onevsa}{\ddot{\mathbf{1}}}


\newcommand{\E}{\mathbb{E}}

\newcommand{\emp}{\hat{\pi}_{D}}
\newcommand{\tv}{\text{TV}}

\newtheorem{theorem}{Theorem}
\newtheorem{lemma}{Lemma}

\newtheorem{corollary}{Corollary}
\newtheorem{definition}{Definition}
\newtheorem{proposition}{Proposition}

\newcommand{\sa}{\langle s,a \rangle}

\newcommand{\vunc}{\boldsymbol{\mu}}
\newcommand{\vuncpid}{\vunc_{D,\delta}^{\pi}}
\newcommand{\vuncpipd}{\vunc_{D,\delta}^{\pi'}}
\newcommand{\vuncempd}{\vunc_{D,\delta}^{\emp}}


\DeclareMathOperator*{\argmax}{arg\,max}

\title{The Importance of Pessimism in Fixed-Dataset Policy Optimization}


\author{Jacob Buckman\footnote{Correspondence to: \url{jacobbuckman@gmail.com}} \\
Mila; McGill University
\And
Carles Gelada \\
OpenAI
\And
Marc G. Bellemare \\
Google Research; Mila; McGill University
}

%

\iclrfinalcopy 
\begin{document}

\maketitle

\begin{abstract}

We study worst-case guarantees on the expected return of fixed-dataset policy optimization algorithms. Our core contribution is a unified conceptual and mathematical framework for the study of algorithms in this regime. This analysis reveals that for na\"ive approaches, the possibility of erroneous value overestimation leads to a difficult-to-satisfy requirement: in order to guarantee that we select a policy which is near-optimal, we may need the dataset to be informative of the value of every policy. To avoid this, algorithms can follow the \textit{pessimism principle}, which states that we should choose the policy which acts optimally in the worst possible world. We show why pessimistic algorithms can achieve good performance even when the dataset is not informative of every policy, and derive families of algorithms which follow this principle. These theoretical findings are validated by experiments on a tabular gridworld, and deep learning experiments on four MinAtar environments.

\end{abstract}

\section{Introduction}

We consider \textit{fixed-dataset policy optimization} (FDPO), in which a dataset of transitions from an environment is used to find a policy with high return.\footnote{We use the term fixed-dataset policy optimization to emphasize the computational procedure; this setting has also been referred to as \textit{batch RL}~\citep{ernst05treebased,lange12batch} and more recently, \textit{offline RL}~\citep{survey}. We emphasize that this is a well-studied setting, and we are simply choosing to refer to it by a more descriptive name.} We compare FDPO algorithms by their worst-case performance, expressed as high-probability guarantees on the suboptimality of the learned policy. It is perhaps obvious that in order to maximize worst-case performance, a good FDPO algorithm should select a policy with high worst-case value. We call this the \textit{pessimism principle} of exploitation, as it is analogous to the widely-known \textit{optimism principle}~\citep{bandit} of exploration.\footnote{The optimism principle states that we should select a policy with high best-case value.}

Our main contribution is a theoretical justification of the pessimism principle in FDPO, based on a bound that characterizes the regret of a generic decision-maker which optimizes a proxy objective. We further demonstrate how this bound may be used to derive principled algorithms. Note that the core novelty of our work is not the idea of pessimism, which is an intuitive concept that appears in a variety of contexts; rather, our contribution is a set of theoretical results rigorously explaining \textit{how} pessimism is important in the specific setting of FDPO. An example conveying the intuition behind our results can be found in Appendix \ref{sec:example}.

We first analyze a family of non-pessimistic \textit{na\"ive FDPO algorithms}, which estimate the environment from the dataset via maximum likelihood and then apply standard dynamic programming techniques. We prove a bound which shows that the worst-case suboptimality of these algorithms is guaranteed to be small when the dataset contains enough data that we are certain about the value of every possible policy. This is caused by the outsized impact of value overestimation errors on suboptimality, sometimes called the optimizer's curse \citep{smith2006optimizer}. It is a fundamental consequence of ignoring the disconnect between the true environment and the picture painted by our limited observations. Importantly, it is not reliant on errors introduced by function approximation.

We contrast these findings with an analysis of \textit{pessimistic FDPO algorithms}, which select a policy that maximizes some notion of worst-case expected return. We show that these algorithms do not require datasets which inform us about the value of every policy to achieve small suboptimality, due to the critical role that pessimism plays in preventing overestimation.
Our analysis naturally leads to two families of principled pessimistic FDPO algorithms. We prove their improved suboptimality guarantees, and confirm our claims with experiments on a gridworld.

Finally, we extend one of our pessimistic algorithms to the deep learning setting. Recently, several deep-learning-based algorithms for fixed-dataset policy optimization have been proposed~\citep{agarwal2019striving,bcq,bear,spibb,klc,morel,mopo,brac,crr,cql,liu2020provably}. Our work is complementary to these results, as our contributions are conceptual, rather than algorithmic. Our primary goal is to theoretically unify existing approaches and motivate the design of pessimistic algorithms more broadly.
Using experiments in the MinAtar game suite \citep{minatar}, we provide empirical validation for the predictions of our analysis.

The problem of fixed-dataset policy optimization is closely related to the problem of reinforcement learning, and as such, there is a large body of work which contains ideas related to those discussed in this paper. We discuss these works in detail in Appendix \ref{sec:related}.

\section{How Proxy Objectives Impact Regret}
\label{sec:genericbound}

Our central theoretical contribution is a novel bound for generalized decision-making. Theorem~\ref{thm:proxyreg} shows how the regret incurred by a decision-maker who optimizes a \textit{proxy objective} is governed by the quality of the proxy. We consider an abstract decision-making task, whose goal is to choose an element from some space $\mathcal{X}$ to maximize some objective $f : \mathcal{X} \to \real$.

\begin{theorem}
For any space $\mathcal{X}$, objective $f : \mathcal{X} \to \real$, and proxy objective $\hat{f} : \mathcal{X} \to \real$,
$$f(x^*) - f(\hat{x}^*) \le \inf_{x \in \mathcal{X}} \left( [f(x^*) - f(x)] + [f(x) - \hat{f}(x)] \right) + \sup_{x \in \mathcal{X}} \Bigl( \hat{f}(x) - f(x) \Bigr)$$
where $x^* := \argmax_{x \in \mathcal{X}} f(x)$ and $\hat{x}^* := \argmax_{x \in \mathcal{X}} \hat{f}(x)$. Furthermore, this bound is tight.
\label{thm:proxyreg}
\end{theorem}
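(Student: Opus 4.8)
The plan is to exploit the fact that the two bracketed quantities inside the infimum telescope. For any fixed $x \in \mathcal{X}$, observe that
$$[f(x^*) - f(x)] + [f(x) - \hat{f}(x)] = f(x^*) - \hat{f}(x),$$
since the $\pm f(x)$ terms cancel. Because $f(x^*)$ does not depend on the optimization variable, taking the infimum over $x$ turns minimizing $-\hat{f}(x)$ into maximizing $\hat{f}(x)$, so
$$\inf_{x \in \mathcal{X}} \left( [f(x^*) - f(x)] + [f(x) - \hat{f}(x)] \right) = f(x^*) - \sup_{x \in \mathcal{X}} \hat{f}(x) = f(x^*) - \hat{f}(\hat{x}^*),$$
where the final equality uses the definition $\hat{x}^* := \argmax_{x \in \mathcal{X}} \hat{f}(x)$.

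Next I would substitute this identity into the right-hand side of the claimed bound, reducing the target inequality to
$$f(x^*) - f(\hat{x}^*) \le f(x^*) - \hat{f}(\hat{x}^*) + \sup_{x \in \mathcal{X}} \bigl( \hat{f}(x) - f(x) \bigr).$$
Cancelling the common $f(x^*)$ and rearranging leaves the single inequality
$$\hat{f}(\hat{x}^*) - f(\hat{x}^*) \le \sup_{x \in \mathcal{X}} \bigl( \hat{f}(x) - f(x) \bigr),$$
which holds immediately because $\hat{x}^* \in \mathcal{X}$ and the supremum dominates the value of its argument at any particular point. This establishes the upper bound. The only genuine insight required is spotting the telescoping cancellation; once the infimum is recognized as $f(x^*) - \hat{f}(\hat{x}^*)$, the remainder is mechanical, so I do not expect any substantial obstacle in this direction.

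For tightness, I would exhibit a concrete instance achieving equality. Tracing the single inequality above, equality holds exactly when the overestimation $\hat{f} - f$ attains its supremum at $\hat{x}^*$. A minimal witness is the two-point space $\mathcal{X} = \{a, b\}$ with $f(a) = f(b) = 0$, $\hat{f}(a) = 1$, and $\hat{f}(b) = 0$: here $\hat{x}^* = a$ simultaneously maximizes both $\hat{f}$ and $\hat{f} - f$, the left-hand side equals $0$, and the right-hand side evaluates to $(0 - 1) + 1 = 0$. The only subtlety in this part is not the construction itself but checking that a single example saturates every inequality used in the derivation at once, which the displayed values confirm.
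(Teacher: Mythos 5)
Your proof is correct and follows essentially the same route as the paper's: both hinge on the telescoping cancellation of $\pm f(x)$, the maximality of $\hat{x}^*$ for $\hat{f}$ (which you use to collapse the infimum exactly to $f(x^*) - \hat{f}(\hat{x}^*)$, where the paper runs the same facts as a forward chain of inequalities), and the observation that the supremum dominates the overestimation error at the particular point $\hat{x}^* \in \mathcal{X}$. Your tightness witness differs superficially from the paper's (which uses a tie in $\hat{f}$ broken adversarially, so that $f(x^*) - f(\hat{x}^*) = 1$), but it is equally valid, and your exact-identity reduction has the minor added benefit of characterizing precisely when equality holds, namely when $\hat{f} - f$ attains its supremum at $\hat{x}^*$.
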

\vspace{-14pt}
\begin{proof}
See Appendix \ref{sec:overunderproof} and \ref{sec:tight}.
\end{proof}

This result reveals a formal connection between prediction tasks and decision-making tasks. When the proxy objective is an estimator of the true objective, a reduction in estimation error (prediction) leads to reduced regret (decision-making). Crucially, this relationship is not straightforward: there is an asymmetry between the impact of overestimation errors, i.e., $x$ for which $\hat{f}(x) > f(x)$, and underestimation errors, where $\hat{f}(x) < f(x)$. To see this, we consider each of its terms in isolation:
$$f(x^*) - f(\hat{x}^*) \le  \underbrace{\inf_{x \in \mathcal{X}} \Bigl( \overbrace{[f(x^*) - f(x)]}^{\textsc{(a1)}} + \overbrace{[f(x) - \hat{f}(x)]}^{\textsc{(a2)}} \Bigr)}_{\textsc{(a)}} + \underbrace{\sup_{x \in \mathcal{X}} \Bigl( \overbrace{\hat{f}(x) - f(x)}^{\textsc{(b1)}} \Bigr)}_{\textsc{(b)}}$$

The term labeled $\textsc{(a)}$ reflects the degree to which the proxy is accurate on a near-optimal $x$. For any choice $x$, $\textsc{(a1)}$ captures its suboptimality, and $\textsc{(a2)}$ captures its underestimation error. Since $\textsc{(a)}$ takes an infimum over $\mathcal{X}$, this term will be small whenever there is at least one reasonable choice whose score is not very underestimated by the proxy. On the other hand, the term labeled $\textsc{(b)}$ corresponds to the largest overestimation error on any $x$. Because it consists of a supremum over $\mathcal{X}$, it will be small only when \textit{no choices at all} are overestimated much. Even a single large overestimation can lead to significant regret.

Current algorithms for prediction tasks have not been designed with this asymmetry in mind. As a result, when the function learned by such an algorithm is used as a proxy objective for a decision-making task, we observe a characteristic behavior: the choice we select turns out to have been overestimated, and we experience high regret. The remainder of this paper explores this effect in detail on one specific formalization of a decision-making problem: FDPO.

\section{FDPO Background}
\label{sec:background}

We anticipate most readers will be familiar with the concepts and notation, which is fairly standard in the reinforcement learning literature. In the interest of space, we relegate a full presentation to Appendix \ref{sec:fullbackground}. Here, we briefly give an informal overview of the background necessary to understand the main results.

We represent the environment as a \emph{Markov Decision Process (MDP)}, denoted $\mdp := \langle\sspace, \aspace, \rew, \trns, \gamma, \rho\rangle$. We assume without loss of generality that $\rew(\sa) \in [0,1]$, and denote its expectation as $\erew(\sa)$. $\rho$ represents the start-state distribution. Policies $\pi \in \Pi$ can act in the environment, represented by action matrix $\actpi$, which maps each state to the probability of each state-action when following $\pi$. Value functions $\val$ assign some real value to each state. We use $\valpi_{\mdp}$ to denote the value function which assigns the sum of discounted rewards in the environment when following policy $\pi$. A dataset $D$ contains transitions sampled from the environment. From a dataset, we compute the empirical reward and transition functions, $\erewd$ and $\trnsd$, and the empirical policy, $\emp$. 

An important concept for our analysis is the \textit{value uncertainty function}, denoted $\vuncpid$, which returns a high-probability upper-bound to the error of a value function derived from dataset $D$. Certain value uncertainty functions are \textit{decomposable} by states or state-actions, meaning they can be written as the weighted sum of more local uncertainties. See Appendix \ref{sec:unc} for more detail.

Our goal is to analyze the suboptimality of a specific class of FDPO algorithms, called \textit{value-based FDPO algorithms}, which have a straightforward structure: they use a \textit{fixed-dataset policy evaluation (FDPE)} algorithm to assign a value to each policy, and then select the policy with the maximum value. Furthermore, we consider FDPE algorithms whose solutions satisfy a fixed-point equation. Thus, a fixed-point equation defines a FDPE objective, which in turn defines a value-based FDPO objective; we call the set of all algorithms that implement these objectives the \textit{family} of algorithms defined by the fixed-point equation.

FDPO represents a specific instantiation of the more general decision-making framework in Theorem \ref{thm:proxyreg}, where $\mathcal{X} := \Pi$, $f(\pi) := \E_{\rho}[\val_\mdp^\pi]$, and $\hat{f}(\pi) := \E_{\rho}[\mathcal{E}(D, \pi)]$. Thus, from Theorem \ref{thm:proxyreg}:
\begin{corollary}[Value-based FDPO suboptimality bound]
Consider any value-based fixed-dataset policy optimization algorithm $\mathcal{O}^{\text{VB}}$, with fixed-dataset policy evaluation subroutine $\mathcal{E}$. For any policy $\pi$ and dataset $D$, denote $\valpid := \mathcal{E}(D, \pi)$. The suboptimality of $\mathcal{O}^{\text{VB}}$ is bounded by $$\subopt(\mathcal{O}^{\text{VB}}(D)) \le \inf_\pi \left( \E_{\rho}[\val_\mdp^{\optp_\mdp} - \val_\mdp^{\pi}] + \E_{\rho}[\val_\mdp^{\pi} - \val_D^{\pi}] \right) + \sup_\pi \Bigl( \E_{\rho}[\valpid - \valpi_\mdp] \Bigr).$$
\label{cor:oposub}
\vspace{-18pt}
\end{corollary}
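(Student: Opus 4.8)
The plan is to recognize the corollary as a direct instantiation of Theorem~\ref{thm:proxyreg}, so that essentially the only work is verifying that each abstract quantity carries the claimed concrete meaning. First I would fix the correspondence $\mathcal{X} := \pispace$, $f(\pi) := \E_\rho[\val_\mdp^\pi]$, and $\hat f(\pi) := \E_\rho[\valpid]$, where $\valpid := \mathcal{E}(D,\pi)$ is the value function assigned to $\pi$ by the FDPE subroutine and $\E_\rho$ reduces it to a scalar score. Under this identification, the true maximizer $x^* = \argmax_\pi f(\pi)$ is precisely a policy maximizing expected return in $\mdp$, i.e.\ $x^* = \optp_\mdp$ with $f(x^*) = \E_\rho[\val_\mdp^{\optp_\mdp}]$.

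Next I would argue that $\hat x^* = \argmax_\pi \hat f(\pi)$ is exactly the policy returned by the value-based algorithm $\mathcal{O}^{\text{VB}}$. This is immediate from the definition of a value-based FDPO algorithm: it evaluates each policy with $\mathcal{E}$ and then outputs the policy of maximal estimated value, which is $\argmax_\pi \E_\rho[\valpid] = \argmax_\pi \hat f(\pi)$. With both maximizers identified, the suboptimality $\subopt(\mathcal{O}^{\text{VB}}(D)) = \E_\rho[\val_\mdp^{\optp_\mdp}] - \E_\rho[\val_\mdp^{\mathcal{O}^{\text{VB}}(D)}]$ is literally $f(x^*) - f(\hat x^*)$.

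The remaining step is to substitute these expressions into the conclusion of Theorem~\ref{thm:proxyreg} and simplify. Plugging in $f$ and $\hat f$ turns $[f(x^*)-f(x)]$ into $\E_\rho[\val_\mdp^{\optp_\mdp} - \val_\mdp^\pi]$, $[f(x)-\hat f(x)]$ into $\E_\rho[\val_\mdp^\pi - \valpid]$, and $\hat f(x)-f(x)$ into $\E_\rho[\valpid - \val_\mdp^\pi]$; linearity of expectation then lets me merge the additive terms inside each expectation, yielding exactly the stated bound after renaming the dummy variable $x$ to $\pi$.

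I do not anticipate a serious obstacle, since the inequality itself is inherited wholesale from Theorem~\ref{thm:proxyreg}; the content of the corollary is entirely in the translation of vocabulary. The only point requiring care is the bookkeeping around definitions: I must confirm that the algorithm's output genuinely coincides with the proxy-maximizer $\hat x^*$ (so that no extra slack is introduced by a selection rule that differs from $\argmax \hat f$), and that any tie-breaking in either $\argmax$ follows the same convention assumed in Theorem~\ref{thm:proxyreg}, so that the abstract bound transfers verbatim.
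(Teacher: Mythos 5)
Your proposal is correct and matches the paper's own argument exactly: the paper proves this corollary precisely by instantiating Theorem~\ref{thm:proxyreg} with $\mathcal{X} := \Pi$, $f(\pi) := \E_{\rho}[\val_\mdp^\pi]$, and $\hat{f}(\pi) := \E_{\rho}[\mathcal{E}(D,\pi)]$, noting that the definition of a value-based FDPO algorithm makes its output the maximizer of $\hat{f}$. Your attention to tie-breaking is a reasonable precaution, and it is harmless here since the proof of Theorem~\ref{thm:proxyreg} only uses $\hat{f}(x) - \hat{f}(\hat{x}^*) \le 0$, which holds for any choice of maximizer.
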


\section{Na\"ive Algorithms}
\label{sec:naive}

The goal of this section is to paint a high-level picture of the worst-case suboptimality guarantees of a specific family of non-pessimistic approaches, which we call \textit{na\"ive FDPO algorithms}. Informally, the na\"ive approach is to take the limited dataset of observations at face value, treating it as though it paints a fully accurate picture of the environment. The value function of a na\"ive algorithm is an example of an asymmetry-agnostic proxy objective, as discussed in Section \ref{sec:genericbound}.

\begin{definition}
A \emph{na\"ive algorithm} is any algorithm in the family defined by the fixed-point function $$f_{\text{na\"ive}}(\valpi) := \actpi(\erewd + \gamma \trnsd \valpi).$$
\end{definition}

Various FDPE and FDPO algorithms from this family could be described; in this work, we do not study these implementations in detail, although we do give pseudocode for some implementations in Appendix \ref{sec:naivealgos}.

One example of a na\"ive FDPO algorithm which can be found in the literature is certainty equivalence \citep{nannote3}. The core ideas behind na\"ive algorithms can also be found in the function approximation literature, for example in FQI \citep{ernst05treebased,nannote5}. Additionally, when available data is held fixed, nearly all existing deep reinforcement learning algorithms are transformed into na\"ive value-based FDPO algorithms. For example, DQN \citep{dqn} with a fixed replay buffer is a na\"ive value-based FDPO algorithm.

\begin{theorem}[Na\"ive FDPO suboptimality bound] Consider any na\"ive value-based fixed-dataset policy optimization algorithm $\mathcal{O}^{\text{VB}}_{\text{na\"ive}}$. Let $\vunc$ be any value uncertainty function. With probability at least $1 - \delta$, the suboptimality of $\mathcal{O}^{\text{VB}}_{\text{na\"ive}}$ is bounded with probability at least $1 - \delta$ by
$$\subopt(\mathcal{O}^{\text{VB}}_{\text{na\"ive}}(D)) \le \inf_\pi \left( \E_{\rho}[\val_\mdp^{\optp_\mdp} - \val_\mdp^{\pi}] + \E_{\rho}[\vuncpid] \right) + \sup_\pi \E_{\rho}[\vuncpid]$$
\label{thm:naivesubopt}
\end{theorem}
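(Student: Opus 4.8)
The plan is to specialize the value-based FDPO suboptimality bound of Corollary~\ref{cor:oposub} to the naïve family and then control the two error terms using the value uncertainty function $\vunc$. Corollary~\ref{cor:oposub} already gives, deterministically,
\[
\subopt(\mathcal{O}^{\text{VB}}(D)) \le \inf_\pi \left( \E_{\rho}[\val_\mdp^{\optp_\mdp} - \val_\mdp^{\pi}] + \E_{\rho}[\val_\mdp^{\pi} - \val_D^{\pi}] \right) + \sup_\pi \E_{\rho}[\valpid - \valpi_\mdp],
\]
where $\valpid = \mathcal{E}(D,\pi)$ is the naïve FDPE estimate, i.e. the fixed point of $f_{\text{na\"ive}}$. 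The term $\textsc{(a1)}$, $\E_{\rho}[\val_\mdp^{\optp_\mdp} - \val_\mdp^{\pi}]$, is already in the desired form and carries over untouched. So the entire content of the theorem is to replace the two signed error terms $\E_{\rho}[\val_\mdp^{\pi} - \val_D^{\pi}]$ (underestimation) and $\E_{\rho}[\valpid - \valpi_\mdp]$ (overestimation) by the single symmetric uncertainty quantity $\E_{\rho}[\vuncpid]$.

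First I would invoke the defining property of the value uncertainty function: $\vuncpid$ is a high-probability upper bound on the error of the value function derived from $D$, so that with probability at least $1-\delta$, $\norm{\valpid - \valpi_\mdp} \le \vuncpid$ pointwise (or in whatever norm the definition in Appendix~\ref{sec:unc} provides), simultaneously over all policies $\pi$ if the uncertainty function is defined to hold uniformly, or via a union bound otherwise. I would need to confirm that the naïve fixed point $\valpid$ is exactly the object whose error $\vunc$ is designed to bound — this is where the specialization to $f_{\text{na\"ive}}$ is used, since $\valpid$ is the value function computed by treating $\erewd, \trnsd$ as the true model. Granting this, both the overestimation error $\valpid - \valpi_\mdp$ and its negative $\valpi_\mdp - \valpid$ are bounded above by $\vuncpid$ with high probability.

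From there the two bounds are immediate. For $\textsc{(b)}$, since $\valpid - \valpi_\mdp \le \vuncpid$, monotonicity of $\E_\rho[\cdot]$ and of $\sup_\pi$ gives $\sup_\pi \E_{\rho}[\valpid - \valpi_\mdp] \le \sup_\pi \E_{\rho}[\vuncpid]$. For $\textsc{(a2)}$, the reverse inequality $\val_\mdp^{\pi} - \val_D^{\pi} \le \vuncpid$ gives, inside the infimum, $\E_{\rho}[\val_\mdp^{\pi} - \val_D^{\pi}] \le \E_{\rho}[\vuncpid]$; adding the unchanged $\textsc{(a1)}$ term and taking $\inf_\pi$ preserves the inequality. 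Combining the two yields exactly the claimed bound. The probability bookkeeping is the one subtlety to state carefully: both replacements must hold on the same event of probability $\ge 1-\delta$, so I would ensure the uncertainty guarantee is two-sided and uniform over $\pi$ on a single $1-\delta$ event (the doubled ``with probability at least $1-\delta$'' phrasing in the statement suggests $\vunc$ already bakes in the uniform guarantee).

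The main obstacle is not any computation but correctly matching the abstract uncertainty guarantee to this setting: I must verify that $\vunc$ as defined in Appendix~\ref{sec:unc} indeed provides a \emph{two-sided} high-probability bound (both over- and underestimation) that holds \emph{uniformly over all policies} $\pi$ for the naïve estimator, since the proof uses the bound simultaneously in a $\sup_\pi$ and an $\inf_\pi$. If the underlying definition only furnishes a one-sided or per-policy bound, a union bound or a symmetrization argument over $\Pi$ would be needed, and the failure probability would have to be tracked accordingly. Assuming the definition is the natural two-sided uniform one, the proof is a direct term-by-term substitution into Corollary~\ref{cor:oposub}.
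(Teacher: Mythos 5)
Your overall route is the same as the paper's: Theorem~\ref{thm:naivesubopt} is proved by substituting a two-sided, uniform-over-$\pi$ high-probability error bound into Corollary~\ref{cor:oposub} term by term, and your probability bookkeeping (one $1-\delta$ event, two-sided, uniform over policies) is exactly how the paper's definition of uncertainty is arranged, so no extra union bound is needed.

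However, the step you compress into ``confirm that the na\"ive fixed point $\valpid$ is exactly the object whose error $\vunc$ is designed to bound'' is a genuine gap, not a formality: under the paper's formal definition (Appendix~\ref{sec:unc}), a value uncertainty function is \emph{not} defined as a bound on $|\valpi_\mdp - \valpid|$. It is defined as a high-probability upper bound on the accumulated empirical-visitation-weighted Bellman residuals, $\vuncpid \geq \sum_{t=0}^\infty (\gamma \actpi \trnsd)^t \left|\actpi(\erewm + \gamma \trnsm \val) - \actpi(\erewd + \gamma \trnsd \val)\right|$, holding for all $\pi$ and all $\val$. Bridging from this to the statement you invoke, $|\valpi_\mdp - \valpid| \le \vuncpid$, is precisely the paper's Lemma~\ref{lemma:direct}, whose proof needs a short but real argument: because $\valpid$ is the exact value of $\pi$ in the empirical MDP $\langle \sspace,\aspace,\erewd,\trnsd,\gamma,\rho\rangle$, the residual-visitation identity (Lemma~\ref{lem:resvis}, a geometric-series manipulation of the fixed-point equation) gives $|\valpid - \valpi_\mdp| = (\Id - \gamma \actpi \trnsd)^{-1}\left|\actpi(\erewd + \gamma \trnsd \valpi_\mdp) - \valpi_\mdp\right|$; then Bellman consistency of $\valpi_\mdp$ lets you rewrite the residual as $\left|\actpi(\erewd + \gamma \trnsd \valpi_\mdp) - \actpi(\erewm + \gamma \trnsm \valpi_\mdp)\right|$, and instantiating the definition at $\val = \valpi_\mdp$ yields the claimed bound. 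Your proposal, read literally, assumes this property holds by definition (consistent with the main text's informal gloss but not with the actual definition); once Lemma~\ref{lemma:direct} is supplied, the rest of your argument is exactly the paper's proof.
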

\vspace{-18pt}
\begin{proof}
This result follows directly from Corollary \ref{cor:oposub} and Lemma \ref{lemma:direct}.
\end{proof}

The infimum term is small whenever there is some reasonably good policy with low value uncertainty.
In practice, this condition can typically be satisfied, for example by including expert demonstrations in the dataset.
On the other hand, the supremum term will only be small if we have low value uncertainty for all policies -- a much more challenging requirement. This explains the behavior of pathological examples, e.g. in Appendix \ref{sec:example}, where performance is poor despite access to virtually unlimited amounts of data from a near-optimal policy. Such a dataset ensures that the first term will be small by reducing value uncertainty of the near-optimal data collection policy, but does little to reduce the value uncertainty of any other policy, leading the second term to be large.

However, although pathological examples exist, it is clear that this bound will not be tight on all environments. It is reasonable to ask: is it likely that this bound will be tight on real-world examples? We argue that it likely will be. We identify two properties that most real-world tasks share: (1) The set of policies is pyramidal: there are an enormous number of bad policies, many mediocre policies, a few good policies, etc. (2) Due to the size of the state space and cost of data collection, most policies have high value uncertainty.

Given that these assumptions hold, na\"ive algorithms will perform as poorly on most real-world environments as they do on pathological examples. Consider: there are many more policies than there is data, so there will be many policies with high value uncertainty; na\"ive algorithms will likely overestimate several of these policies, and erroneously select one; since good policies are rare, the selected policy will likely be bad. It follows that running na\"ive algorithms on real-world problems will typically yield suboptimality close to our worst-case bound. And, indeed, on deep RL benchmarks, which are selected due to their similarity to real-world settings, overestimation has been widely observed, typically correlated with poor performance \citep{bellemare16increasing,ddqn,bcq}.

\section{The Pessimism Principle}
\label{sec:pess}

\textit{``Behave as though the world was plausibly worse than you observed it to be.''} The pessimism principle tells us how to exploit our current knowledge to find the stationary policy with the best worst-case guarantee on expected return. Being pessimistic means reducing overestimation errors, but incurring additional underestimation errors; thus, pessimistic approaches optimize a proxy objective which exploits the asymmetry described in Section \ref{sec:genericbound}.

We consider two specific families of pessimistic algorithms, the \textit{uncertainty-aware pessimistic algorithms} and \textit{proximal pessimistic algorithms}, and bound the worst-case suboptimality of each. These algorithms each include a hyperparameter, $\alpha$, controlling the amount of pessimism, interpolating from fully-na\"ive to fully-pessimistic. (For a discussion of the implications of the latter extreme, see Appendix \ref{sec:vallowbdiscuss}.) Then, we compare the two families, and see how the proximal family is simply a trivial special case of the more general uncertainty-aware family of methods.

\subsection{Uncertainty-Aware Pessimistic Algorithms}

Our first family of pessimistic algorithms is the \textit{uncertainty-aware (UA) pessimistic algorithms}. As the name suggests, this family of algorithms estimates the state-wise Bellman uncertainty and penalizes policies accordingly, leading to a pessimistic value estimate and a preference for policies with low value uncertainty.

\begin{definition}
\label{def:directfdpe}
An \emph{uncertainty-aware pessimistic algorithm}, with a Bellman uncertainty function $\unc$ and pessimism hyperparameter $\alpha \in [0,1]$, is any algorithm in the family defined by the fixed-point function $$f_{\text{ua}}(\lowvalpi) = \actpi(\erewd + \gamma \trnsd \lowvalpi) - \alpha \unc$$
\end{definition}

This fixed-point function is simply the na\"ive fixed-point function penalized by the Bellman uncertainty. This can be interpreted as being pessimistic about the outcome of every action. Note that it remains to specify a technique to compute the Bellman uncertainty function, e.g. Appendix \ref{sec:saunc}, in order to get a concrete algorithm. It is straightforward to construct algorithms from this family by modifying na\"ive algorithms to subtract the penalty term. Similar algorithms have been explored in the safe RL literature \citep{petrik16,spibb} and the robust MDP literature \citep{givan00}, where algorithms with high-probability performance guarantees are useful in the context of ensuring safety.

\begin{theorem}[Uncertainty-aware pessimistic FDPO suboptimality bound]
Consider an uncertainty-aware pessimistic value-based fixed-dataset policy optimization algorithm $\underline{\mathcal{O}}^{\text{VB}}_{\text{ua}}$. Let $\unc$ be any Bellman uncertainty function, $\vuncpid$ be a corresponding value uncertainty function, and $\alpha \in [0,1]$ be any pessimism hyperparameter. The suboptimality of $\underline{\mathcal{O}}^{\text{VB}}_{\text{ua}}$ is bounded with probability at least $1 - \delta$ by $$\subopt(\underline{\mathcal{O}}^{\text{VB}}_{\text{ua}}(D)) \le \inf_\pi \left( \E_{\rho}[\val_\mdp^{\optp_\mdp} - \val_\mdp^{\pi}] + (1+\alpha)\cdot\E_{\rho}[\vuncpid] \right) + (1-\alpha)\cdot \left(\sup_\pi ~ \E_{\rho}[\vuncpid]\right)$$
\label{thm:directpesssub}
\end{theorem}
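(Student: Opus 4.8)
The plan is to specialize the generic value-based bound of Corollary~\ref{cor:oposub} to the uncertainty-aware proxy and control its two error terms separately, exploiting the fact that pessimism shifts the proxy downward by exactly the accumulated uncertainty. Throughout, I write $\lowvalpid$ for the uncertainty-aware fixed point, which is precisely the FDPE output $\mathcal{E}(D,\pi)$ appearing in the corollary, and let $\mathbf{w}^\pi_D$ denote the na\"ive fixed point, i.e. the unique solution of $\mathbf{w}^\pi_D = \actpi(\erewd + \gamma\trnsd\mathbf{w}^\pi_D)$. The proof then mirrors that of Theorem~\ref{thm:naivesubopt} (Corollary~\ref{cor:oposub} together with Lemma~\ref{lemma:direct}), with the single new ingredient being the relationship between $\lowvalpid$ and $\mathbf{w}^\pi_D$.

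First I would establish the key pointwise identity $\lowvalpid = \mathbf{w}^\pi_D - \alpha\,\vuncpid$. Both $\mathbf{w}^\pi_D$ and $\lowvalpid$ are fixed points of affine $\gamma$-contractions that differ only by the constant shift $\alpha\unc$, so subtracting the two fixed-point equations gives $(\Id - \gamma\actpi\trnsd)(\mathbf{w}^\pi_D - \lowvalpid) = \alpha\unc$, hence $\mathbf{w}^\pi_D - \lowvalpid = \alpha(\Id - \gamma\actpi\trnsd)^{-1}\unc$. By the defining correspondence between a Bellman uncertainty function and its value uncertainty function --- namely that the latter is the accumulated former, $\vuncpid = (\Id - \gamma\actpi\trnsd)^{-1}\unc$ --- this equals $\alpha\,\vuncpid$. (The inverse is well defined and entrywise nonnegative since $\gamma\actpi\trnsd$ has spectral radius at most $\gamma<1$, giving the convergent Neumann series $\sum_{t\ge 0}(\gamma\actpi\trnsd)^t$.)

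Next I would bound the corollary's two terms on the high-probability event, of probability at least $1-\delta$, on which the value uncertainty function is a valid uniform upper bound on the na\"ive error (this is exactly what Lemma~\ref{lemma:direct} supplies): $|\val_\mdp^\pi - \mathbf{w}^\pi_D| \le \vuncpid$ componentwise, for every $\pi$ simultaneously. For underestimation, $\val_\mdp^\pi - \lowvalpid = (\val_\mdp^\pi - \mathbf{w}^\pi_D) + \alpha\,\vuncpid \le (1+\alpha)\vuncpid$; for overestimation, $\lowvalpid - \val_\mdp^\pi = (\mathbf{w}^\pi_D - \val_\mdp^\pi) - \alpha\,\vuncpid \le (1-\alpha)\vuncpid$. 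Taking $\E_\rho[\cdot]$ and substituting into Corollary~\ref{cor:oposub} gives $\subopt(\underline{\mathcal{O}}^{\text{VB}}_{\text{ua}}(D)) \le \inf_\pi(\E_\rho[\val_\mdp^{\optp_\mdp} - \val_\mdp^\pi] + (1+\alpha)\E_\rho[\vuncpid]) + \sup_\pi (1-\alpha)\E_\rho[\vuncpid]$; since $\alpha\in[0,1]$ the factor $1-\alpha\ge 0$ may be pulled outside the supremum, yielding the claimed bound.

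I expect the main obstacle to be justifying the \emph{exact} identity $\mathbf{w}^\pi_D - \lowvalpid = \alpha\,\vuncpid$ rather than a mere inequality: the argument genuinely needs the correspondence $\vuncpid = (\Id-\gamma\actpi\trnsd)^{-1}\unc$ to be an equality, because the overestimation bound requires the accumulated penalty to be at least $\alpha\,\vuncpid$ while the underestimation bound requires it to be at most $\alpha\,\vuncpid$. A secondary technical point is ensuring the high-probability guarantee holds uniformly over all $\pi$ --- this must already be folded into the definition of $\vuncpid$ (the $\delta$ subscript), so that a single event of probability $1-\delta$ controls every term appearing in both the infimum and the supremum, rather than requiring a fresh union bound here.
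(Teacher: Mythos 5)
Your proposal is correct and follows essentially the same route as the paper's proof in Appendix \ref{sec:directsubopt}: both establish the exact identity $\lowvalpid = \mathcal{E}_{\text{na\"ive}}(D,\pi) - \alpha\vuncpid$ (the paper by solving each fixed point via the geometric-series rearrangement, you by subtracting the two fixed-point equations — trivially equivalent), then bound underestimation by $(1+\alpha)\vuncpid$ and overestimation by $(1-\alpha)\vuncpid$ using Lemma \ref{lemma:direct}, and substitute into Corollary \ref{cor:oposub}. Your closing observations — that the exact correspondence $\vuncpid = (\Id - \gamma\actpi\trnsd)^{-1}\unc$ is what the word ``corresponding'' in the theorem statement supplies, and that the uniformity over $\pi$ is baked into the definition of the uncertainty function — are both accurate readings of the paper's definitions.
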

\vspace{-18pt}
\begin{proof} 
See Appendix \ref{sec:directsubopt}.
\end{proof}
This bound should be contrasted with our result from Theorem \ref{thm:naivesubopt}. With $\alpha=0$, the family of pessimistic algorithms reduces to the family of na\"ive algorithms, so the bound is correspondingly identical. We can add pessimism by increasing $\alpha$, and this corresponds to a decrease in the magnitude of the supremum term. When $\alpha=1$, there is no supremum term at all. In general, the optimal value of $\alpha$ lies between the two extremes.

To further understand the power of this approach, it is illustrative to compare it to imitation learning. Consider the case where the dataset contains a small number of expert trajectories but also a large number of interactions from a random policy, i.e. when learning from suboptimal demonstrations \citep{brown19extrapolating}. If the dataset contained \textit{only} a small amount of expert data, then both an UA pessimistic FDPO algorithm and an imitation learning algorithm would return a high-value policy. However, the injection of sufficiently many random interactions would degrade the performance of imitation learning algorithms, whereas UA pessimistic algorithms would continue to behave similarly to the expert data.

\subsection{Proximal Pessimistic Algorithms}

The next family of algorithms we study are the \textit{proximal pessimistic algorithms}, which implement pessimism by penalizing policies that deviate from the empirical policy. The name \textit{proximal} was chosen to reflect the idea that these algorithms prefer policies which stay ``nearby'' to the empirical policy. Many FDPO algorithms in the literature, and in particular several recently-proposed deep learning algorithms \citep{bcq,bear,spibb,klc,brac,liu2020provably}, resemble members of the family of proximal pessimistic algorithms; see Appendix \ref{sec:related}. Also, another variant of the proximal pessimistic family, which uses state density instead of state-conditional action density, can be found in Appendix \ref{sec:swprox}.

\begin{definition}
A \emph{proximal pessimistic algorithm} with pessimism hyperparameter $\alpha \in [0,1]$ is any algorithm in the family defined by the fixed-point function
\begin{equation*}
    f_{\text{proximal}}(\valpi) = \actpi(\erewd + \gamma \trnsd \valpi) - \alpha\left(\frac{\tv_\sspace(\pi,\emp)}{(1-\gamma)^2}\right)
\end{equation*}
\end{definition}

\begin{theorem}[Proximal pessimistic FDPO suboptimality bound]
Consider any proximal pessimistic value-based fixed-dataset policy optimization algorithm $\underline{\mathcal{O}}^{\text{VB}}_{\text{proximal}}$. Let $\vunc$ be any state-action-wise decomposable value uncertainty function, and $\alpha \in [0,1]$ be a pessimism hyperparameter. For any dataset $D$, the suboptimality of $\underline{\mathcal{O}}^{\text{VB}}_{\text{proximal}}$ is bounded with probability at least $1 - \delta$ by \begin{align*}
\subopt(\mathcal{O}_{\text{proximal}}(D)) \le \inf_\pi \left( \E_{\rho}[\val_\mdp^{\optp_\mdp} - \val_\mdp^{\pi}] + \E_{\rho}\left[ \vuncpid + \alpha(\Id -  \gamma \actpi \trnsd)^{-1}\left(\frac{\tv_\sspace(\pi,\emp)}{(1-\gamma)^2}\right) \right] \right) \\ +
\sup_\pi~\left(\E_{\rho}\left[\vuncpid - \alpha(\Id -  \gamma \actpi \trnsd)^{-1}\left(\frac{\tv_\sspace(\pi,\emp)}{(1-\gamma)^2}\right) \right]\right)
\end{align*}
\label{thm:proximalpesssub}
\end{theorem}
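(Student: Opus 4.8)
The plan is to specialize the generic Value-based FDPO suboptimality bound of Corollary~\ref{cor:oposub} to the proximal evaluator and then control its two error terms by relating the proximal value function to the na\"ive one. Write $\valpid := \mathcal{E}(D,\pi)$ for the proximal FDPE value, i.e. the fixed point of $f_{\text{proximal}}$, and let $P^\pi_D := \alpha(\Id - \gamma\actpi\trnsd)^{-1}\!\left(\frac{\tv_\sspace(\pi,\emp)}{(1-\gamma)^2}\right)$ denote the propagated proximal penalty. Corollary~\ref{cor:oposub}, specialized to this evaluator, reads
$$\subopt(\underline{\mathcal{O}}^{\text{VB}}_{\text{proximal}}(D)) \le \inf_\pi \left( \E_{\rho}[\val_\mdp^{\optp_\mdp} - \val_\mdp^{\pi}] + \E_{\rho}[\val_\mdp^{\pi} - \valpid] \right) + \sup_\pi \E_{\rho}[\valpid - \valpi_\mdp],$$
so it remains only to rewrite the underestimation error $\E_{\rho}[\val_\mdp^{\pi} - \valpid]$ and the overestimation error $\E_{\rho}[\valpid - \valpi_\mdp]$ in the claimed form.

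First I would carry out a fixed-point calculation. The map $f_{\text{proximal}}$ is affine in $\valpi$ with linear part $\gamma\actpi\trnsd$; since $\gamma < 1$ and $\actpi\trnsd$ is row-stochastic, $\Id - \gamma\actpi\trnsd$ is invertible with nonnegative inverse $\sum_{k\ge 0}(\gamma\actpi\trnsd)^k$. Solving $\valpid = \actpi\erewd + \gamma\actpi\trnsd\valpid - \alpha\frac{\tv_\sspace(\pi,\emp)}{(1-\gamma)^2}$ then gives, by linearity of the inverse, $\valpid = g^\pi_D - P^\pi_D$, where $g^\pi_D := (\Id - \gamma\actpi\trnsd)^{-1}\actpi\erewd$ is the na\"ive value (the fixed point obtained at $\alpha=0$). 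In words: the proximal value is exactly the na\"ive value shifted down by the penalty $P^\pi_D$, which is nonnegative because both the inverse operator and $\tv_\sspace(\pi,\emp)$ are nonnegative. This confirms that proximal evaluation is genuinely pessimistic relative to na\"ive.

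Next I would substitute this identity into the two error terms: $\val_\mdp^\pi - \valpid = (\val_\mdp^\pi - g^\pi_D) + P^\pi_D$ for underestimation, and $\valpid - \val_\mdp^\pi = (g^\pi_D - \val_\mdp^\pi) - P^\pi_D$ for overestimation. Invoking the defining property of the value uncertainty function $\vunc$ (the same step behind Theorem~\ref{thm:naivesubopt}, via Lemma~\ref{lemma:direct}), with probability at least $1-\delta$ we have $|\val_\mdp^\pi - g^\pi_D| \le \vuncpid$ simultaneously for every $\pi$. Bounding each na\"ive error by $\vuncpid$ and taking expectations under $\rho$ yields $\E_{\rho}[\val_\mdp^\pi - \valpid] \le \E_{\rho}[\vuncpid + P^\pi_D]$ and $\E_{\rho}[\valpid - \val_\mdp^\pi] \le \E_{\rho}[\vuncpid - P^\pi_D]$; plugging these into the infimum and supremum above and expanding $P^\pi_D$ produces precisely the stated bound.

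The main obstacle is the uniform-over-policies high-probability guarantee: because the bound contains both an infimum and a supremum over all $\pi$, the inequality $|\val_\mdp^\pi - g^\pi_D| \le \vuncpid$ must hold for every policy on a single event of probability at least $1-\delta$, not merely for one fixed $\pi$. This is exactly where the hypothesis that $\vunc$ is state-action-wise decomposable enters: decomposing the value uncertainty into local state-action uncertainties lets the concentration argument be performed at the state-action level, where it is policy-independent, and then re-aggregated for an arbitrary $\pi$. By contrast, the fixed-point inversion and the additive/subtractive bookkeeping of $P^\pi_D$ are routine linear-algebraic manipulations.
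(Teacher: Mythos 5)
Your proof is correct, and it reaches the stated bound by a more direct route than the paper. Both arguments rest on the same two pillars: the geometric-series rearrangement showing that the proximal fixed point equals the na\"ive value minus the propagated penalty $P^\pi_D = \alpha(\Id - \gamma\actpi\trnsd)^{-1}\bigl(\tv_\sspace(\pi,\emp)/(1-\gamma)^2\bigr)$, and the combination of Lemma~\ref{lemma:direct} with Corollary~\ref{cor:oposub}. The difference is that you bound the over- and under-estimation errors of the proximal evaluator itself, whereas the paper takes a detour: it defines an auxiliary ``proximal-full'' evaluator whose penalty additionally includes the $\pi$-independent term $\alpha\vuncempd$ (the value uncertainty of the empirical policy, motivated by the relative-error bound of Lemma~\ref{lemma:relative}), proves the suboptimality bound for that evaluator (the $\alpha\vuncempd$ contributions cancel between the infimum and the supremum), and then argues that the proximal and proximal-full objectives differ by a constant in $\pi$, hence induce the same argmax policy and the same suboptimality. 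Your route saves that bookkeeping entirely; what the paper's route buys is the explicit identification of the proximal evaluator with an uncertainty-aware evaluator using a trivial uncertainty function, which is the conceptual point of the section relating the two families. One small correction to your closing paragraph: the uniformity over policies you worry about is not supplied by state-action-wise decomposability; it is built into Definition~\ref{def:vunc}, which requires the uncertainty inequality to hold simultaneously for all $\pi$ and all $\val$ on a single $1-\delta$ event. In fact, neither your proof nor the paper's uses decomposability to establish the stated inequality; that hypothesis matters for the surrounding claims (e.g., via Lemma~\ref{lemma:relative}, that the supremum term becomes non-positive at $\alpha = 1$), not for the bound itself.
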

\vspace{-18pt}
\begin{proof}
See Appendix \ref{sec:proximalsubopt}.
\end{proof}

Once again, we see that as $\alpha$ grows, the large supremum term shrinks; similarly, by Lemma \ref{lemma:relative}, when we have $\alpha=1$, the supremum term is guaranteed to be non-positive.\footnote{Initially, it will contain $\vuncpipd$, but this can be removed since it is not dependent on $\pi$.} The primary limitation of the proximal approach is the looseness of the value lower-bound.
Intuitively, this algorithm can be understood as performing imitation learning, but permitting minor deviations.
Constraining the policy to be near in distribution to the empirical policy can fail to take advantage of highly-visited states which are reached via many trajectories. In fact, in contrast to both the na\"ive approach and the UA pessimistic approach, in the limit of infinite data this approach is not guaranteed to converge to the optimal policy. 
Also, note that when $\alpha \geq 1-\gamma$, this algorithm is identical to imitation learning.

\subsection{The Relationship Between Uncertainty-Aware and Proximal Algorithms}

Though these two families may appear on the surface to be quite different, they are in fact closely related. A key insight of our theoretical work is that it reveals the important connection between these two approaches. Concretely: proximal algorithms are uncertainty-aware algorithms which use a trivial value uncertainty function.

To see this, we show how to convert an uncertainty-aware penalty into a proximal penalty. let $\vunc$ be any state-action-wise decomposable value uncertainty function. For any dataset $D$, we have
\begin{align*}
\vuncpid &= \vuncempd + (\Id -  \gamma \actpi \trnsd)^{-1} \left((\actpi -  \act^{\emp}) (\uncsa + \gamma \trnsd \vuncempd) \right) & \textrm{(Lemma \ref{lemma:relativevalunc})}\\
&\leq \vuncempd + (\Id -  \gamma \actpi \trnsd)^{-1} \left( \tv_\sspace(\pi,\pi') \left(\frac{1}{(1-\gamma)^2}\right) \right). & \textrm{(Lemma \ref{lemma:relative})}
\end{align*}

We began with the uncertainty penalty. In the first step, we rewrote the uncertainty for $\pi$ into the sum of two terms: the uncertainty for $\emp$, and the difference in uncertainty between $\pi$ and $\emp$ on various actions. In the second step, we chose our state-action-wise Bellman uncertainty to be $\frac{1}{1-\gamma}$, which is a trivial upper bound; we also upper-bound the signed policy difference with the total variation. This results in the proximal penalty.\footnote{When constructing the penalty, we can ignore the first term, which does not contain $\pi$, and so is irrelevant to optimization.}

Thus, we see that proximal penalties are equivalent to uncertainty-aware penalties which use a specific, trivial uncertainty function. This result suggests that uncertainty-aware algorithms are strictly better than their proximal counterparts. There is no looseness in this result: for any proximal penalty, we will always be able to find a tighter uncertainty-aware penalty by replacing the trivial uncertainty function with something tighter.

However, currently, proximal algorithms are quite useful in the context of deep learning. This is because the only uncertainty function that can currently be implemented for neural networks is the trivial uncertainty function. Until we discover how to compute uncertainties for neural networks, proximal pessimistic algorithms will remain the only theoretically-motivated family of algorithms.

\section{Experiments}

\begin{figure}%
\centering
\subfigure{\includegraphics[trim=1.5in .1in .6in .1in ,clip,width=\textwidth]{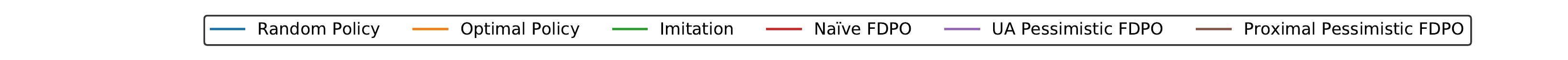}} \addtocounter{subfigure}{-1}
\subfigure[Performance of FDPO algorithms on a dataset of 2000 transitions, as the data collection policy is interpolated from random to optimal.]{%
\label{fig:explorationimpact}%
\includegraphics[trim=0 0 .5in .2in, clip,width=.45\textwidth]{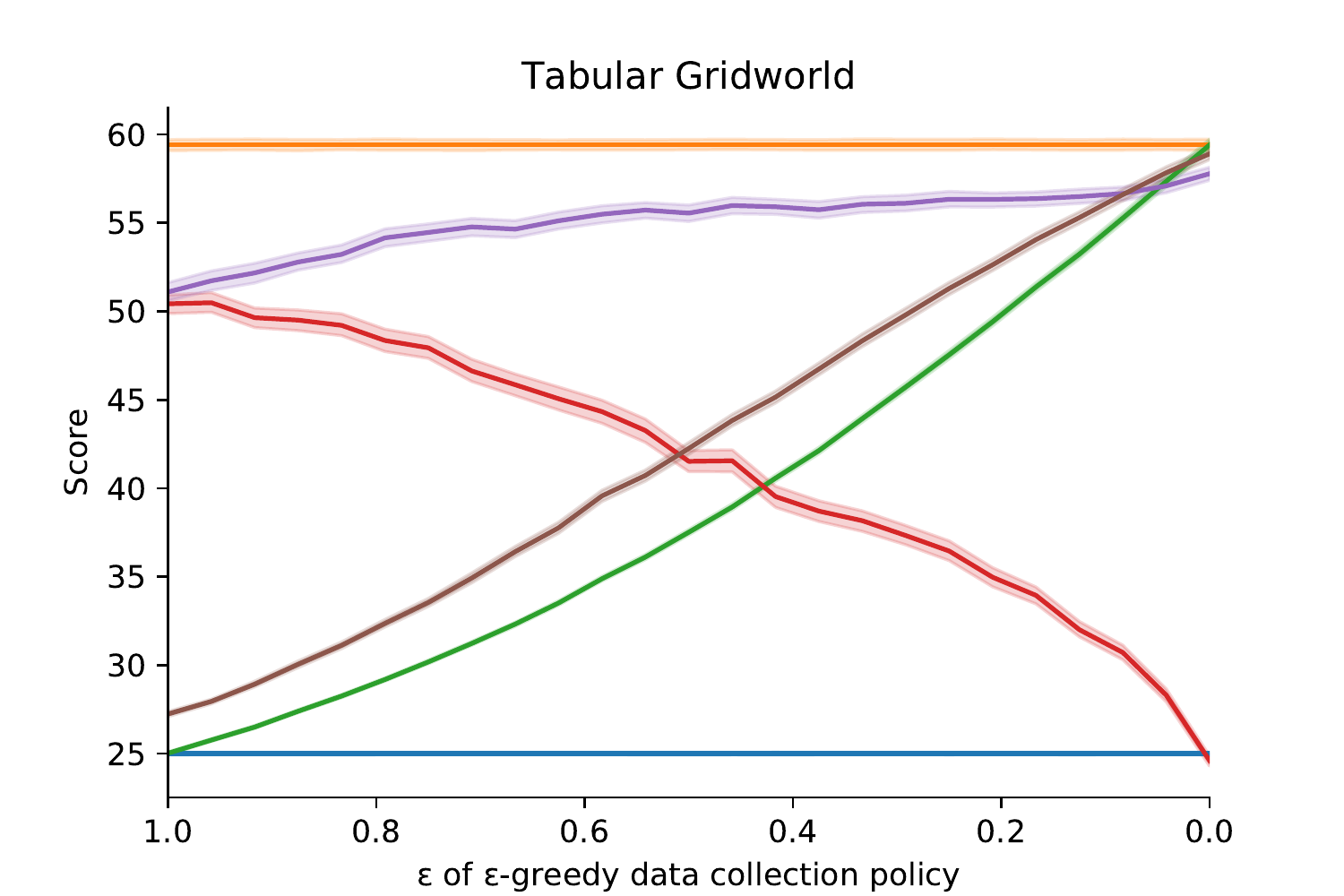}}%
\qquad
\subfigure[Performance of FDPO algorithms as dataset size increases. Data is collected with an optimal $\epsilon$-greedy policy, with $\epsilon=50\%$.]{%
\label{fig:dataimpact}%
\includegraphics[trim=0 0 .5in .2in, clip,width=.45\textwidth]{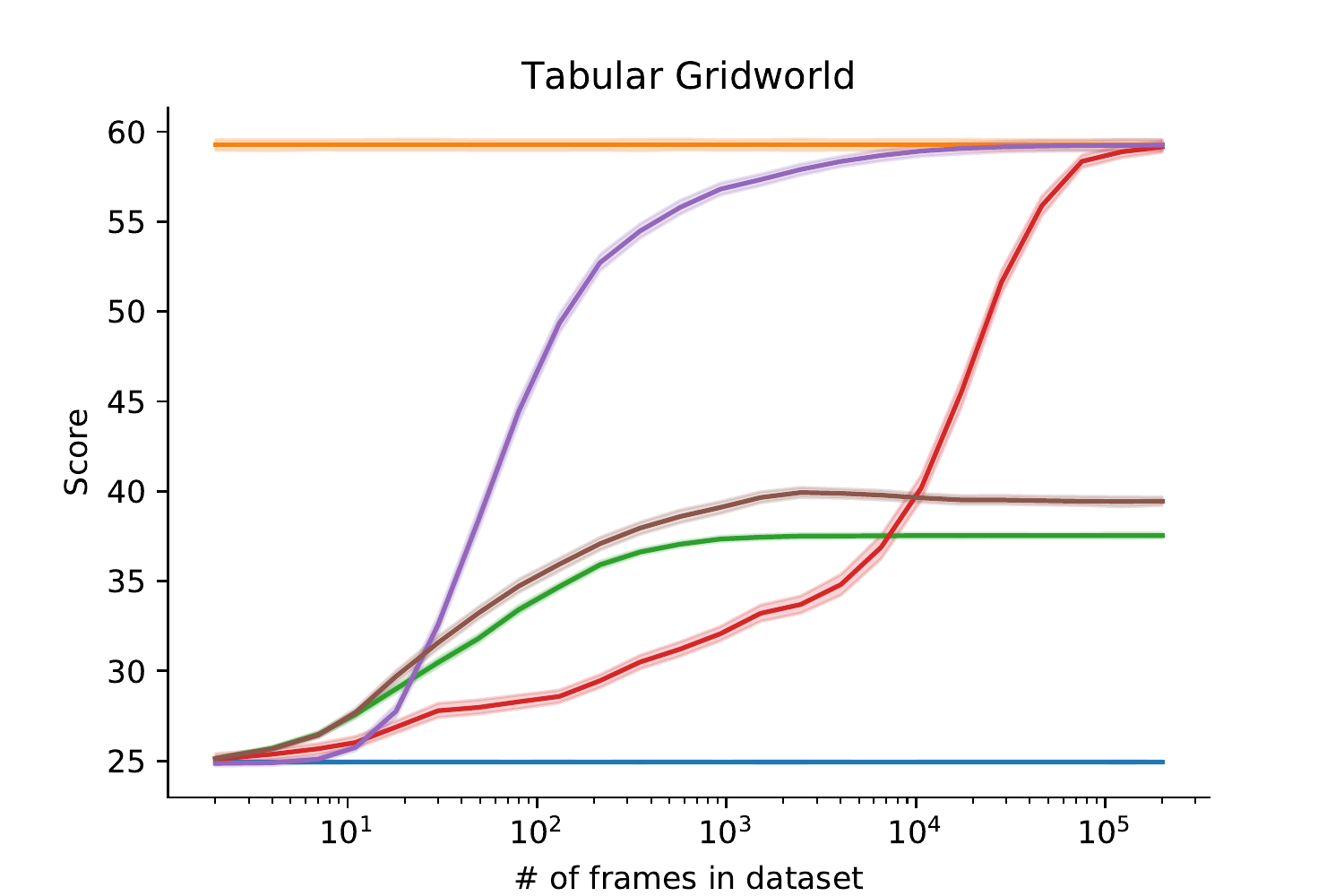}}%
\caption{Tabular gridworld experiments.}
\end{figure}

We implement algorithms from each family to empirically investigate whether their performance of follows the predictions of our bounds. Below, we summarize the key predictions of our theory.

\begin{itemize}
    \item \textbf{Imitation.} This algorithm simply learns to copy the empirical policy. It performs well if and only if the data collection policy performs well.
    \item \textbf{Na\"ive.} This algorithm performs well only when almost no policies have high value uncertainty. This means that when the data is collected from any mostly-deterministic policy, performance of this algorithm will be poor, since many states will be missing data. Stochastic data collection improves performance. As the size of the dataset grows, this algorithm approaches optimality.
    \item \textbf{Uncertainty-aware.} This algorithm performs well when there is data on states visited by near-optimal policies. This is the case when a small amount of data has been collected from a near-optimal policy, or a large amount of data has been collected from a worse policy. As the size of the dataset grows, this algorithm to approaches optimality. This approach outperforms all other approaches.
    \item \textbf{Proximal.} This algorithm roughly mirrors the performance of the imitation approach, but improves upon it. As the size of the dataset grows, this algorithm does not approach optimality, as the penalty persists even when the environment's dynamics are perfectly captured by the dataset.
\end{itemize}

Our experimental results qualitatively align with our predictions in both the tabular and deep learning settings, giving evidence that the picture painted by our theoretical analysis truly describes the FDPO setting. See Appendix \ref{sec:algos} for pseudocode of all algorithms; see Appendix \ref{sec:experimentsetup} for details on the experimental setup; see Appendix \ref{sec:practicalcons} for additional experimental considerations for deep learning experiments that will be of interest to practicioners. For an open-source implementation, including full details suitable for replication, please refer to the code in the accompanying GitHub repository: 
\url{github.com/jbuckman/tiopifdpo}.

\paragraph{Tabular.} The first tabular experiment, whose results are shown in Figure \ref{fig:explorationimpact}, compares the performance of the algorithms as the policy used to collect the dataset is interpolated from the uniform random policy to an optimal policy using $\epsilon$-greedy. The second experiment, whose results are shown in Figure \ref{fig:dataimpact}, compares the performance of the algorithms as we increase the size of the dataset from 1 sample to 200000 samples. In both experiments, we notice a qualitative difference between the trends of the various algorithms, which aligns with the predictions of our theory.

\paragraph{Neural network.} The results of these experiments can be seen in Figure \ref{fig:deepexplorationimpact}. Similarly to the tabular experiments, we see that the na\"ive approach performs well when data is fully exploratory, and poorly when data is collected via an optimal policy; the pure imitation approach performs better when the data collection policy is closer to optimal. The pessimistic approach achieves the best of both worlds: it correctly imitates a near-optimal policy, but also learns to improve upon it somewhat when the data is more exploratory. One notable failure case is in \textsc{Freeway}, where the performance of the pessimistic approach barely improves upon the imitation policy, despite the na\"ive approach performing near-optimally for intermediate values of $\epsilon$.

\section{Discussion and Conclusion}

\begin{figure}
    \centering
    \includegraphics[width=\textwidth]{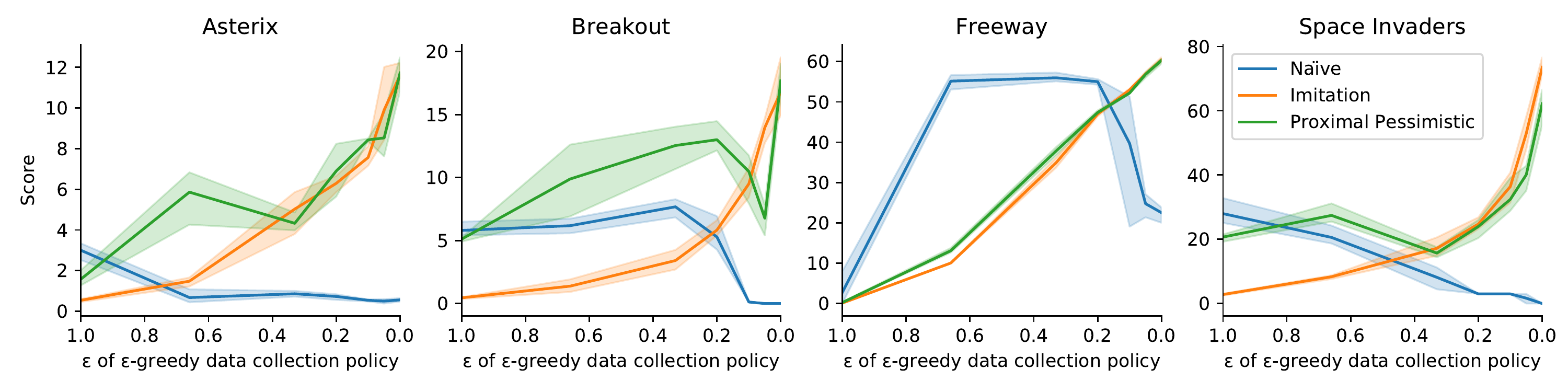}
    \caption{Performance of deep FDPO algorithms on a dataset of 500000 transitions, as the data collection policy is interpolated from near-optimal to random.}
    \label{fig:deepexplorationimpact}
\end{figure}

In this work, we provided a conceptual and mathematical framework for thinking about fixed-dataset policy optimization. Starting from intuitive building blocks of uncertainty and the over-under decomposition, we showed the core issue with na\"ive approaches, and introduced the pessimism principle as the defining characteristic of solutions. We described two families of pessimistic algorithms, uncertainty-aware and proximal. We see theoretically that both of these approaches have advantages over the na\"ive approach, and observed these advantages empirically. Comparing these two families of pessimistic algorithms, we see both theoretically and empirically that uncertainty-aware algorithms are strictly better than proximal algorithms, and that proximal algorithms may not yield the optimal policy, even with infinite data. 

\paragraph{Future directions.} Our results indicate that research in FDPO should not focus on proximal algorithms. The development of neural uncertainty estimation techniques will enable principled uncertainty-aware deep learning algorithms. As is evidenced by our tabular results, we expect these approaches to yield dramatic performance improvements, rendering the proximal family \citep{bear,bcq,spibb,cql} obsolete.

\paragraph{On ad-hoc solutions.} It is undoubtably disappointing to see that proximal algorithms, which are far easier to implement, are fundamentally limited. It is tempting to propose various ad-hoc solutions to mitigate the flaws of proximal pessimistic algorithms in practice. For example, one might consider tuning $\alpha$; however, doing the tuning requires evaluating each policy in the environment, which involves gaining information by interacting with the environment, which is not permitted by the problem setting. Or, one might consider e.g. an adaptive pessimism hyperparameter which decays with the size of the dataset; however, for such a penalty to be principled, it must be based on an uncertainty function, at which point we may as well just use an uncertainty-aware algorithm.

\paragraph{Stochastic policies.} One surprising property of pessimistic algorithms is that the optimal policy is often stochastic. For the penalty of proximal pessimistic algorithms, it is easy to see that this may be the case for any non-deterministic empirical policy; for UA pessimsitic algorithms, it is dependent on the choice of Bellman uncertainty function, but in general may hold (see Appendix \ref{sec:sunc} for the derivation of a Bellman uncertainty function with this property). This observation lends mathematical rigor to the intuition that agents should `hedge their bets' in the face of epistemic uncertainty. This property also means that the simple approach of selecting the argmax action is no longer adequate for policy improvement. In Appendix \ref{sec:proxalgo} we discuss a policy improvement procedure that takes into account the proximal penalty to find the stochastic optimal policy.

\paragraph{Implications for RL.} Finally, due to the close connection between the FDPO and RL settings, this work has implications for deep reinforcement learning. Many popular deep RL algorithms utilize a replay buffer to break the correlation between samples in each minibatch \citep{dqn}. However, since these algorithms typically alternate between collecting data and training the network, the replay buffer can also be viewed as a `temporarily fixed' dataset during the training phase. These algorithms are often very sensitive to hyperparemters; in particular, they perform poorly when the number of learning steps per interaction is large \citep{fedus2020revisiting}. This effect can be explained by our analysis: additional steps of learning cause the policy to approach its na\"ive FDPO fixed-point, which has poor worst-case suboptimality. A pessimistic algorithm with a better fixed-point could therefore allow us to train more per interaction, improving sample efficiency. A potential direction of future work is therefore to incorporate pessimism into deep RL.

\paragraph{Acknowledgements.} The authors thank Alex Slivkins, Romain Laroche, Emmanuel Bengio, Simon Ramstedt, George Tucker, Aviral Kumar, Dave Meger, Ahmed Touati, and Bo Dai for providing valuable feedback, as well as Doina Precup, Jason Eisner, Philip Thomas, Saurabh Kumar, and Marek Petrik for helpful discussions.
The authors are grateful for financial support from the CIFAR Canada AI Chair program. 

\bibliography{mybib}

\begin{thebibliography}{53}
\providecommand{\natexlab}[1]{#1}
\providecommand{\url}[1]{\texttt{#1}}
\expandafter\ifx\csname urlstyle\endcsname\relax
  \providecommand{\doi}[1]{doi: #1}\else
  \providecommand{\doi}{doi: \begingroup \urlstyle{rm}\Url}\fi

\bibitem[Agarwal et~al.(2019)Agarwal, Schuurmans, and
  Norouzi]{agarwal2019striving}
Rishabh Agarwal, Dale Schuurmans, and Mohammad Norouzi.
\newblock Striving for simplicity in off-policy deep reinforcement learning.
\newblock \emph{arXiv preprint arXiv:1907.04543}, 2019.

\bibitem[Antos et~al.(2007)Antos, Szepesv{\'a}ri, and Munos]{antos2007value}
Andr{\'a}s Antos, Csaba Szepesv{\'a}ri, and R{\'e}mi Munos.
\newblock Value-iteration based fitted policy iteration: learning with a single
  trajectory.
\newblock In \emph{2007 IEEE international symposium on approximate dynamic
  programming and reinforcement learning}, pp.\  330--337. IEEE, 2007.

\bibitem[Bellemare et~al.(2016)Bellemare, Ostrovski, Guez, Thomas, and
  Munos]{bellemare16increasing}
Marc~G Bellemare, Georg Ostrovski, Arthur Guez, Philip~S Thomas, and R{\'e}mi
  Munos.
\newblock Increasing the action gap: New operators for reinforcement learning.
\newblock In \emph{Thirtieth AAAI Conference on Artificial Intelligence}, 2016.

\bibitem[Brown et~al.(2019)Brown, Goo, Nagarajan, and
  Niekum]{brown19extrapolating}
Daniel~S. Brown, Wonjoon Goo, Prabhat Nagarajan, and Scott Niekum.
\newblock Extrapolating beyond suboptimal demonstrations via inverse
  reinforcement learning from observations.
\newblock In \emph{Proceedings of the International Conference on Machine
  Learning}, 2019.

\bibitem[Cohen \& Hutter(2020)Cohen and Hutter]{cohen2020pessimism}
Michael~K Cohen and Marcus Hutter.
\newblock Pessimism about unknown unknowns inspires conservatism.
\newblock In \emph{Conference on Learning Theory}, pp.\  1344--1373. PMLR,
  2020.

\bibitem[Ernst et~al.(2005)Ernst, Geurts, and Wehenkel]{ernst05treebased}
Damien Ernst, Pierre Geurts, and Louis Wehenkel.
\newblock Tree-based batch mode reinforcement learning.
\newblock \emph{Journal of Machine Learning Research}, 6:\penalty0 503--556,
  2005.

\bibitem[Fedus et~al.(2020)Fedus, Ramachandran, Agarwal, Bengio, Larochelle,
  Rowland, and Dabney]{fedus2020revisiting}
William Fedus, Prajit Ramachandran, Rishabh Agarwal, Yoshua Bengio, Hugo
  Larochelle, Mark Rowland, and Will Dabney.
\newblock Revisiting fundamentals of experience replay.
\newblock \emph{arXiv preprint arXiv:2007.06700}, 2020.

\bibitem[Fujimoto et~al.(2019)Fujimoto, Meger, and Precup]{bcq}
Scott Fujimoto, David Meger, and Doina Precup.
\newblock Off-policy deep reinforcement learning without exploration.
\newblock In \emph{International Conference on Machine Learning}, pp.\
  2052--2062, 2019.

\bibitem[Ghavamzadeh et~al.(2016)Ghavamzadeh, Petrik, and Chow]{petrik16}
Mohammad Ghavamzadeh, Marek Petrik, and Yinlam Chow.
\newblock Safe policy improvement by minimizing robust baseline regret.
\newblock In \emph{Advances in Neural Information Processing Systems}, pp.\
  2298--2306, 2016.

\bibitem[Givan et~al.(1997)Givan, Leach, and Dean]{givan00}
Robert Givan, Sonia Leach, and Thomas Dean.
\newblock Bounded parameter markov decision processes.
\newblock In \emph{European Conference on Planning}, pp.\  234--246. Springer,
  1997.

\bibitem[Goyal \& Grand-Clement(2018)Goyal and Grand-Clement]{goyal2018robust}
Vineet Goyal and Julien Grand-Clement.
\newblock Robust markov decision process: Beyond rectangularity.
\newblock \emph{arXiv preprint arXiv:1811.00215}, 2018.

\bibitem[Hu et~al.(2020)Hu, Xiao, and Pennington]{hu2020provable}
Wei Hu, Lechao Xiao, and Jeffrey Pennington.
\newblock Provable benefit of orthogonal initialization in optimizing deep
  linear networks.
\newblock \emph{arXiv preprint arXiv:2001.05992}, 2020.

\bibitem[Hussein et~al.(2017)Hussein, Gaber, Elyan, and Jayne]{imitate}
Ahmed Hussein, Mohamed~Medhat Gaber, Eyad Elyan, and Chrisina Jayne.
\newblock Imitation learning: A survey of learning methods.
\newblock \emph{ACM Computing Surveys (CSUR)}, 50\penalty0 (2):\penalty0 1--35,
  2017.

\bibitem[Iyengar(2005)]{iyengar2005robust}
Garud~N Iyengar.
\newblock Robust dynamic programming.
\newblock \emph{Mathematics of Operations Research}, 30\penalty0 (2):\penalty0
  257--280, 2005.

\bibitem[Jaques et~al.(2019)Jaques, Ghandeharioun, Shen, Ferguson, Lapedriza,
  Jones, Gu, and Picard]{klc}
Natasha Jaques, Asma Ghandeharioun, Judy~Hanwen Shen, Craig Ferguson, Agata
  Lapedriza, Noah Jones, Shixiang Gu, and Rosalind Picard.
\newblock Way off-policy batch deep reinforcement learning of implicit human
  preferences in dialog.
\newblock \emph{arXiv preprint arXiv:1907.00456}, 2019.

\bibitem[Jiang(2019{\natexlab{a}})]{nannote3}
Nan Jiang.
\newblock Note on certainty equivalence, 2019{\natexlab{a}}.

\bibitem[Jiang(2019{\natexlab{b}})]{nannote5}
Nan Jiang.
\newblock Note on fqi, 2019{\natexlab{b}}.

\bibitem[Jiang \& Huang(2020)Jiang and Huang]{jiang2020minimax}
Nan Jiang and Jiawei Huang.
\newblock Minimax confidence interval for off-policy evaluation and policy
  optimization.
\newblock \emph{arXiv preprint arXiv:2002.02081}, 2020.

\bibitem[Kakade \& Langford(2002)Kakade and Langford]{cpi}
Sham Kakade and John Langford.
\newblock Approximately optimal approximate reinforcement learning.
\newblock In \emph{ICML}, volume~2, pp.\  267--274, 2002.

\bibitem[Kidambi et~al.(2020)Kidambi, Rajeswaran, Netrapalli, and
  Joachims]{morel}
Rahul Kidambi, Aravind Rajeswaran, Praneeth Netrapalli, and Thorsten Joachims.
\newblock Morel: Model-based offline reinforcement learning.
\newblock \emph{arXiv preprint arXiv:2005.05951}, 2020.

\bibitem[Kumar et~al.(2019)Kumar, Fu, Soh, Tucker, and Levine]{bear}
Aviral Kumar, Justin Fu, Matthew Soh, George Tucker, and Sergey Levine.
\newblock Stabilizing off-policy q-learning via bootstrapping error reduction.
\newblock In \emph{Advances in Neural Information Processing Systems}, pp.\
  11784--11794, 2019.

\bibitem[Kumar et~al.(2020)Kumar, Zhou, Tucker, and Levine]{cql}
Aviral Kumar, Aurick Zhou, George Tucker, and Sergey Levine.
\newblock Conservative q-learning for offline reinforcement learning.
\newblock \emph{arXiv preprint arXiv:2006.04779}, 2020.

\bibitem[Lange et~al.(2012)Lange, Gabel, and Riedmiller]{lange12batch}
Sascha Lange, Thomas Gabel, and Martin Riedmiller.
\newblock Batch reinforcement learning.
\newblock \emph{Reinforcement learning}, pp.\  45--73, 2012.

\bibitem[Laroche et~al.(2019)Laroche, Trichelair, and Des~Combes]{spibb}
Romain Laroche, Paul Trichelair, and Remi~Tachet Des~Combes.
\newblock Safe policy improvement with baseline bootstrapping.
\newblock In \emph{International Conference on Machine Learning}, pp.\
  3652--3661, 2019.

\bibitem[Lattimore \& Szepesv{\'a}ri(2020)Lattimore and Szepesv{\'a}ri]{bandit}
Tor Lattimore and Csaba Szepesv{\'a}ri.
\newblock \emph{Bandit algorithms}.
\newblock Cambridge University Press, 2020.

\bibitem[Levine et~al.(2020)Levine, Kumar, Tucker, and Fu]{survey}
Sergey Levine, Aviral Kumar, George Tucker, and Justin Fu.
\newblock Offline reinforcement learning: Tutorial, review, and perspectives on
  open problems.
\newblock \emph{arXiv preprint arXiv:2005.01643}, 2020.

\bibitem[Liu et~al.(2020)Liu, Swaminathan, Agarwal, and
  Brunskill]{liu2020provably}
Yao Liu, Adith Swaminathan, Alekh Agarwal, and Emma Brunskill.
\newblock Provably good batch reinforcement learning without great exploration.
\newblock \emph{arXiv preprint arXiv:2007.08202}, 2020.

\bibitem[Maurer \& Pontil(2009)Maurer and Pontil]{maurer2009empirical}
Andreas Maurer and Massimiliano Pontil.
\newblock Empirical bernstein bounds and sample variance penalization.
\newblock \emph{arXiv preprint arXiv:0907.3740}, 2009.

\bibitem[Mnih et~al.(2015)Mnih, Kavukcuoglu, Silver, Rusu, Veness, Bellemare,
  Graves, Riedmiller, Fidjeland, Ostrovski, et~al.]{dqn}
Volodymyr Mnih, Koray Kavukcuoglu, David Silver, Andrei~A Rusu, Joel Veness,
  Marc~G Bellemare, Alex Graves, Martin Riedmiller, Andreas~K Fidjeland, Georg
  Ostrovski, et~al.
\newblock Human-level control through deep reinforcement learning.
\newblock \emph{Nature}, 518\penalty0 (7540):\penalty0 529--533, 2015.

\bibitem[Munos(2007)]{munos2007per}
R{\'e}mi Munos.
\newblock Performance bounds in l\_p-norm for approximate value iteration.
\newblock \emph{SIAM journal on control and optimization}, 46\penalty0
  (2):\penalty0 541--561, 2007.

\bibitem[Nadjahi et~al.(2019)Nadjahi, Laroche, and des Combes]{nadjahi2019safe}
Kimia Nadjahi, Romain Laroche, and R{\'e}mi~Tachet des Combes.
\newblock Safe policy improvement with soft baseline bootstrapping.
\newblock In \emph{Joint European Conference on Machine Learning and Knowledge
  Discovery in Databases}, pp.\  53--68. Springer, 2019.

\bibitem[Nilim \& El~Ghaoui(2005)Nilim and El~Ghaoui]{nilim2005robust}
Arnab Nilim and Laurent El~Ghaoui.
\newblock Robust control of markov decision processes with uncertain transition
  matrices.
\newblock \emph{Operations Research}, 53\penalty0 (5):\penalty0 780--798, 2005.

\bibitem[Puterman(2014)]{puterman}
Martin~L Puterman.
\newblock \emph{Markov decision processes: discrete stochastic dynamic
  programming}.
\newblock John Wiley \&amp; Sons, 2014.

\bibitem[Riedmiller(2005)]{riedmiller2005neural}
Martin Riedmiller.
\newblock Neural fitted q iteration--first experiences with a data efficient
  neural reinforcement learning method.
\newblock In \emph{European Conference on Machine Learning}, pp.\  317--328.
  Springer, 2005.

\bibitem[Schulman et~al.(2015)Schulman, Levine, Abbeel, Jordan, and
  Moritz]{trpo}
John Schulman, Sergey Levine, Pieter Abbeel, Michael Jordan, and Philipp
  Moritz.
\newblock Trust region policy optimization.
\newblock In \emph{International conference on machine learning}, pp.\
  1889--1897, 2015.

\bibitem[Schulman et~al.(2017)Schulman, Wolski, Dhariwal, Radford, and
  Klimov]{ppo}
John Schulman, Filip Wolski, Prafulla Dhariwal, Alec Radford, and Oleg Klimov.
\newblock Proximal policy optimization algorithms.
\newblock \emph{arXiv preprint arXiv:1707.06347}, 2017.

\bibitem[Sim{\~a}o et~al.(2019)Sim{\~a}o, Laroche, and Combes]{simao2019safe}
Thiago~D Sim{\~a}o, Romain Laroche, and R{\'e}mi Tachet~des Combes.
\newblock Safe policy improvement with an estimated baseline policy.
\newblock \emph{arXiv preprint arXiv:1909.05236}, 2019.

\bibitem[Smith \& Winkler(2006)Smith and Winkler]{smith2006optimizer}
James~E Smith and Robert~L Winkler.
\newblock The optimizer’s curse: Skepticism and postdecision surprise in
  decision analysis.
\newblock \emph{Management Science}, 52\penalty0 (3):\penalty0 311--322, 2006.

\bibitem[Strehl \& Littman(2008)Strehl and Littman]{strehl2008analysis}
Alexander~L Strehl and Michael~L Littman.
\newblock An analysis of model-based interval estimation for markov decision
  processes.
\newblock \emph{Journal of Computer and System Sciences}, 74\penalty0
  (8):\penalty0 1309--1331, 2008.

\bibitem[Sutton \& Barto(2018)Sutton and Barto]{rl}
Richard~S Sutton and Andrew~G Barto.
\newblock \emph{Reinforcement learning: An introduction}.
\newblock MIT press, 2018.

\bibitem[Swaminathan \& Joachims(2015)Swaminathan and
  Joachims]{swaminathan2015batch}
Adith Swaminathan and Thorsten Joachims.
\newblock Batch learning from logged bandit feedback through counterfactual
  risk minimization.
\newblock \emph{The Journal of Machine Learning Research}, 16\penalty0
  (1):\penalty0 1731--1755, 2015.

\bibitem[Taleghan et~al.(2015)Taleghan, Dietterich, Crowley, Hall, and
  Albers]{taleghan}
Majid~Alkaee Taleghan, Thomas~G Dietterich, Mark Crowley, Kim Hall, and H~Jo
  Albers.
\newblock Pac optimal mdp planning with application to invasive species
  management.
\newblock \emph{The Journal of Machine Learning Research}, 16\penalty0
  (1):\penalty0 3877--3903, 2015.

\bibitem[Thomas et~al.(2015{\natexlab{a}})Thomas, Theocharous, and
  Ghavamzadeh]{thomas2015high2}
Philip Thomas, Georgios Theocharous, and Mohammad Ghavamzadeh.
\newblock High confidence policy improvement.
\newblock In \emph{International Conference on Machine Learning}, pp.\
  2380--2388, 2015{\natexlab{a}}.

\bibitem[Thomas et~al.(2015{\natexlab{b}})Thomas, Theocharous, and
  Ghavamzadeh]{thomas2015high}
Philip~S Thomas, Georgios Theocharous, and Mohammad Ghavamzadeh.
\newblock High-confidence off-policy evaluation.
\newblock In \emph{Twenty-Ninth AAAI Conference on Artificial Intelligence},
  2015{\natexlab{b}}.

\bibitem[Thomas et~al.(2019)Thomas, da~Silva, Barto, Giguere, Brun, and
  Brunskill]{thomas2019preventing}
Philip~S Thomas, Bruno~Castro da~Silva, Andrew~G Barto, Stephen Giguere, Yuriy
  Brun, and Emma Brunskill.
\newblock Preventing undesirable behavior of intelligent machines.
\newblock \emph{Science}, 366\penalty0 (6468):\penalty0 999--1004, 2019.

\bibitem[Van~Hasselt et~al.(2016)Van~Hasselt, Guez, and Silver]{ddqn}
Hado Van~Hasselt, Arthur Guez, and David Silver.
\newblock Deep reinforcement learning with double q-learning.
\newblock In \emph{Thirtieth AAAI conference on artificial intelligence}, 2016.

\bibitem[Walsh et~al.(2011)Walsh, Hewlett, and Morrison]{walsh2011blending}
Thomas~J Walsh, Daniel~K Hewlett, and Clayton~T Morrison.
\newblock Blending autonomous exploration and apprenticeship learning.
\newblock In \emph{Advances in Neural Information Processing Systems}, pp.\
  2258--2266, 2011.

\bibitem[Wang et~al.(2020)Wang, Novikov, {\.Z}o{\l}na, Springenberg, Reed,
  Shahriari, Siegel, Merel, Gulcehre, Heess, et~al.]{crr}
Ziyu Wang, Alexander Novikov, Konrad {\.Z}o{\l}na, Jost~Tobias Springenberg,
  Scott Reed, Bobak Shahriari, Noah Siegel, Josh Merel, Caglar Gulcehre,
  Nicolas Heess, et~al.
\newblock Critic regularized regression.
\newblock \emph{arXiv preprint arXiv:2006.15134}, 2020.

\bibitem[Weissman et~al.(2003)Weissman, Ordentlich, Seroussi, Verdu, and
  Weinberger]{weissman03}
Tsachy Weissman, Erik Ordentlich, Gadiel Seroussi, Sergio Verdu, and Marcelo~J
  Weinberger.
\newblock Inequalities for the l1 deviation of the empirical distribution.
\newblock \emph{Hewlett-Packard Labs, Tech. Rep}, 2003.

\bibitem[Wiesemann et~al.(2013)Wiesemann, Kuhn, and
  Rustem]{wiesemann2013robust}
Wolfram Wiesemann, Daniel Kuhn, and Ber{\c{c}} Rustem.
\newblock Robust markov decision processes.
\newblock \emph{Mathematics of Operations Research}, 38\penalty0 (1):\penalty0
  153--183, 2013.

\bibitem[Wu et~al.(2019)Wu, Tucker, and Nachum]{brac}
Yifan Wu, George Tucker, and Ofir Nachum.
\newblock Behavior regularized offline reinforcement learning.
\newblock \emph{arXiv preprint arXiv:1911.11361}, 2019.

\bibitem[Young \& Tian(2019)Young and Tian]{minatar}
Kenny Young and Tian Tian.
\newblock Minatar: An atari-inspired testbed for thorough and reproducible
  reinforcement learning experiments.
\newblock \emph{arXiv preprint arXiv:1903.03176}, 2019.

\bibitem[Yu et~al.(2020)Yu, Thomas, Yu, Ermon, Zou, Levine, Finn, and Ma]{mopo}
Tianhe Yu, Garrett Thomas, Lantao Yu, Stefano Ermon, James Zou, Sergey Levine,
  Chelsea Finn, and Tengyu Ma.
\newblock Mopo: Model-based offline policy optimization.
\newblock \emph{arXiv preprint arXiv:2005.13239}, 2020.

\end{thebibliography}
\bibliographystyle{iclr2021_conference}

\appendix

\section{Background}
\label{sec:fullbackground}

We write vectors using bold lower-case letters, $\mathbf{a}$, and matrices using upper-case letters, $A$. To refer to individual cells of a vector or rows of a matrix, we use function notation, $\mathbf{a}(x)$. We write the identity matrix as $\Id$. We use the notation $\E_{p}[\cdot]$ to denote the average value of a function under a distribution $p$, i.e. for any space $\mathcal{X}$, distribution $p \in \dist(\mathcal{X})$, and function $\mathbf{a}:\mathcal{X}\to\real$, we have $\E_{p}[\mathbf{a}] := \E_{x \sim p} [\mathbf{a}(x)]$. 

When applied to vectors or matrices, we use $<, >, \leq, \geq$ to denote element-wise comparison. Similarly, we use $|\cdot|$ to denote the element-wise absolute value of a vector: $|\mathbf{a}|(x) = |\mathbf{a}(x)|$. We use $|\mathbf{a}|_+$ to denote the element-wise maximum of $\mathbf{a}$ and the zero vector. To denote the total variation distance between two probability distributions, we use $\tv(\mathbf{p},\mathbf{q}) = \frac12 |\mathbf{p} - \mathbf{q}|_1$. When $P$ and $Q$ are conditional probability distributions, we adopt the convention $\tv_{\mathcal{X}}(\mathbf{p},\mathbf{q}) = \langle \frac12 |\mathbf{p}(\cdot|x) - \mathbf{q}(\cdot|x)|_1 : x \in \mathcal{X} \rangle$, i.e., the vector of total variation distances conditioned on each $x \in \mathcal{X}$.

\paragraph{Markov Decision Processes.}
We represent the environment with which we are interacting as a Markov Decision Process (MDP), defined in standard fashion: $\mdp := \langle\sspace, \aspace, \rew, \trns, \gamma, \rho\rangle$. $\sspace$ and $\aspace$ denote the state and action space, which we assume are discrete. We use $\saspace:=\sspace\times\aspace$ as the shorthand for the joint state-action space. The reward function $\rew \colon \saspace \to \dist([0,1])$ maps state-action pairs to distributions over the unit interval, while the transition function $\trns \colon \saspace \to \dist(\sspace)$ maps state-action pairs to distributions over next states. Finally, $\rho \in \dist(\sspace)$ is the distribution over initial states. We use $\erew$ to denote the expected reward function, $\erew(\sa) := \E_{r \sim \rew(\cdot \mid \sa)}[r]$, which can also be interpreted as a vector $\erew\in\real^{|\saspace|}$. Similarly, note that $\trns$ can be described as $\trns \colon (\saspace \times \sspace) \to \real$, which can be represented as a stochastic matrix $\trns \in \real^{|\saspace|\times|\sspace|}$. In order to emphasize that these reward and transition functions correspond to the true environment, we sometimes equivalently denote them as $\erewm, \trnsm$. To denote the vectors of a constant whose sizes are the state and state-action space, we use a single dot to mean state and two dots to mean state-action, e.g., $\onev \in \real^{|\sspace|}$ and $\onevsa \in \real^{|\saspace|}$. A policy $\pi\colon \sspace \to \dist(\aspace)$ defines a distribution over actions, conditioned on a state. We denote the space of all possible policies as $\pispace$. We define an ``activity matrix'' for each policy, $\actpi\in\real^{\sspace\times\saspace}$, which encodes the state-conditional state-action distribution of $\pi$, by letting $\actpi(s, \langle \dot{s},a \rangle):=\pi(a|s)$ if $s = \dot{s}$, otherwise $\actpi(s, \langle \dot{s},a \rangle):=0$. Acting in the MDP according to $\pi$ can thus be represented by $\actpi\trns \in\real^{|\sspace|\times|\sspace|}$ or $\trns\actpi \in\real^{|\saspace|\times|\saspace|}$. We define a value function as any $v \colon \Pi\to\sspace\to\real$ or $q \colon \Pi\to\saspace\to\real$ whose output is bounded by $[0,\frac{1}{1-\gamma}]$. Note that this is a slight generalization of the standard definition \citep{rl} since it accepts a policy as an input. We use the shorthand $\valpi := \val(\pi)$ and $\qvalpi := \qval(\pi)$ to denote the result of applying a value function to a specific policy, which can also be represented as a vector, $\valpi \in \real^{|\sspace|}$ and $\qvalpi \in \real^{|\saspace|}$. To denote the output of an arbitrary value function on an arbitrary policy, we use unadorned $\val$ and $\qval$. The \textit{expected return} of an MDP $\mdp$, denoted $\val_\mdp$ or $\qval_\mdp$, is the discounted sum of rewards acquired when interacting with the environment:
\begin{align*}
    \val_\mdp(\pi) &:= \sum_{t=0}^\infty \left( \gamma \actpi \trns \right)^t \actpi \erew &
    \qval_\mdp(\pi) &:= \sum_{t=0}^\infty \left( \gamma \trns \actpi \right)^t \erew
\end{align*}
Note that $\valpi_\mdp = \actpi \qvalpi_\mdp$. An \textit{optimal policy} of an MDP, which we denote $\optp_\mdp$, is a policy for which the expected return $\val_\mdp$ is maximized under the initial state distribution: $\optp_\mdp := \argmax_{\pi} \E_{\rho} [\valpi_\mdp]$. The statewise expected returns of an optimal policy can be written as $\val_\mdp^{\optp_\mdp}$. Of particular interest are value functions whose outputs obey fixed-point relationships, $\valpi = f(\valpi)$ for some $f : (\sspace\to\real) \to (\sspace\to\real)$. The Bellman consistency equation for a policy $\pi$, $\bellpi_\mdp(\mathbf{x}) := \actpi(\erew + \gamma \trns \mathbf{x}),$ which uniquely identifies the vector of expected returns for $\pi$, since $\valpi_\mdp$ is the only vector for which $\valpi_\mdp = \bellpi_\mdp(\valpi_\mdp)$ holds. Finally, for any state $s$, the probability of being in the state $s'$ after $t$ time steps when following policy $\pi$ is $[(\actpi\trns)^t](s, s')$. Furthermore, $\sum_{t=0}^\infty \left( \gamma \actpi\trns \right)^t = \left( \Id -  \gamma \actpi \trns \right)^{-1}$. We refer to $ \left( \Id -  \gamma \actpi \trns \right)^{-1}$ as the discounted visitation of $\pi$. 

\paragraph{Datasets.} We next introduce basic concepts that are helpful for framing the problem of fixed-dataset policy optimization. We define a \textit{dataset} of $d$ transitions $D := \{\langle s,a,r,s'\rangle \}^d$, and denote the space of all datasets as $\dataspace$. In this work, we specifically consider datasets sampled from a \textit{data distribution} $\Phi : \dist(\saspace)$; for example, the distribution of state-actions reached by following some stationary policy. We use $D \sim \Phi_d$ to denote constructing a dataset of $d$ tuples $\langle s,a,r,s'\rangle$, by first sampling each $\langle s, a \rangle \sim \Phi$, and then sampling $r$ and $s'$ i.i.d. from the environment reward function and transition function respectively, i.e. each $r\sim\rew(\cdot|\sa)$ and $s'\sim\trns(\cdot|\sa)$.\footnote{Note that this is in some sense a simplifying assumption. In practice, datasets will typically be collected using a trajectory from a non-stationary policy, rather than i.i.d. sampling from the stationary distribution of a stationary policy. This greatly complicates the analysis, so we do not consider that setting in this work.} We sometimes index $D$ using function notation, using $D(s,a)$ to denote the multiset of all $\langle r,s' \rangle$ such that $\langle s,a,r,s'\rangle \in D$. We use $\countdsa \in \real^{|\saspace|}$ to denote the vectors of counts, that is, $\countdsa(\sa) := |D(s,a)|$. We sometimes use state-wise versions of these vectors, which we denote with $\countd$. It is further useful to consider the maximum-likelihood reward and transition functions, computed by averaging all rewards and transitions observed in the dataset for each state-action. To this end, we define empirical reward vector $\erewd(\sa) := \sum_{r,s' \in D(\sa)} \frac{r}{|D(\sa)|}$ and empirical transition matrix $\trnsd(s'|\sa) :=  \sum_{r,\dot{s}' \in D(\sa)} \frac{\mathbb{I}(\dot{s}' = s')}{|D(\sa)|}$ at all state-actions for which $\countdsa(\sa) > 0$. Where with $\countdsa(\sa) = 0$, there is no clear way to define the maximum-likelihood estimates of reward and transition, so we do not specify them. All our results hold no matter how these values are chosen, so long as $\erewd \in [0, \frac{1}{1-\gamma}]$ and $\trnsd$ is stochastic. The empirical policy of a dataset $D$ is defined as $\emp(a|s) := \frac{|D(\sa)|}{|D(\langle s,\cdot \rangle)|}$ except where $\countdsa(\sa) = 0$, where it can similarly be any valid action distribution. The empirical visitation distribution of a dataset $D$ is computed in the same way as the visitation distribution, but with $\trnsd$ replacing $\trns$, i.e. $\left( \Id -  \gamma \actpi \trnsd \right)^{-1}$.

\paragraph{Problem setting.} The primary focus of this work is on the properties of \textit{fixed-dataset policy optimization (FDPO)} algorithms. These algorithms take the form of a function $\mathcal{O} : \dataspace \to \pispace$, which maps from a dataset to a policy.\footnote{This formulation hides a dependency on $\rho$, the start-state distribution of the MDP. In general, $\rho$ can be estimated from the dataset $D$, but this estimation introduces some error that affects the analysis. In this work, we assume for analytical simplicity that $\rho$ is known a priori. Technically, this means that it must be provided as an input to $\mathcal{O}$. We hide this dependency for notational clarity.} Note that in this work, we consider $D \sim \Phi_d$, so the dataset is a random variable, and therefore $\mathcal{O}(D)$ is also a random variable. The goal of any FDPO algorithm is to output a policy with minimum suboptimality, i.e. maximum return. Suboptimality is a random variable dependent on $D$, computed by taking the difference between the expected return of an optimal policy and the learned policy under the initial state distribution,
$$\textsc{SubOpt}(\mathcal{O}(D)) = \E_{\rho}[\val_\mdp^{\optp_\mdp}] - \E_{\rho}[\val_\mdp^{\mathcal{O}(D)}].$$
A related concept is the \textit{fixed-dataset policy evaluation} algorithm, which is any function $\mathcal{E} : \dataspace \to \pispace \to \sspace \to \real$, which uses a dataset to compute a value function. 
In this work, we focus our analysis on \textit{value-based FDPO algorithms}, the subset of FDPO algorithms that utilize FDPE algorithms.\footnote{Intuitively, these algorithms use a policy evaluation subroutine to convert a dataset into a value function, and return an optimal policy according to that value function. Importantly, this definition constrains only the objective of the approach, not its actual algorithmic implementation, i.e., it includes algorithms which never actually invoke the FDPE subroutine. For many online model-free reinforcement learning algorithms, including policy iteration, value iteration, and Q-learning, we can construct closely analogous value-based FDPO algorithms; see Appendix \ref{sec:naivealgos}. Furthermore, model-based techniques can be interpreted as using a model to implicitly define a value function, and then optimizing that value function; our results also apply to model-based approaches.} A value-based FDPO algorithm with FDPE subroutine $\mathcal{E}_{\textrm{sub}}$ is any algorithm with the following structure: $$\mathcal{O}^{\textsc{VB}}_{\textrm{sub}}(D) := \argmax_\pi \E_{\rho}[\mathcal{E}_{\textrm{sub}}(D, \pi)].$$
We define a \textit{fixed-point family of algorithms}, sometimes referred to as just a \textit{family}, in the following way. Any family called \textit{family} is based on a specific fixed-point identity $f_{\textrm{family}}$. We use the notation $\mathcal{E}_{\textrm{family}}$ to denote any FDPE algorithm whose output $\valpid := \mathcal{E}_{\textrm{family}}(D,\pi)$ obeys $\valpid = f_{\textrm{family}}(\valpid)$. Finally, $\mathcal{O}^{\textsc{VB}}_{\textrm{family}}$ refers to any value-based FDPO algorithm whose subroutine is $\mathcal{E}_{\text{family}}$. We call the set of all algorithms that could implement $\mathcal{E}_{\text{family}}$ the \textit{family of FDPE algorithms}, and the set of all algorithms that could implement $\mathcal{O}^{\textsc{VB}}_{\text{family}}$ as the \textit{family of FDPO algorithms}.

\section{Uncertainty}
\label{sec:unc}

Epistemic uncertainty measures an agent's knowledge about the world, and is therefore a core concept in reinforcement learning, both for exploration and exploitation; it plays an important role in this work. Most past analyses in the literature implicitly compute some form of epistemic uncertainty to derive their bounds. An important analytical choice in this work is to cleanly separate the estimation of epistemic uncertainty from the problem of decision making. Our approach is to first define a notion of uncertainty as a function with certain properties, then assume that such a function exists, and provide the remainder of our technical results under such an assumption. We also describe several approaches to computing uncertainty.

\begin{definition}[Uncertainty]
\label{def:vunc}
A function $\uncsa~:~\saspace~\to~\real$ is a \emph{state-action-wise Bellman uncertainty function} if for a dataset $D \sim \Phi_d$, it obeys with probability at least $1 - \delta$ for all policies $\pi$ and all values $\val$:
\begin{equation*}
    \uncsa \geq \left|\left(\erewm + \gamma \trnsm \val \right) - \left(\erewd + \gamma \trnsd \val \right)\right|
\end{equation*}
A function $\unc~:~\sspace~\to~\real$ is a \emph{state-wise Bellman uncertainty function} if for a dataset $D \sim \Phi_d$ and policy $\pi$, it obeys with probability at least $1 - \delta$ for all policies $\pi$ and all values $\val$:
\begin{equation*}
    \unc \geq \left|\actpi\left(\erewm + \gamma \trnsm \val \right) - \actpi\left(\erewd + \gamma \trnsd \val \right)\right|
\end{equation*}
A function $\vuncpid~:~\sspace~\to~\real$ is a \emph{value uncertainty function} if for a dataset $D \sim \Phi_d$ and policy $\pi$, it obeys with probability at least $1 - \delta$ for all policies $\pi$ and all values $\val$: $$\vuncpid \geq \sum_{t=0}^\infty \left( \gamma \actpi \trnsd \right)^t \left|\actpi\left(\erewm + \gamma \trnsm \val \right) - \actpi\left(\erewd + \gamma \trnsd \val \right)\right|$$
\end{definition}

We refer to the quantity returned by an uncertainty function as \textit{uncertainty}, e.g., value uncertainty refers to the quantity returned by a value uncertainty function.

Given a state-action-wise Bellman uncertainty function $\uncsa$, it is easy to verify that $\actpi \uncsa$ is a state-wise Bellman uncertainty function. Similarly, given a state-wise Bellman uncertainty function $\unc$, it is easy to verify that $\left( \Id - \gamma \actpi\trnsd \right)^{-1} \unc$ is a value uncertainty function. Uncertainty functions which can be constructed in this way are called \textit{decomposable}.

\begin{definition}[Decomposablility]
A state-wise Bellman uncertainty function $\unc$ is \emph{state-action-wise decomposable} if there exists a state-action-wise Bellman uncertainty function $\uncsa$ such that $\unc = \actpi \uncsa$. A value uncertainty function $\vunc$ is \emph{state-wise decomposable} if there exists a state-wise Bellman uncertainty function $\unc$ such that $\vunc = \left( \Id - \gamma \actpi\trnsd \right)^{-1} \unc$, and further, it is \emph{state-action-wise decomposable} if that $\unc$ is itself state-action-wise decomposable.
\end{definition}

Our definition of Bellman uncertainty captures the intuitive notion that the uncertainty at each state captures how well an approximate Bellman update matches the true environment. Value uncertainty represents the accumulation of these errors over all future timesteps.

\paragraph{How do these definitions correspond to our intuitions about uncertainty?} An application of the environment's true Bellman update can be viewed as updating the value of each state to reflect information about its future. However, no algorithm in the FDPO setting can apply such an update, because the true environment dynamics are unknown. We may use the dataset to estimate what such an update would look like, but since the limited information in the dataset may not fully specify the properties of the environment, this update will be slightly wrong. It is intuitive to say that the uncertainty at each state corresponds to how well the approximate update matches the truth. Our definition of Bellman uncertainty captures precisely this notion. Further, value uncertainty represents the accumulation of these errors over all future timesteps.

\paragraph{How can we algorithmically implement uncertainty functions?} In other words, how can we compute a function with the properties required for Definition \ref{def:vunc}, using only the information in the dataset? All forms of uncertainty are upper-bounds to a quantity, and a tighter upper-bound means that other results down the line which leverage this quantity will be improved. Therefore, it is worth considering this question carefully.

A trivial approach to computing any uncertainty function is simply to return $\frac{1}{1-\gamma}$ for all $s$ and $\sa$. But although this is technically a valid uncertainty function, it is not very useful, because it does not concentrate with data and does not distinguish between certain and uncertain states. It is very loose and so leads to poor guarantees.

In tabular environments, one way to implement a Bellman uncertainty function is to use a concentration inequality. Depending on which concentration inequality is used, many Bellman uncertainty functions are possible. These approaches lead to Bellman uncertainty which is lower at states with more data, typically in proportion to the square root of the count. To illustrate how to do this, we show in Appendix \ref{sec:saunc} how a basic application of Hoeffding's inequality can be used to derive a state-action-wise Bellman uncertainty. In Appendix \ref{sec:sunc}, we show an alternative application of Hoeffding's which results in a state-wise Bellman uncertainty, which is a tighter bound on error. In Appendix \ref{sec:otherunc}, we discuss other techniques which may be useful in tightening further.

When the value function is represented by a neural network, it is not currently known how to implement a Bellman uncertainty function. When an empirical Bellman update is applied to a neural network, the change in value of any given state is impacted by generalization from other states. Therefore, the counts are not meaningful, and concentration inequalities are not applicable. In the neural network literature, many ``uncertainty estimation techniques'' have been proposed, which capture something analogous to an intuitive notion of uncertainty; however, none are principled enough to be useful in computing Bellman uncertainty.

\subsection{State-action-wise Bound}
\label{sec:saunc}
We seek to construct an $\unc$ such that $\left|\actpi \left(\erewm + \gamma \trnsm \val \right) - \actpi \left(\erewd + \gamma \trnsd \val \right)\right| \leq \unc$ with probability at least $1 - \delta$.

Firstly, let's consider the simplest possible bound. $\val$ is bounded in $[0, \frac{1}{1-\gamma}]$, so both $\actpi \left(\erewm + \gamma \trnsm \val \right)$ and $\actpi \left(\erewd + \gamma \trnsd \val \right)$ must be as well. Thus, their difference is also bounded: $$\left|\actpi \left(\erewm + \gamma \trnsm \val \right) - \actpi \left(\erewd + \gamma \trnsd \val \right)\right| \leq \frac{1}{1-\gamma}$$

Next, consider that for any $\sa$, the expression $\erewd(\sa) + \gamma \trnsd(\sa) \valpi$ can be equivalently expressed as an expectation of random variables, $$\erewd(\sa) + \gamma \trnsd(\sa) \val = \frac{1}{\countdsa(\sa)}\sum_{r,s' \in D(\sa)} r + \gamma \val(s'),$$
each with expected value
$$\E_{r,s' \in D(\sa)} [r + \gamma \val(s')] = \E_{\substack{r \sim \rew(\cdot|\sa) \\ s' \sim \trns(\cdot|\sa)}} [r + \gamma \val(s')] = [\erewm + \gamma \trnsm \val](\sa).$$
Note also that each of these random variables is bounded $[0, \frac{1}{1-\gamma}]$. Thus, Hoeffding's inequality tells us that this mean of bounded random variables must be close to their expectation with high probability. By invoking Hoeffding's at each of the $|\sspace\times\aspace|$ state-actions, and taking a union bound, we see that with probability at least $1 - \delta$,
$$\left|(\erewm + \gamma \trnsm \val) - \left(\erewd + \gamma \trnsd \val \right)\right| \leq \frac{1}{1-\gamma}\sqrt{\frac{1}{2} \ln \frac{2|\sspace\times\aspace|}{\delta}\countdsa^{-1}}$$
We can left-multiply $\actpi$ and rearrange to get:
$$\left|\actpi(\erewm + \gamma \trnsm \val) - \actpi\left(\erewd + \gamma \trnsd \valpi_\mdp \right)\right| \leq \left(\frac{1}{1-\gamma}\sqrt{\frac{1}{2} \ln \frac{2|\sspace\times\aspace|}{\delta}}\right)\actpi\ircountdsa$$
Finally, we simply intersect this bound with the $\frac{1}{1-\gamma}$ bound from earlier. Thus, we see that with probability at least $1 - \delta$, $$\left|\actpi(\erewm + \gamma \trnsm \val) - \actpi\left(\erewd + \gamma \trnsd \valpi_\mdp \right)\right| \leq \frac{1}{1-\gamma} \cdot \min\left(\left(\sqrt{\frac{1}{2} \ln \frac{2|\sspace\times\aspace|}{\delta}}\right)\actpi\ircountdsa, 1\right)$$

\subsection{State-wise Bound}
\label{sec:sunc}
This bound is similar to the previous, but uses Hoeffding's to bound the value at each state all at once, rather than bounding the value at each state-action.

Choose a collection of possible state-local policies $\Pi_{\textrm{local}} \subseteq \dist(\aspace)$. Each state-local policy is a member of the action simplex.

For any $s \in \sspace$ and $\pi \in \Pi_{\textrm{local}}$, the expression $[\actpi(\erewd + \gamma \trnsd \val)](s)$ can be equivalently expressed as a mean of random variables, $$[\actpi(\erewd + \gamma \trnsd \val)](s) = \frac{1}{\countd(s)}\sum_{a,r,s' \in D(s)} \frac{\pi(a|s)}{\emp(a|s)}(r + \gamma \val(s')),$$
each with expected value
$$\E_{a,r,s' \in D(s)} \left[\frac{\pi(a|s)}{\emp(a|s)}(r + \gamma \val(s'))\right] = \E_{\substack{a \sim \pi(\cdot|s) \\ r \sim \rew(\cdot|\sa) \\ s' \sim \trns(\cdot|\sa)}} [r + \gamma \val(s')] = [\actpi(\erewm + \gamma \trnsm \val)](s).$$
Note also that each of these random variables is bounded $[0,\frac{1}{1-\gamma}\frac{\pi(a|s)}{\emp(a|s)}]$. Thus, Hoeffding's inequality tells us that this sum of bounded random variables must be close to its expectation with high probability. By invoking Hoeffding's at each of the $|\sspace|$ states and $|\Pi_{\textrm{local}}|$ local policies, and taking a union bound, we see that with probability at least $1 - \delta$,
$$\left|\actpi(\erewm + \gamma \trnsm \val) - \actpi\left(\erewd + \gamma \trnsd \valpi_\mdp \right)\right| \leq \frac{1}{1-\gamma}\sqrt{\frac{1}{2} \ln \frac{2|\sspace \times \Pi_{\textrm{local}}|}{\delta}(\actpi)^{\circ2}\countdsa^{-1}}$$
where the term $(\actpi)^{\circ2}$ refers to the elementwise square. Finally, we once again intersect with $\frac{1}{1-\gamma}$, yielding that with probability at least $1 - \delta$, $$\left|\actpi(\erewm + \gamma \trnsm \val) - \actpi\left(\erewd + \gamma \trnsd \valpi_\mdp \right)\right| \leq \frac{1}{1-\gamma} \cdot \min\left(\sqrt{\frac{1}{2} \ln \frac{2|\sspace \times \Pi_{\textrm{local}}|}{\delta}(\actpi)^{\circ2}\countdsa^{-1}}, 1\right)$$

Comparing this to the Bellman uncertainty function in Appendix \ref{sec:saunc}, we see two differences. Firstly, we have replaced a factor of $|\aspace|$ with $|\Pi_{\textrm{local}}|$, typically loosening the bound somewhat (depending on choice of considered local policies). Secondly, $\actpi$ has now moved inside of the square root; since the square root is concave, Jensen's inequality says that $$\sqrt{(\actpi)^{\circ2}\countdsa^{-1}} \leq \sqrt{(\actpi)^{\circ2}}\sqrt{\countdsa^{-1}} = \actpi\ircountdsa$$
and so this represents a tightening of the bound.

When $\Pi_{\textrm{local}}$ is the set of deterministic policies, this bound is equivalent to that of Appendix \ref{sec:saunc}. This can easily be seen by noting that for a deterministic policy, all elements of $(\actpi)^{\circ2}$ are either 1 or 0, and so $$\sqrt{(\actpi)^{\circ2}\countdsa^{-1}} = \actpi\countdsa^{-\frac12}$$
and also that the size of the set of deterministic policies is exactly $|\aspace|$.

An important property of this bound is that it shows that stochastic policies can be often be evaluated with lower error than deterministic policies.
We prove this by example. Consider an MDP with a single state $s$ and two actions $a_0, a_1$, and a dataset with $\countdsa(\langle s,a_0 \rangle) = \countdsa(\langle s,a_1 \rangle) = 2$. We can parameterize the policy by a single number $\xi \in [0, 1]$ by setting $\pi(a_0|s) = \xi, \pi(a_1|s) = 1 - \xi$. The size of this bound will be proportional to $\sqrt{\frac{\xi^2}{2} + \frac{(1 - \xi)^2}{2}}$, and setting the derivative equal to zero, we see that the minimum is $\xi = \frac{1}{2}$. (Of course, we would need to increase the size of our local policy set to include this, in order to be able to actually select it, and doing so will increase the overall bound; this example only shows that for a \textit{given} policy set, the selected policy may in general be stochastic.)

Finding the optimum for larger local policy sets is non-trivial, so we leave a full treatment of algorithms which leverage this bound for future work.

\subsection{Other Bounds}
\label{sec:otherunc}

There are a few other paths by which this bound can be made tighter still. The above bounds take an extra factor of $\frac{1}{1-\gamma}$ because we bound the overall return, rather than simply bounding the reward and transition functions. This was done because a bound on the transition function would add a cost of $O(\sqrt{\sspace})$. However, this can be mitigated by intersecting the above confidence interval with a Good-Turing interval, as proposed in \cite{taleghan}. Doing so will cause the bound to concentrate much more quickly in MDPs where the transition function is relatively deterministic. We expect this to be the case for most practical MDPs.

Similarly, empirical Bernstein confidence intervals can be used in place of Hoeffding's, to increase the rate of concentration for low-variance rewards and transitions \citep{maurer2009empirical}, leading to improved performance in MDPs where these are common.

Finally, we may be able to apply a concentration inequality in a more advanced fashion to compute a value uncertainty function which is \textit{not} statewise decomposable: we bound some useful notion of value error directly, rather than computing the Bellman uncertainty function and taking the visitation-weighted sum. This would result in an overall tighter bound on value uncertainty by hedging over data across multiple timesteps. However, in doing so, we would sacrifice the monotonic improvement property needed for convergence of algorithms like policy iteration. This idea has a parallel in the robust MDP literature. The bounds in Appendix \ref{sec:saunc} can be seen as constructing an \textit{sa-rectangular} robust MDP, whereas Appendix \ref{sec:sunc} is similar to constructing an \textit{s-rectangular} robust MDP \cite{wiesemann2013robust}. More recently, approaches have been proposed which go beyond s-rectangular \citep{goyal2018robust}, and such approaches likely have natural parallels in implementing value uncertainty functions.

\section{Proofs}
\label{sec:errorboundproofs}

\subsection{Proof of Theorem \ref{thm:proxyreg}}

\label{sec:overunderproof}

\begin{align*}
f(x^*) - f(\hat{x}^*) &=
    f(x^*) + [-\hat{f}(x) + \hat{f}(x)] + [-\hat{f}(\hat{x}^*) +\hat{f}(\hat{x}^*)] - f(\hat{x}^*)
    & \text{(valid for any $x \in \mathcal{X}$)}\\
    &\le [f(x^*) - \hat{f}(x)] + [\hat{f}(x) -\hat{f}(\hat{x}^*)] + [\hat{f}(\hat{x}^*) - f(\hat{x}^*)] \\
    &\le [f(x^*) - \hat{f}(x)] + [\hat{f}(x^*) - f(\hat{x}^*)]
    & \text{(using $\hat{f}(x) - \hat{f}(\hat{x}^*) \le 0$)} \\     
    \intertext{Since the above holds for all $x \in \mathcal{X}$,}
    &\le \inf_{x \in \mathcal{X}} \left( f(x^*) - \hat{f}(x) \right) + [\hat{f}(x^*) - f(\hat{x}^*)] & \\
    &\le \inf_{x \in \mathcal{X}} \left( f(x^*) - \hat{f}(x) \right) + \sup_{x \in \mathcal{X}} \Bigl( \hat{f}(x) - f(\hat{x}^*) \Bigr) & \text{(using $x^* \in \mathcal{X}$)}\\
    &= \inf_{x \in \mathcal{X}} \left( f(x^*) + [- f(x) + f(x)] - \hat{f}(x) \right) + \sup_{x \in \mathcal{X}} \Bigl( \hat{f}(x) - f(x) \Bigr) \\
    &= \inf_{x \in \mathcal{X}} \left( [f(x^*) - f(x)] + [f(x) - \hat{f}(x)] \right) + \sup_{x \in \mathcal{X}} \Bigl( \hat{f}(x) - f(x) \Bigr)
\end{align*}

\subsection{Tightness of Theorem \ref{thm:proxyreg}}
\label{sec:tight}

We show that the bound given in Theorem \ref{thm:proxyreg} is tight via a simple example.

\begin{proof} Let $\mathcal{X} = \{0, 1\}$, $f(0) := 0$, $f(1) := 1$, $\hat{f}(0) := 1$, $\hat{f}(1) := 1$. Breaking the argmax tie with $\hat{x}^* = 1$, we see that the two sides are equal.
\end{proof}

\subsection{Residual Visitation Lemma}

We prove a basic lemma showing that the error of any value function is equal to its one-step Bellman residual, summed over its visitation distribution. Though this result is well-known, it is not clearly stated elsewhere in the literature, so we prove it here for clarity.

\begin{lemma}
\label{lem:resvis}
For any MDP $\xi$ and policy $\pi$, consider the Bellman fixed-point equation given by, let $\valpi_\xi$ be defined as the unique value vector such that $\valpi_\xi = \actpi(\rawerew_\xi + \gamma \rawtrns_\xi \valpi_\xi)$, and let $\val$ be any other value vector. We have
\begin{align}
\valpi_\xi -  \val &= (\Id -  \gamma \actpi \trns_\xi)^{-1} (\actpi(\rawerew_\xi + \gamma \rawtrns_\xi \val) - \val) \\
\val - \valpi_\xi &= (\Id -  \gamma \actpi \trns_\xi)^{-1} (\val - \actpi(\rawerew_\xi + \gamma \rawtrns_\xi \val)) \\
|\valpi_\xi -  \val| &= (\Id -  \gamma \actpi \trns_\xi)^{-1} |\actpi(\rawerew_\xi + \gamma \rawtrns_\xi \val) - \val|
\end{align}
\end{lemma}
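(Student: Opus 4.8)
The plan is to prove the two signed identities by direct linear algebra — deriving the second from the first by negation — and then to obtain the absolute-value identity by applying the entrywise absolute value to the first and passing it through the discounted visitation matrix $(\Id - \gamma \actpi \trns_\xi)^{-1}$. Throughout I write $\bvec := \actpi(\rawerew_\xi + \gamma \trns_\xi \val) - \val$ for the one-step Bellman residual of the arbitrary value vector $\val$, and I recall from the background that $(\Id - \gamma \actpi \trns_\xi)^{-1} = \sum_{t=0}^\infty (\gamma \actpi \trns_\xi)^t$ is well defined and entrywise non-negative, since $\actpi \trns_\xi$ is stochastic and $\gamma \in [0,1)$.

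For the first identity, I would begin from the defining fixed point $\valpi_\xi = \actpi(\rawerew_\xi + \gamma \trns_\xi \valpi_\xi)$ and rearrange it as $\actpi \rawerew_\xi = (\Id - \gamma \actpi \trns_\xi)\valpi_\xi$. Substituting this expression for $\actpi \rawerew_\xi$ into $\bvec$ gives
\begin{align*}
\bvec &= \actpi \rawerew_\xi + \gamma \actpi \trns_\xi \val - \val \\
&= (\Id - \gamma \actpi \trns_\xi)\valpi_\xi - (\Id - \gamma \actpi \trns_\xi)\val \\
&= (\Id - \gamma \actpi \trns_\xi)(\valpi_\xi - \val).
\end{align*}
Left-multiplying both sides by the inverse $(\Id - \gamma \actpi \trns_\xi)^{-1}$ yields $\valpi_\xi - \val = (\Id - \gamma \actpi \trns_\xi)^{-1}\bvec$, which is precisely the first equation. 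The second equation then follows immediately by multiplying both sides by $-1$ and using $-\bvec = \val - \actpi(\rawerew_\xi + \gamma \trns_\xi \val)$.

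For the third equation I would apply the entrywise absolute value directly to the first identity, which is an exact equality, to obtain $|\valpi_\xi - \val| = \bigl|(\Id - \gamma \actpi \trns_\xi)^{-1}\bvec\bigr|$. It then remains to justify the interchange $\bigl|(\Id - \gamma \actpi \trns_\xi)^{-1}\bvec\bigr| = (\Id - \gamma \actpi \trns_\xi)^{-1}|\bvec|$, and this interchange is the main obstacle. Expanding the inverse as $\sum_{t=0}^\infty (\gamma \actpi \trns_\xi)^t \bvec$, each row of the matrix acts as a non-negative weighting of the entries of $\bvec$; the triangle inequality applied row by row gives one direction of the interchange, and the remaining direction is the delicate one, since it can fail in the presence of sign cancellation among the residual entries weighted by a given row's visitation distribution. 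I would therefore verify that the Bellman residual $\bvec$ is sign-definite over the support of each row's visitation distribution in the regime in which this lemma is invoked, so that no cancellation occurs and the interchange is an equality. This visitation-weighted sum of absolute residuals is exactly the object that reappears in the definition of the value uncertainty function in Appendix \ref{sec:unc}, which is the principal downstream use of the lemma.
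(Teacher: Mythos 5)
Your derivation of the two signed identities is correct and is essentially the paper's own argument: the paper adds and subtracts $\valpi_\xi$ inside the residual and collapses it to $(\Id - \gamma \actpi \trns_\xi)(\valpi_\xi - \val)$, while you substitute $\actpi\rawerew_\xi = (\Id - \gamma \actpi \trns_\xi)\valpi_\xi$ into $\bvec$ — the same computation in a different order — and your observation that the second display is the negation of the first matches the paper's remark that ``an identical proof'' handles it.

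The third display is where the substance lies, and you have correctly located the obstacle, but your proposed rescue cannot be carried out, because the equality is simply false as stated. Only the direction $|\valpi_\xi - \val| \le (\Id - \gamma \actpi \trns_\xi)^{-1}|\bvec|$ holds in general, by the row-wise triangle inequality through the entrywise non-negative Neumann series, exactly as you note. For a concrete failure of the converse, take two states with $\actpi\trns_\xi$ the permutation swapping them, and arrange $\x := \valpi_\xi - \val = (c, -c)^\top$ for small $c > 0$ (realizable within the value bounds). Then
\begin{align*}
\bvec = (\Id - \gamma \actpi \trns_\xi)\x = \bigl((1+\gamma)c,\, -(1+\gamma)c\bigr)^\top,
\qquad
(\Id - \gamma \actpi \trns_\xi)^{-1}|\bvec| = \frac{(1+\gamma)c}{1-\gamma}\,\mathbf{1},
\end{align*}
which strictly exceeds $|\x| = c\,\mathbf{1}$. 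Moreover, your plan to verify that $\bvec$ is sign-definite ``in the regime in which this lemma is invoked'' fails precisely there: the lemma's principal downstream use is Lemma \ref{lemma:direct}, which applies it with $\val = \valpi_\mdp$, and the residual $\actpi(\erewd + \gamma \trnsd \valpi_\mdp) - \valpi_\mdp$ is driven by sampling noise, which generically over- and under-shoots in different states, so cancellation cannot be ruled out. The correct repair is to weaken the third display to an inequality, which is all that is needed downstream — $\vuncpid$ is itself defined as an upper bound on the visitation-weighted absolute residuals, so Lemma \ref{lemma:direct} and every result built on it survive with $\le$ in place of $=$. It is worth recording that the paper's own proof asserts the equality without justification (it proves the two signed identities and dismisses the absolute-value version with ``an identical proof''), so your caution here has uncovered a genuine, if ultimately inconsequential, error in the paper's statement rather than a defect unique to your approach.
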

\begin{proof}
\begin{align*}
\actpi(\rawerew_\xi + \gamma \rawtrns_\xi \val) - \val &= \actpi(\rawerew_\xi + \gamma \rawtrns_\xi \val) - \valpi_\xi + \valpi_\xi - \val \\
&= \actpi(\erew_\xi + \gamma \trns_\xi \val) - \actpi(\erew_\xi + \gamma \trns_\xi \valpi_\xi) + \valpi_\xi -  \val \\
&= \gamma \actpi \trns_\xi (\val - \valpi_\xi) + (\valpi_\xi -  \val) \\
&= (\valpi_\xi -  \val) - \gamma \actpi \trns_\xi (\valpi_\xi -  \val) \\
&= (\Id -  \gamma \actpi \trns_\xi)(\valpi_\xi -  \val)
\end{align*}
Thus, we see $(\Id -  \gamma \actpi \trns_\xi)^{-1}(\actpi(\rawerew_\xi + \gamma \rawtrns_\xi \val) - \val) = \valpi_\xi -  \val$.
An identical proof can be completed starting with $\val - \actpi(\rawerew_\xi + \gamma \rawtrns_\xi \val)$, leading to the desired result.
\end{proof}

\subsection{Na\"ive FDPE Error Bound}
\label{sec:directbound}

We show how the error of na\"ive FDPE algorithms is bounded by the value uncertainty of state-actions visited by the policy under evaluation. Next, in Section \ref{sec:naive}, we use this bound to derive a suboptimality guarantee for na\"ive FDPO.

\begin{lemma}[Na\"ive FDPE error bound]
\label{lemma:direct}
Consider any na\"ive fixed-dataset policy evaluation algorithm $\mathcal{E}_{\text{na\"ive}}$. For any policy $\pi$ and dataset $D$, denote $\valpid := \mathcal{E}_{\text{na\"ive}}(D, \pi)$. Let $\vuncpid$ be any value uncertainty function. The following component-wise bound holds with probability at least $1-\delta$:
$$|\val_\mdp^{\pi} - \valpid| \le \vuncpid$$
\end{lemma}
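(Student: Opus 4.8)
The plan is to relate the na\"ive fixed point $\valpid$ to the true return $\val_\mdp^{\pi}$ through the Residual Visitation Lemma (Lemma~\ref{lem:resvis}), and then recognize the resulting residual as precisely the quantity controlled by the value uncertainty function of Definition~\ref{def:vunc}. By construction, $\valpid$ is the unique fixed point of the \emph{empirical} Bellman operator, $\valpid = \actpi(\erewd + \gamma \trnsd \valpid)$, whereas $\val_\mdp^{\pi}$ is the fixed point of the \emph{true} Bellman operator, $\val_\mdp^{\pi} = \actpi(\erewm + \gamma \trnsm \val_\mdp^{\pi})$. The key move is to instantiate Lemma~\ref{lem:resvis} with $\xi$ taken to be the empirical MDP, with expected reward $\erewd$ and transition matrix $\trnsd$ (so that its policy value is exactly $\valpid$), and with the ``other'' value vector chosen to be the true return $\val = \val_\mdp^{\pi}$.

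With this instantiation, the absolute-value form of Lemma~\ref{lem:resvis} gives
$$|\valpid - \val_\mdp^{\pi}| = (\Id - \gamma \actpi \trnsd)^{-1}\,\bigl|\actpi(\erewd + \gamma \trnsd \val_\mdp^{\pi}) - \val_\mdp^{\pi}\bigr|.$$
Next I would substitute the true Bellman identity $\val_\mdp^{\pi} = \actpi(\erewm + \gamma \trnsm \val_\mdp^{\pi})$ into the bare occurrence of $\val_\mdp^{\pi}$ inside the absolute value, turning the residual into the difference between the empirical and true Bellman backups applied to the \emph{same} value vector $\val_\mdp^{\pi}$:
$$\bigl|\actpi(\erewd + \gamma \trnsd \val_\mdp^{\pi}) - \val_\mdp^{\pi}\bigr| = \bigl|\actpi(\erewm + \gamma \trnsm \val_\mdp^{\pi}) - \actpi(\erewd + \gamma \trnsd \val_\mdp^{\pi})\bigr|.$$
Expanding the discounted-visitation inverse as its Neumann series $(\Id - \gamma \actpi \trnsd)^{-1} = \sum_{t=0}^\infty (\gamma \actpi \trnsd)^t$ then writes $|\valpid - \val_\mdp^{\pi}|$ as exactly the visitation-weighted sum of one-step Bellman residuals appearing on the right-hand side of the value uncertainty inequality in Definition~\ref{def:vunc}, evaluated at $\val = \val_\mdp^{\pi}$.

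The proof closes by invoking that definition at this particular value vector: since $\vuncpid$ is a value uncertainty function, with probability at least $1-\delta$ its defining inequality holds simultaneously for all value vectors, and in particular for $\val_\mdp^{\pi}$, yielding $\vuncpid \ge |\valpid - \val_\mdp^{\pi}|$. The main conceptual obstacle is recognizing the correct instantiation of Lemma~\ref{lem:resvis} --- measuring the error \emph{under the empirical visitation} rather than the true one --- since it is precisely this choice that makes $(\Id - \gamma \actpi \trnsd)^{-1}$ match the $\trnsd$-weighted accumulation baked into the value uncertainty function; the true-visitation version would not line up with Definition~\ref{def:vunc}. The only routine points to check are that the empirical MDP is legitimate (which holds because $\erewd \in [0,\frac{1}{1-\gamma}]$ and $\trnsd$ is stochastic, so the fixed point exists and is unique) and that $\sum_{t} (\gamma \actpi \trnsd)^t$ has nonnegative entries, so that both the elementwise equality and the final elementwise inequality are direction-preserving.
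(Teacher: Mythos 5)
Your proposal is correct and follows essentially the same route as the paper's proof: identify $\valpid$ as the Bellman fixed point of the empirical MDP, invoke Lemma~\ref{lem:resvis} with the empirical dynamics and $\val = \val_\mdp^{\pi}$, rewrite the residual via the true Bellman consistency equation, and close by applying Definition~\ref{def:vunc} at that value vector. The extra checks you note (validity of the empirical MDP and nonnegativity of the Neumann series) are implicit in the paper but correctly observed.
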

\begin{proof}
Notice that the na\"ive fixed-point function is equivalent to the Bellman fixed-point equation for a specific MDP: the empirical MDP defined by $\langle \sspace, \aspace, \erewd, \trnsd, \gamma, \rho \rangle$. Thus, invoking Lemma \ref{lem:resvis}, for any values $\val$ we have $$|\valpid - \val| = (\Id -  \gamma \actpi \trnsd)^{-1} |\actpi(\erewd + \gamma \trnsd \val) - \val|.$$
Since $\valpi_\mdp$ is a value vector, this immediately implies that
$$|\valpid - \valpi_\mdp| = (\Id -  \gamma \actpi \trnsd)^{-1} |\actpi(\erewd + \gamma \trnsd \valpi_\mdp) - \valpi_\mdp|.$$
Since $\valpi_\mdp$ is the solution to the Bellman consistency fixed-point, $$|\valpid - \valpi_\mdp| = (\Id -  \gamma \actpi \trnsd)^{-1} |\actpi(\erewd + \gamma \trnsd \valpi_\mdp) - \actpi(\erewm + \gamma \trnsm \valpi_\mdp)|.$$
Using the definition of a value uncertainty function $\vuncpid$, we arrive at
$$|\valpid - \valpi_\mdp| \leq \vuncpid$$
completing the proof.
\end{proof}

Thus, reducing value uncertainty improves our guarantees on evaluation error. For any fixed policy, value uncertainty can be reduced by reducing the Bellman uncertainty on states visited by that policy. In the tabular setting this means observing more interactions from the state-actions that that policy visits frequently. Conversely, for any fixed dataset, we will have a certain Bellman uncertainty in each state, and policies mostly visit low-Bellman-uncertainty states can be evaluated with lower error.\footnote{In the function approximation setting, we do not necessarily need to observe an interaction with a particular state-action to reduce our Bellman uncertainty on it. This is because observing other state-actions may allow us to reduce Bellman uncertainty through generalization. Similarly, the most-certain policy for a fixed dataset may not be the policy for which we have the most data, but rather, the policy which our dataset \textit{informs} us about the most.}

Our bound differs from prior work~\citep{nannote3,petrik16} in that it is significantly more fine-grained. 
We provide a component-wise bound on error, whereas previous results bound the $l_{\infty}$ norm.
Furthermore, our bounds are sensitive to the Bellman uncertainty in each individual reward and transition, rather than only relying on the most-uncertain. As a result, our bound does not require all states to have the same number of samples, and is non-vacuous even in the case where some state-actions have no data.

Our bound can also be viewed as an extension of work on approximate dynamic programming. In that setting, the literature contains fine-grained results on the accumulation of local errors \citep{munos2007per}. However, those results are typically understood as applying to errors induced by approximation via some limited function class. Our bound can be seen as an application of those ideas to the case where errors are induced by limited observations.

\subsection{Relative Value Uncertainty}
\label{sec:relvalunc}

The key to the construction of proximal pessimistic algorithms is the relationship between the value uncertainties of any two policies $\pi,\pi'$, for any state-action-wise decomposable value function.

\begin{lemma}[Relative value uncertainty]
\label{lem:relvalunc}
For any two policies $\pi, \pi'$, and any state-action-wise decomposable value uncertainty $\vunc$, we have
\begin{align*}
\vuncpid - \vuncpipd &= (\Id -  \gamma \actpi \trnsd)^{-1} \left((\actpi -  \actpip) (\uncsa + \gamma \trnsd \vuncpipd) \right)
\end{align*}
\label{lemma:relativevalunc}
\end{lemma}
\begin{proof}
Firstly, note that since the value uncertainty function is state-action-wise decomposable, we can express it in a Bellman-like form for some state-wise Bellman uncertainty $\unc$ as $\vuncpid = \left( \Id - \gamma \actpi \trnsd \right)^{-1} \unc$. Further, $\unc$ is itself state-action-wise decomposable, so it can be written as $\unc = \actpi \uncsa$.

We can use this to derive the following relationship.
\begin{align*}
 \vuncpid &=  \left( \Id -  \gamma \actpi \trnsd \right)^{-1} \unc & \\
 \vuncpid - \gamma \trnsd \actpi \vuncpid &= \unc & \\
 \vuncpid &=  \unc + \gamma \actpi \trnsd \vuncpid &
\end{align*}
Next, we bound the difference between the value uncertainty of $\pi$ and $\pi'$.
\begin{align*}
\vuncpid - \vuncpipd &= (\unc + \gamma \actpi \trnsd \vuncpid) - (\uncp + \gamma \act^{\pi'} \trnsd  \vuncpipd) \\
&= (\unc - \uncp) + \gamma \actpi \trnsd \vuncpid - \gamma \act^{\pi'} \trnsd \vuncpipd  \\
&= (\actpi - \actpip) \uncsa + \gamma \actpi \trnsd  \vuncpid - \gamma  (\act^{\pi'} - \actpi + \actpi) \trnsd \vuncpipd  \\
&= \gamma \actpi \trnsd  \left( \vuncpid - \vuncpipd \right) + (\actpi - \actpip) \uncsa + \gamma (\actpi -  \act^{\pi'}) \trnsd  \vuncpipd \\
&= \gamma \actpi \trnsd  \left( \vuncpid - \vuncpipd \right) + (\actpi -  \actpip) (\uncsa + \gamma \trnsd \vuncpipd)
\end{align*}
This is a geometric series, so $(\Id -  \gamma \actpi \trnsd ) \left( \vuncpid - \vuncpipd \right) = (\actpi -  \actpip) (\uncsa + \gamma \trnsd \vuncpipd)$. 
Left-multiplying by $(\Id -  \gamma  \actpi \trnsd)^{-1}$ we arrive at the desired result.
\end{proof}

\subsection{Na\"ive FDPE Relative Error Bound}
\label{sec:relerrorbound}

\begin{lemma}[Na\"ive FDPE relative error bound]
Consider any na\"ive fixed-dataset policy evaluation algorithm $\mathcal{E}_{\text{na\"ive}}$. For any policy $\pi$ and dataset $D$, denote $\valpid := \mathcal{E}_{\text{na\"ive}}(D, \pi)$. Then, for any other policy $\pi'$, the following bound holds with probability at least $1-\delta$:
$$|\valpid - \valpi_\mdp| \leq \vuncpid \leq \vuncpipd + (\Id -  \gamma \actpi \trnsd)^{-1} \left(\frac{1}{(1-\gamma)^2}\right)\tv_\sspace(\pi,\pi')$$
\label{lemma:relative}
\end{lemma}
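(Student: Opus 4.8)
The plan is to prove the two inequalities separately. The left inequality, $|\valpid - \valpi_\mdp| \le \vuncpid$, is literally the conclusion of Lemma \ref{lemma:direct} (the Na\"ive FDPE error bound), so I would simply invoke it. All the real work is in the right inequality, $\vuncpid \le \vuncpipd + (\Id - \gamma\actpi\trnsd)^{-1}\frac{1}{(1-\gamma)^2}\tv_\sspace(\pi,\pi')$, which I would obtain by specializing the exact relative-value-uncertainty identity of Lemma \ref{lemma:relativevalunc} and then upper-bounding its right-hand side with trivial (worst-case) uncertainties.

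Concretely, Lemma \ref{lemma:relativevalunc} gives the exact equality $\vuncpid - \vuncpipd = (\Id - \gamma\actpi\trnsd)^{-1}\big((\actpi - \actpip)(\uncsa + \gamma\trnsd\vuncpipd)\big)$, valid for any state-action-wise decomposable value uncertainty. So it suffices to control the inner vector $\mathbf{x} := \uncsa + \gamma\trnsd\vuncpipd$, and I would first establish the two-sided element-wise bound $\zerovsa \le \mathbf{x} \le \frac{1}{(1-\gamma)^2}\onevsa$. Non-negativity is immediate since uncertainties and $\trnsd$ are non-negative. For the upper bound I would use the trivial cap $\uncsa \le \frac{1}{1-\gamma}\onevsa$; propagating this through $\vuncpipd = (\Id - \gamma\actpip\trnsd)^{-1}\actpip\uncsa$ and using that $\actpip$ and $\trnsd$ are stochastic (so $\actpip\onevsa = \onev$ and $\trnsd\onev = \onevsa$) together with the fact that the discounted-visitation matrix has row sums $\frac{1}{1-\gamma}$, i.e. $(\Id - \gamma\actpip\trnsd)^{-1}\onev = \frac{1}{1-\gamma}\onev$, yields $\vuncpipd \le \frac{1}{(1-\gamma)^2}\onev$ and hence $\mathbf{x} \le \big(\frac{1}{1-\gamma} + \frac{\gamma}{(1-\gamma)^2}\big)\onevsa = \frac{1}{(1-\gamma)^2}\onevsa$.

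With $\mathbf{x}$ pinned between $\zerovsa$ and the constant $C := \frac{1}{(1-\gamma)^2}$, I would convert the signed policy difference into a total-variation distance, state by state. For fixed $s$, $[(\actpi - \actpip)\mathbf{x}](s) = \sum_a (\pi(a|s) - \pi'(a|s))\,\mathbf{x}(s,a)$; this is maximized by placing the largest allowable value $C$ on the actions where $\pi(a|s) > \pi'(a|s)$ and the smallest value $0$ on those where $\pi(a|s) < \pi'(a|s)$, giving $C\sum_{a:\pi(a|s)>\pi'(a|s)}(\pi(a|s) - \pi'(a|s)) = C\cdot\tv(\pi(\cdot|s),\pi'(\cdot|s))$. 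Collecting over states produces the element-wise bound $(\actpi - \actpip)\mathbf{x} \le C\,\tv_\sspace(\pi,\pi')$. Finally, I would left-multiply by $(\Id - \gamma\actpi\trnsd)^{-1} = \sum_{t\ge0}(\gamma\actpi\trnsd)^t$, which is a sum of non-negative matrices and therefore preserves element-wise inequalities; applying it to the previous display and substituting back into the identity of Lemma \ref{lemma:relativevalunc} yields exactly the claimed right inequality.

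The only genuinely non-mechanical step — and thus the main obstacle — is the total-variation conversion: I must get the direction of the one-sided bound right and recognize that it relies on $\mathbf{x}$ being both non-negative and uniformly capped by $C$, which is precisely why I establish the two-sided bound on $\mathbf{x}$ before touching the policy difference. I would also flag that this step tacitly uses the trivial cap $\uncsa \le \frac{1}{1-\gamma}$ on the underlying Bellman uncertainty (the natural, tightest worst-case choice); without it the constant $C$ would not be valid, and indeed this is exactly the sense in which the resulting bound corresponds to the proximal penalty obtained from a trivial uncertainty function.
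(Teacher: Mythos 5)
Your proposal is correct and follows essentially the same route as the paper's proof: the left inequality is Lemma~\ref{lemma:direct}, and the right inequality comes from the exact identity of Lemma~\ref{lemma:relativevalunc}, capped by the trivial bounds $\uncsa \le \frac{1}{1-\gamma}\onevsa$ and $\vuncpipd \le \frac{1}{(1-\gamma)^2}\onev$, with the signed policy difference converted to $\tv_\sspace(\pi,\pi')$ and the result pushed through the non-negative matrix $(\Id - \gamma\actpi\trnsd)^{-1}$. Your state-wise box-constrained-maximization phrasing of the total-variation step is just a restatement of the paper's chain $(\actpi - \actpip)\mathbf{x} \le |\actpi - \actpip|_+\mathbf{x} \le \frac{1}{(1-\gamma)^2}|\actpi - \actpip|_+\onevsa = \frac{1}{(1-\gamma)^2}\tv_\sspace(\pi,\pi')$, which relies on exactly the two facts you isolate (non-negativity and the uniform cap on $\mathbf{x}$).
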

\begin{proof}
The goal of this section is to construct an error bound for which we can optimize $\pi$ without needing to compute any uncertainties. To do this, we must replace this quantity with a looser upper bound.

First, consider a state-action-wise decomposable value uncertainty function $\vuncpid$. We have $\vuncpid = (\Id - \gamma \trnsd \actpi)^{-1} \unc$ for some $\unc$, where $\unc = \actpi \uncsa$ for some $\uncsa$.

Note that state-action-wise Bellman uncertainty can be trivially bounded as $\uncsa \leq \frac{1}{1-\gamma} \onevsa$. Also, since $(\Id -  \gamma \trnsd \actpi)^{-1} \leq \frac{1}{1-\gamma}$, any state-action-wise decomposable value uncertainty can be trivially bounded as
$\vuncpid \leq \frac{1}{(1-\gamma)^2} \onev$.

We now invoke Lemma \ref{lemma:relativevalunc}. We then substitute the above bounds into the second term, after ensuring that all coefficients are positive.
\begin{align*}
\vuncpid - \vuncpipd &= (\Id -  \gamma \actpi \trnsd)^{-1} \left((\actpi -  \actpip) (\uncsa + \gamma \trnsd \vuncpipd) \right) \\
&\leq (\Id -  \gamma \actpi \trnsd)^{-1} \left(|\actpi -  \actpip|_+ (\uncsa + \gamma \trnsd \vuncpipd) \right) \\
&\leq (\Id -  \gamma \actpi \trnsd)^{-1} \left(|\actpi -  \actpip|_+ \left(\frac{1}{1-\gamma} \onevsa + \gamma \trnsd \frac{1}{(1-\gamma)^2} \onev\right) \right) \\
&= (\Id -  \gamma \actpi \trnsd)^{-1} \left(|\actpi -  \actpip|_+ \left(\frac{1}{1-\gamma} \onevsa + \frac{\gamma}{(1-\gamma)^2} \onevsa \right) \right) \\
&= (\Id -  \gamma \actpi \trnsd)^{-1} \left(\frac{1}{(1-\gamma)^2}\right) |\actpi -  \actpip|_+ \onevsa \\
&= (\Id -  \gamma \actpi \trnsd)^{-1} \left(\frac{1}{(1-\gamma)^2}\right)\tv_\sspace(\pi,\pi')
\end{align*}
The third-to-last step follows from the fact that $\trnsd$ is stochastic. The final step follows from the fact that the positive and negative components of the state-wise difference between policies must be symmetric, so $|\actpi -  \act^{\pi'}|_+ \onevsa$ is precisely equivalent to the state-wise total variation distance, $\tv_\sspace(\pi,\pi')$. Thus, we have
\begin{align*}
\vuncpid \leq \vuncpipd + (\Id -  \gamma \actpi \trnsd)^{-1} \left(\frac{1}{(1-\gamma)^2}\right)\tv_\sspace(\pi,\pi').
\end{align*}
Finally, invoking Lemma \ref{lemma:direct}, we arrive at the desired result.
\end{proof}

\subsection{Suboptimality of Uncertainty-Aware Pessimistic FDPO Algorithms}
\label{sec:directsubopt}

Let $\lowvalpid := \underline{\mathcal{E}}_{\textrm{ua}}(D, \pi)$. From the definition of the UA family, we have the fixed-point property $\lowvalpid = \actpi(\erewd  + \gamma \trnsd \lowvalpid) - \alpha\unc$, and the standard geometric series rearrangement yields $\lowvalpid = \left( \Id - \gamma \actpi\trnsd \right)^{-1} (\actpi\erewd - \alpha\unc)$. From here, we see:
\begin{align*}
\lowvalpid &= \left( \Id -  \gamma  \actpi\trnsd \right)^{-1} (\actpi\erewd - \alpha\unc) \\
&= \left( \Id -  \gamma \actpi\trnsd \right)^{-1} \actpi\erewd - \left( \Id -  \gamma \actpi\trnsd \right)^{-1} \alpha\unc \\
&= \mathcal{E}_{\textrm{na\"ive}}(D, \pi) - \alpha\vuncpid
\end{align*}
We now use this to bound overestimation and underestimation error of $\underline{\mathcal{E}}_{\textrm{ua}}(D, \pi)$ by invoking Lemma \ref{lemma:direct}, which holds with probability at least $1-\delta$. First, for underestimation, we see:
\begin{align*}
    \val_\mdp^{\pi} - 
    \lowvalpid &= \val_\mdp^{\pi} - \left(\mathcal{E}_{\textrm{na\"ive}}(D, \pi) - \alpha\vuncpid\right) \\
    &= (\val_\mdp^{\pi} - \mathcal{E}_{\textrm{na\"ive}}(D, \pi)) + \alpha\vuncpid \\
    &\le \vuncpid + \alpha\vuncpid \\
    &= (1+\alpha) \vuncpid
\end{align*}
and thus, $\val_\mdp^{\pi} - \lowvalpid \leq (1+\alpha) \vuncpid$.
Next, for overestimation, we see:
\begin{align*}
    \lowvalpid - \val_\mdp^{\pi}
     &= \left(\mathcal{E}_{\textrm{na\"ive}}(D, \pi) - \alpha\vuncpid\right) - \val_\mdp^{\pi} \\
    &= (\mathcal{E}_{\textrm{na\"ive}}(D, \pi) - \val_\mdp^{\pi}) - \alpha\vuncpid \\
    &\le \vuncpid - \alpha\vuncpid \\
    &= (1-\alpha) \vuncpid
\end{align*}
and thus, $\lowvalpid - \val_\mdp^{\pi} \leq (1-\alpha) \vuncpid$.
Substituting these bounds into Corollary \ref{cor:oposub} gives the desired result.

\subsection{Suboptimality of Proximal Pessimsitic FDPO Algorithms}
\label{sec:proximalsubopt}
Let $\lowvalpid := \underline{\mathcal{E}}_{\textrm{proximal}}(D, \pi)$. From the definition of the proximal family, we have the fixed-point property $$\lowvalpid = \actpi(\erewd  + \gamma \trnsd \lowvalpid) - \alpha \left(\frac{\tv_\sspace(\pi,\emp)}{(1-\gamma)^2}\right) $$ and the standard geometric series rearrangement yields $$\lowvalpid = \left( \Id - \gamma \actpi\trnsd \right)^{-1} \left(\actpi\erewd - \alpha \left(\frac{\tv_\sspace(\pi,\emp)}{(1-\gamma)^2}\right)\right)$$
From here, we see:
\begin{align*}
\lowvalpid &= \left( \Id -  \gamma  \actpi\trnsd \right)^{-1} \left(\actpi\erewd - \alpha \left(\frac{\tv_\sspace(\pi,\emp)}{(1-\gamma)^2}\right)\right) \\
&= \left( \Id -  \gamma \actpi\trnsd \right)^{-1} \actpi\erewd - \left( \Id -  \gamma \actpi\trnsd \right)^{-1} \alpha \left(\frac{\tv_\sspace(\pi,\emp)}{(1-\gamma)^2}\right) \\
&= \mathcal{E}_{\textrm{na\"ive}}(D, \pi) - \left( \Id -  \gamma \actpi\trnsd \right)^{-1} \alpha \left(\frac{\tv_\sspace(\pi,\emp)}{(1-\gamma)^2}\right)
\end{align*}
Next, we define a new family of FDPE algorithms, $$\underline{\mathcal{E}}_{\textrm{proximal-full}}(D, \pi) := \mathcal{E}_{\textrm{na\"ive}}(D, \pi) - \alpha \left(\vuncpipd + (\Id - \gamma \actpi \trnsd)^{-1}\left(\frac{\tv_\sspace(\pi,\emp)}{(1-\gamma)^2}\right)\right).$$
We use Lemma \ref{lemma:direct}, which holds with probability at least $1 - \delta$, to bound the overestimation and underestimation. First, the underestimation:
\begin{align*}
\valpi_\mdp - \underline{\mathcal{E}}_{\textrm{proximal-full}}(D, \pi) &= \valpi_\mdp - \left( \mathcal{E}_{\textrm{na\"ive}}(D, \pi) - \alpha\left(\vuncpipd + (\Id -  \gamma \actpi \trnsd)^{-1}\left(\frac{\tv_\sspace(\pi,\emp)}{(1-\gamma)^2}\right)\right)\right) \\
&= (\valpi_\mdp -  \mathcal{E}_{\textrm{na\"ive}}(D, \pi)) + \alpha\left(\vuncpipd + (\Id -  \gamma \actpi \trnsd)^{-1}\left(\frac{\tv_\sspace(\pi,\emp)}{(1-\gamma)^2}\right)\right) \\
&\leq \vuncpid + \alpha\left(\vuncpipd + (\Id -  \gamma \actpi \trnsd)^{-1}\left(\frac{\tv_\sspace(\pi,\emp)}{(1-\gamma)^2}\right)\right)
\end{align*}

Next, we analagously bound the overestimation:
\begin{align*}
\underline{\mathcal{E}}_{\textrm{proximal-full}}(D, \pi) - \valpi_\mdp &= \left( \mathcal{E}_{\textrm{na\"ive}}(D, \pi) - \alpha\left(\vuncpipd + (\Id -  \gamma \actpi \trnsd)^{-1}\left(\frac{\tv_\sspace(\pi,\emp)}{(1-\gamma)^2}\right)\right)\right) - \valpi_\mdp \\
&= (\valpi_\mdp -  \mathcal{E}_{\textrm{na\"ive}}(D, \pi)) - \alpha\left(\vuncpipd + (\Id -  \gamma \actpi \trnsd)^{-1}\left(\frac{\tv_\sspace(\pi,\emp)}{(1-\gamma)^2}\right)\right) \\
&\leq \vuncpid - \alpha\left(\vuncpipd + (\Id -  \gamma \actpi \trnsd)^{-1}\left(\frac{\tv_\sspace(\pi,\emp)}{(1-\gamma)^2}\right)\right)
\end{align*}

We can now invoke Corollary \ref{cor:oposub} to bound the suboptimality of any value-based FDPO algorithm which uses $\underline{\mathcal{E}}_{\textrm{proximal-full}}$, which we denote with $\underline{\mathcal{O}}^{\textrm{VB}}_{\textrm{proximal-full}}$. Crucially, note that since $\alpha\vuncpipd$ is not dependent on $\pi$, it can be removed from the infimum and supremum terms, and cancels. Substituting and rearranging, we see that with probability at least $1 - \delta$,
\begin{align*}
\subopt(\underline{\mathcal{O}}^{\textrm{VB}}_{\textrm{proximal-full}}(D)) \le \inf_\pi \left( \E_{\rho}[\val_\mdp^{\optp_\mdp} - \val_\mdp^{\pi}] + \E_{\rho}\left[ \vuncpid + \alpha(\Id -  \gamma \actpi \trnsd)^{-1}\left(\frac{\tv_\sspace(\pi,\emp)}{(1-\gamma)^2}\right) \right] \right) \\ +
\sup_\pi~\left(\E_{\rho}\left[\vuncpid - \alpha(\Id -  \gamma \actpi \trnsd)^{-1}\left(\frac{\tv_\sspace(\pi,\emp)}{(1-\gamma)^2}\right) \right]\right)
\end{align*}

Finally, we see that FDPO algorithms which use $\underline{\mathcal{E}}_{\textrm{proximal-full}}$ as their subroutine will return the same policy as FDPO algorithms which use $\underline{\mathcal{E}}_{\textrm{proximal}}$. First, we once again use the property that $\vuncpipd$ is not dependent on $\pi$. Second, we note that since the total visitation of every policy sums to $\frac{1}{1-\gamma}$, we know $\E_{\rho}\left[(\Id - \gamma \actpi \trnsd)^{-1}\left(\frac{1}{\gamma}\right)\right] = \frac{1}{(1-\gamma)^2}$ for all $\pi$, and thus it is also not dependent on $\pi$.
\begin{align*}
\argmax_{\pi} \E_{\rho}[\mathcal{E}_{\textrm{proximal-full}}(D, \pi)]
&= \argmax_{\pi}\E_{\rho}\left[\mathcal{E}_{\textrm{na\"ive}}(D, \pi) - \alpha \left(\vuncpipd + (\Id - \gamma \actpi \trnsd)^{-1}\left(\frac{\tv_\sspace(\pi,\emp)}{(1-\gamma)^2}\right)\right)\right]\\
&= \argmax_{\pi}\E_{\rho}\left[\mathcal{E}_{\textrm{na\"ive}}(D, \pi) - (\Id - \gamma \actpi \trnsd)^{-1}\alpha\left(\frac{\tv_\sspace(\pi,\emp)}{(1-\gamma)^2}\right)\right] \\
&= \argmax_{\pi}\E_{\rho}[ \mathcal{E}_{\textrm{proximal}}(D, \pi)]
\end{align*}
Thus, the suboptimality of $\argmax_{\pi}\E_{\rho}[ \mathcal{E}_{\textrm{proximal}}(D, \pi)]$ must be equivalent to that of $\argmax_{\pi}\E_{\rho}[ \mathcal{E}_{\textrm{proximal-full}}(D, \pi)]$, leading to the desired result.

\subsection{State-wise Proximal Pessimistic Algorithms}
\label{sec:swprox}

In the main body of the work, we focus on proximal pessimistic algorithms which are based on a state-conditional density model, since this approach is much more common in the literature. However, it is also possible to derive proximal pessimistic algorithms which use state-action densities, which have essentially the same properties. In this section we briefly provide the main results.

In this section, we use $d_\pi := (1-\gamma)(\Id - \gamma \actpi \trnsd)^{-1}$ to indicate the state visitation distribution of any policy $\pi$.

\begin{definition}
A \emph{state-action-density proximal pessimistic algorithm} with pessimism hyperparameter $\alpha \in [0,1]$ is any algorithm in the family defined by the fixed-point function
\begin{equation*}
    f_{\text{sad-proximal}}(\valpi) = \actpi(\erewd + \gamma \trnsd \valpi) - \alpha\left(\frac{|d_\pi - d_{\emp}|}{(1-\gamma)^2}\right)
\end{equation*}
\end{definition}

\begin{theorem}[State-action-density proximal pessimistic FDPO suboptimality bound]
Consider any state-action-density proximal pessimistic value-based fixed-dataset policy optimization algorithm $\underline{\mathcal{O}}^{\text{VB}}_{\text{sad-proximal}}$. Let $\vunc$ be any state-action-wise decomposable value uncertainty function, and $\alpha \in [0,1]$ be a pessimism hyperparameter. For any dataset $D$, the suboptimality of $\underline{\mathcal{O}}^{\text{VB}}_{\text{sad-proximal}}$ is bounded with probability at least $1 - \delta$ by \begin{align*}
\subopt(\mathcal{O}_{\text{sad-proximal}}(D)) \le 2\E_\rho[\vuncempd] + \inf_\pi \left( \E_{\rho}[\val_\mdp^{\optp_\mdp} - \val_\mdp^{\pi}] + (1+\alpha)\cdot \E_{\rho}\left[ (\Id -  \gamma \actpi \trnsd)^{-1}\left( \frac{|d_\pi-d_{\emp}|_+}{(1-\gamma)^2}\right) \right] \right) \\ +
\sup_\pi~\left((1-\alpha)\cdot\E_{\rho}\left[ (\Id -  \gamma \actpi \trnsd)^{-1}\left(\frac{|d_\pi-d_{\emp}|_+}{(1-\gamma)^2}\right) \right]\right)
\end{align*}
\end{theorem}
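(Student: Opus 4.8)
The plan is to follow the template of the proximal suboptimality proof in Appendix~\ref{sec:proximalsubopt}, substituting the state-conditional total-variation penalty with the state-action-density penalty. Write $\lowvalpid := \underline{\mathcal{E}}_{\text{sad-proximal}}(D,\pi)$. First I would use the fixed-point property $\lowvalpid = \actpi(\erewd + \gamma\trnsd\lowvalpid) - \alpha\frac{|d_\pi - d_{\emp}|}{(1-\gamma)^2}$ and the standard geometric-series rearrangement to write $\lowvalpid = \mathcal{E}_{\text{na\"ive}}(D,\pi) - \alpha(\Id - \gamma\actpi\trnsd)^{-1}\frac{|d_\pi - d_{\emp}|}{(1-\gamma)^2}$, exactly paralleling the na\"ive decomposition used for the state-conditional family.

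The crux of the argument is a \emph{density-based relative value uncertainty bound}, playing the role that Lemma~\ref{lemma:relative} plays for the state-conditional family: for any state-action-wise decomposable $\vunc$, $\vuncpid \le \vuncempd + (\Id - \gamma\actpi\trnsd)^{-1}\frac{|d_\pi - d_{\emp}|_+}{(1-\gamma)^2}$. I would prove this starting from the decomposition $\vuncpid = (\Id - \gamma\actpi\trnsd)^{-1}\actpi\uncsa$, comparing it to $\vuncempd$, and bounding the Bellman uncertainty trivially by $\uncsa \le \frac{1}{1-\gamma}\onevsa$ so that the residual is governed entirely by the visitation gap between $\pi$ and $\emp$; the difference in discounted visitations is then expressed through the positive part $|d_\pi - d_{\emp}|_+$ of the state-density difference. \textbf{This is the main obstacle:} unlike the purely local per-state total-variation penalty of Lemma~\ref{lemma:relative}, the density difference couples states across all future timesteps through the visitation distribution, so the geometric-series manipulation of Lemma~\ref{lemma:relativevalunc} must be redone to isolate $|d_\pi - d_{\emp}|_+$ rather than $\tv_\sspace(\pi,\emp)$, and care is needed to keep all coefficients non-negative when passing to the positive part.

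With this bound available, I would introduce the auxiliary ``full'' family $\underline{\mathcal{E}}_{\text{sad-proximal-full}}(D,\pi) := \mathcal{E}_{\text{na\"ive}}(D,\pi) - \alpha(\vuncempd + P_\pi)$, where $P_\pi := (\Id - \gamma\actpi\trnsd)^{-1}\frac{|d_\pi - d_{\emp}|_+}{(1-\gamma)^2}$. Invoking the na\"ive FDPE error bound (Lemma~\ref{lemma:direct}), which gives $|\valpi_\mdp - \mathcal{E}_{\text{na\"ive}}(D,\pi)| \le \vuncpid$, together with the density-based relative bound, the underestimation error $\valpi_\mdp - \underline{\mathcal{E}}_{\text{sad-proximal-full}}(D,\pi)$ is at most $\vuncempd + (1+\alpha)P_\pi$ and the overestimation error is at most $\vuncempd + (1-\alpha)P_\pi$, in direct analogy with the proximal-full computation.

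Finally I would substitute these two error bounds into Corollary~\ref{cor:oposub}. Because $\E_\rho[\vuncempd]$ carries no dependence on $\pi$, it factors out of both the infimum and the supremum, and the two copies combine into the additive $2\E_\rho[\vuncempd]$ term; the remaining pieces yield the $(1+\alpha)$-weighted infimum penalty and the $(1-\alpha)$-weighted supremum penalty. To conclude, I would show — as in Appendix~\ref{sec:proximalsubopt} — that the full family and the original $\underline{\mathcal{O}}^{\text{VB}}_{\text{sad-proximal}}$ select the same policy: the extra $\alpha\vuncempd$ term is $\pi$-independent, and any residual $\pi$-independent visitation mass drops out of the $\argmax$ since the total discounted visitation equals $\frac{1}{1-\gamma}$ for every policy. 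Equating the two suboptimalities gives the stated bound.
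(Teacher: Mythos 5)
Your overall architecture — the fixed-point/geometric-series rearrangement, a density-based relative uncertainty bound as the crux, the auxiliary ``full'' family, Lemma \ref{lemma:direct} plus Corollary \ref{cor:oposub}, and the final argmax-equivalence step — is exactly the paper's plan (the paper proves the key inequality and then states that ``the remainder follows analogously to Appendix \ref{sec:proximalsubopt}''). The gap is in how you propose to prove the crux. You plan to ``redo the geometric-series manipulation of Lemma \ref{lemma:relativevalunc} to isolate $|d_\pi - d_{\emp}|_+$,'' but that manipulation structurally cannot produce a density difference: applying $(\Id - \gamma\actpi\trnsd)$ to $\vuncpid - \vuncempd$ yields exactly $(\actpi - \act^{\emp})(\uncsa + \gamma\trnsd\vuncempd)$, a \emph{local, unweighted} policy disagreement, which is precisely why that route terminates in $\tv_\sspace(\pi,\emp)$ (Lemma \ref{lemma:relative}). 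Moreover, the state-wise inequality you would need before inverting, $(\actpi - \act^{\emp})(\uncsa + \gamma\trnsd\vuncempd) \le |d_\pi - d_{\emp}|_+/(1-\gamma)^2$, is false in general: at a state visited with probability $\epsilon$ by both $\pi$ and $\emp$ but where they take disjoint actions with different local uncertainties, the left side is order $1/(1-\gamma)$ while the right side is order $\epsilon/(1-\gamma)^2$. Since $(\Id-\gamma\actpi\trnsd)$ has negative off-diagonal entries, you cannot obtain the inverted inequality except by first establishing the state-wise one, so this route is blocked, not merely delicate.

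The paper's proof of the crux avoids the resolvent identity entirely: by state-action-wise decomposability, $\vuncpid$ and $\vuncempd$ are discounted-visitation-weighted sums of the \emph{same} $\uncsa$, so their difference equals $\frac{1}{1-\gamma}$ times the visitation difference applied to $\uncsa$; bounding by the positive part and using the trivial bound $\uncsa \le \frac{1}{1-\gamma}\onevsa$ gives $\vuncpid \le \vuncempd + |d_\pi - d_{\emp}|_+/(1-\gamma)^2$ with no $(\Id - \gamma\actpi\trnsd)^{-1}$ factor at all. This is stronger than your target inequality, which then follows because $(\Id - \gamma\actpi\trnsd)^{-1}$ dominates the identity componentwise on non-negative vectors; and your weaker version does suffice for the theorem, so once the crux is proved this way the rest of your plan goes through. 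Two smaller notes. First, your claimed underestimation bound $\vuncempd + (1+\alpha)P_\pi$ (writing $P_\pi := (\Id - \gamma\actpi\trnsd)^{-1}|d_\pi - d_{\emp}|_+/(1-\gamma)^2$) is unjustifiably tight: the penalty term $\alpha\vuncempd$ adds to underestimation, so what is provable is $(1+\alpha)(\vuncempd + P_\pi)$. This slip is harmless, since the correct bounds $(1\pm\alpha)(\vuncempd + P_\pi)$ still yield the stated theorem — the $\pi$-independent pieces $(1+\alpha)\E_\rho[\vuncempd]$ and $(1-\alpha)\E_\rho[\vuncempd]$ pull out of the infimum and supremum and sum to $2\E_\rho[\vuncempd]$. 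Second, your final argmax-equivalence argument only needs that the extra $\alpha\vuncempd$ penalty is $\pi$-independent; the remark about total visitation mass is unnecessary here.
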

\begin{proof}
First, note that for any policies $\pi,\pi'$, and any state-action-wise decomposable value uncertainty $\vunc$ with state-action-wise Bellman uncertainty $\uncsa \leq \frac{1}{1-\gamma}\onevsa$, we have
\begin{align*}
\vuncpid - \vuncpipd &= (\Id - \gamma \actpi \trnsd)^{-1}\actpi\uncsa - (\Id - \gamma \actpip \trnsd)^{-1}\actpip\uncsa \\
&= \frac{1}{1-\gamma}(d_\pi - d_{\pi'})\actpi\uncsa \\
&\leq \frac{1}{1-\gamma}|d_\pi - d_{\pi'}|_+\actpi\uncsa \\
&\leq \frac{|d_\pi - d_{\pi'}|_+}{(1-\gamma)^2}
\end{align*}
Invoking Lemma \ref{lemma:direct} we see that
$|\valpid - \valpi_\mdp| \leq \vuncpid \leq \vuncempd + \frac{|d_\pi - d_{\emp}|_+}{(1-\gamma)^2}$. The remainder of the proof follows analogously to Appendix \ref{sec:proximalsubopt}.

\end{proof}

\section{Algorithms}
\label{sec:algos}

In the main body of the text, we primarily focus on families of algorithms, rather than specific members. In this section, we provide pseudocode from example algorithms in each family. The majority of these algorithms are simple empirical extensions of well-studied algorithms, so we do not study their properties (e.g. convergence) in detail. The sets of algorithms described here are not intended to be comprehensive, but rather, a few straightforward examples to illustrate key concepts and inspire further research. To simplify presentation, we avoid hyperparameters, e.g. learning rate, wherever possible.

\subsection{Na\"ive}
\label{sec:naivealgos}

The na\"ive algorithms presented here are simply standard dynamic programming approaches applied to the empirical MDP construced from the dataset, using $\erewd,\trnsd$ in place of $\erew, \trns$. See \cite{puterman} for analysis of convergence, complexity, optimality, etc. of the general dynamic programming approaches, and note that the empirical MDP is simply a particular example of an MDP, so all results apply directly.

\begin{algorithm}[H]
\SetAlgoLined
 \textbf{Input:} Dataset $D$, policy $\pi$, discount $\gamma$. \\
 Construct $\erewd, \trnsd$ as described in Section \ref{sec:background}; \\
 $\val \leftarrow \left( \Id -  \gamma \actpi \trnsd \right)^{-1} \actpi\erewd$; \\
 \textbf{return} $\val$;
 \caption{Tabular Fixed-Dataset Policy Evaluation}
\end{algorithm}

\begin{algorithm}[H]
\SetAlgoLined
 \textbf{Input:} Dataset $D$, discount $\gamma$. \\
 Initialize $\pi$, $\val$~
 Construct $\erewd, \trnsd$ as described in Section \ref{sec:background}; \\
 \While{$\pi$ not converged}{
 $\pi \leftarrow \argmax_{\dot{\pi} \in \pispace} \act^{\dot{\pi}}( \erewd + \gamma \trnsd \val)$; \hspace{4pt} // \textit{argmax policy is state-wise max action} \\
 $\val \leftarrow \left( \Id -  \gamma \actpi\trnsd\right)^{-1} \actpi\erewd$; \\
 }
 \textbf{return} $\pi$;
 \caption{Tabular Fixed-Dataset Policy Iteration}
\end{algorithm}

\begin{algorithm}[H]
\SetAlgoLined
 \textbf{Input:} Dataset $D$, discount $\gamma$. \\
 Initialize $\pi, \val$~
 Construct $\erewd, \trnsd$ as described in Section \ref{sec:background}; \\
 \While{$\val$ not converged}{
  $\pi \leftarrow \argmax_{\dot{\pi} \in \pispace} \act^{\dot{\pi}} \val $; \hspace{4pt} // \textit{argmax policy is state-wise max action} \\
  $\val \leftarrow \erewd + \gamma \trnsd \actpi \val$; \\
 }
 \textbf{return} $\pi$;
 \caption{Tabular Fixed-Dataset Value Iteration}
\end{algorithm}

\begin{algorithm}[H]
\SetAlgoLined
 \textbf{Input:} Dataset $D$, discount $\gamma$. \\
 Initialize $\theta, \theta'$~
 \While{$\theta$ not converged}{
  \For{each $s$ in $D$}{
  $\pi(s) \leftarrow \argmax_{a \in \aspace} \qval_\theta(s,a)$ ;
  }
  \While{$\theta'$ not converged}{
   Sample $\langle s, a, r, s' \rangle$ from $D$; \\
   $L \leftarrow (r + \gamma \qval_\theta(s', \pi(s'))  - \qval_{\theta'}(s,a) )^2$; \\
   $\theta' \leftarrow \theta' - \nabla_{\theta'} L$;
  }
  $\theta \leftarrow \theta'$
 }
 \textbf{return} $\pi$;
 \caption{Neural Fixed-Dataset Value Iteration}
\end{algorithm}

\subsection{Pessimistic}
In this section, we provide pseudocode for concrete implementations of algorithms in the pessimistic families we have discussed. Since the algorithms are not the focus of this work, we do not provide an in-depth analysis of these algorithms. We briefly note that, at a high level, the standard proof technique used to show convergence of policy iteration \citep{rl} can be applied to all of these approaches. These algorithms utilize a penalized Bellman update, and the penalty is state-wise and independent between states. Thus, performing a local greedy policy improvement in any one state will strictly increase the value of all states. Thus, policy iteration guarantees strict monotonic improvement of the values of all states, and thus eventual convergence to an optimal policy (as measured by the penalized values).

\subsubsection{Uncertainty-Aware}
\label{sec:directalgo}
In this section, we provide pseudocode for a member of the the UA pessimsitic family of algorithms.

\begin{algorithm}[H]
\SetAlgoLined
 \textbf{Input:} Dataset $D$, discount $\gamma$, error rate $\delta$, pessimism parameter $\alpha$. \\
 Initialize $\val, \pi$~
 Construct $\erewd, \trnsd, \countdsa$ as described in Section \ref{sec:background}; \\
 Compute $\unc$ as described in Appendix \ref{sec:unc}; \\
 \While{$\pi$ not converged}{
  $\pi \leftarrow \argmax_{\dot{\pi} \in \pispace} \act^{\dot{\pi}} (\val - \alpha\uncsa^{\dot{\pi}}) $;  \hspace{4pt} \\
  $\val \leftarrow \left( \Id -  \gamma \actpi \trnsd \right)^{-1} (\actpi\erewd - \alpha\unc)$; \\
 }
 \textbf{return} $\pi$;
 \caption{Tabular Uncertainty-Aware Pessimistic Fixed-Dataset Policy Iteration}
\end{algorithm}

\begin{algorithm}[H]
\SetAlgoLined
 \textbf{Input:} Dataset $D$, discount $\gamma$, pessimism hyperparameter $\alpha$. \\
 Initialize $\theta, \theta', \Psi$;\\
 \While{$\theta$ not converged}{
  // update policy \\
  \While{$\Psi$ not converged}{
    Sample $\langle s, \cdot, \cdot, \cdot \rangle$ from $D$; \\
    $R \leftarrow \E_{a \sim \pi_\Psi(\cdot|s)}[\qval_\theta(s,a)] - \alpha\uncsa^{\pi_\Psi}(s)$; \\
    $\Psi' \leftarrow \Psi' + \nabla_{\theta'} R$;
  }
  // update value function \\
  \While{$\theta'$ not converged}{
   Sample $\langle s, a, r, s' \rangle$ from $D$; \\
   $L \leftarrow (r + \gamma \left(\E_{a' \sim \pi_\Psi(\cdot|s')}[\qval_\theta(s',a')] - \alpha\uncsa^{\pi_\Psi}(s')\right) - \qval_{\theta'}(s,a) )^2$; \\
   $\theta' \leftarrow \theta' - \nabla_{\theta'} L$;
  }
  $\theta \leftarrow \theta'$
 }
 \textbf{return} $\pi_\Psi$;
 \caption{Neural Uncertainty-Aware Pessimistic Fixed-Dataset Value Iteration}
\end{algorithm}

As discussed in the main text, count-based Bellman uncertainty functions, such as those derived from concentration inequalities in Appendix \ref{sec:saunc}, do not apply to non-tabular environments. The correct way to compute epistemic uncertainty with neural networks is still an open question. Therefore, our pseudocode for a neural implementation of the UA pessimistic approach is in some sense incomplete: a full implementation would need to specify a technique for computing $\unc$. We hope that future work will identify such a technique.

\subsubsection{Proximal}
\label{sec:proxalgo}
In this section, we provide pseudocode for a member of the proximal pessimsitic family of algorithms.

\begin{algorithm}[H]
\SetAlgoLined
 \textbf{Input:} Dataset $D$, discount $\gamma$, pessimism parameter $\alpha$. \\
 Initialize $\val, \pi$~
 Construct $\erewd, \trnsd, \countdsa$ as described in Section \ref{sec:background}; \\
 \While{$\pi$ not converged}{
  $\pi \leftarrow \argmax_{\dot{\pi} \in \pispace} \act^{\dot{\pi}} (\val - \frac{\alpha}{2(1-\gamma)^2}|\dot{\pi} - \emp|) $;  \hspace{4pt} \\
  $\val \leftarrow \left( \Id -  \gamma \actpi \trnsd \right)^{-1} (\actpi\erewd - \frac{\alpha}{2(1-\gamma)^2}|\dot{\pi} - \emp|)$; \\
 }
 \textbf{return} $\pi$;
 \caption{Tabular Proximal Pessimistic Fixed-Dataset Policy Iteration}
\end{algorithm}

\begin{algorithm}[H]
\label{algo:neurpess}
\SetAlgoLined
 \textbf{Input:} Dataset $D$, discount $\gamma$, pessimism hyperparameter $\alpha$. \\
 Initialize $\theta, \theta', \Psi$;\\
 \While{$\theta$ not converged}{
  // update policy \\
  \While{$\Psi$ not converged}{
    Sample $\langle s, \cdot, \cdot, \cdot \rangle$ from $D$; \\
    $R \leftarrow \E_{a \sim \pi_\Psi(\cdot|s)}[\qval_\theta(s,a) - \frac{\alpha}{2(1-\gamma)^2}|\pi_\Psi(s,a) - \emp(s,a)|]$; \\
    $\Psi' \leftarrow \Psi' + \nabla_{\theta'} R$;
  }
  // update value function \\
  \While{$\theta'$ not converged}{
   Sample $\langle s, a, r, s' \rangle$ from $D$; \\
   $L \leftarrow (r + \gamma \E_{a' \sim \pi_\Psi(\cdot|s')}\left[\qval_\theta(s',a') - \frac{\alpha}{2(1-\gamma)^2}|\pi_\Psi(s',a') - \emp(s',a')|\right] - \qval_{\theta'}(s,a) )^2$; \\
   $\theta' \leftarrow \theta' - \nabla_{\theta'} L$;
  }
  $\theta \leftarrow \theta'$
 }
 \textbf{return} $\pi$;
 \caption{Neural Proximal Pessimistic Fixed-Dataset Value Iteration, Discrete Action Space}
\end{algorithm}

One important nuance of proximal algorithms is that the optimal policy may not be deterministic, since the penalty term is minimized when the policy matches the empirical policy, which may itself be stochastic. It is not enough to select $\pi_{t+1}(s) = \argmax_{a \in \aspace} \val(s,a)$; we must instead select $\pi_{t+1}(\cdot|s) = \sup_{\pi \in \pispace} \val^\pi(s) = \sup_{\pi \in \pispace} \sum_{a \in \aspace} \pi(a|s) \val(s,a) - \frac{\alpha}{2(1-\gamma)^2}|\pi(a|s) - \emp(a|s)|$, which is a more difficult optimization problem. Fortunately, it has a closed-form solution.
\begin{proposition}
  Consider any state-action values $\qval$ and empirical policy $\emp$. Let $$z := \max_{a \in \aspace} \qval(\sa) - \frac{\alpha}{(1-\gamma)^2}$$. The policy $\pi_{\textrm{localopt}}$ given by
  \[
    \pi_{\textrm{localopt}}(a \mid s) = \Big\{\begin{array}{lr}
        \emp(a \mid s) + (1 - \sum_{a' \textrm{ s.t. } \qval(\langle s,a' \rangle) > z} \emp(a' \mid s)) & \text{if } a = \argmax_{a' \in \aspace} \qval(\sa), \\
        \emp(a \mid s) & \text{if } \qval(\sa) > z,\\
        0 & \text{otherwise.}
        \end{array}
  \]
  has the property
  \[
  \sum_{a \in \aspace} \pi_{\textrm{localopt}}(a|s) \qval(\sa) - \frac{\alpha}{2(1-\gamma)^2}|\pi_{\textrm{localopt}}(a|s) - \emp(a|s)| \geq \sum_{a \in \aspace} \pi(a|s) \qval(\sa) - \frac{\alpha}{2(1-\gamma)^2}|\pi(a|s) - \emp(a|s)|
  \]
  for all $s \in \sspace, \pi \in \pispace$.
\end{proposition}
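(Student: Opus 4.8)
The plan is to fix an arbitrary state $s$ and treat the claim as a finite-dimensional concave maximization over the action simplex. Abbreviating $q_a := \qval(\sa)$, $\mu_a := \emp(a \mid s)$, $p_a := \pi(a \mid s)$, and $\beta := \frac{\alpha}{2(1-\gamma)^2}$, the quantity to be maximized is $J(p) := \sum_{a \in \aspace} \big( p_a q_a - \beta\,|p_a - \mu_a| \big)$ subject to $p \ge 0$ and $\sum_a p_a = 1$. Each summand $p_a q_a - \beta|p_a - \mu_a|$ is concave in $p_a$, so $J$ is concave, and the feasible set is convex; hence it suffices to exhibit a supergradient of $J$ at the candidate point $p^\star := \pi_{\textrm{localopt}}(\cdot \mid s)$ together with a dual certificate showing that no feasible direction increases $J$. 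I would assume $\alpha > 0$ (equivalently $\beta > 0$) throughout, noting that $\alpha = 0$ collapses the penalty and reduces the statement to ordinary greedy maximization, where the claim is immediate; with $\beta > 0$ one has $\max_a q_a > z$, so the maximizing action $a^\star$ indeed belongs to $\{a : q_a > z\}$ and $p^\star$ is a well-defined distribution.

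The certificate is a multiplier $\lambda := \max_a q_a - \beta$ for the normalization constraint together with nonnegativity multipliers $\nu_a \ge 0$ for the constraints $p_a \ge 0$. For each coordinate I choose an element $\sigma_a \in \partial |p^\star_a - \mu_a| \subseteq [-1,1]$ of the subdifferential and form the supergradient entry $g_a := q_a - \beta \sigma_a$, and I must verify the stationarity relation $g_a = \lambda - \nu_a$ together with complementary slackness $\nu_a p^\star_a = 0$. Partition the actions into three groups according to the definition of $p^\star$: the maximizer $a^\star$ (where $p^\star_{a^\star} \ge \mu_{a^\star}$, so we may take $\sigma_{a^\star} = 1$); the \emph{kept} actions $a \ne a^\star$ with $q_a > z$ (where $p^\star_a = \mu_a$, so $\sigma_a$ is free in $[-1,1]$); and the \emph{dropped} actions with $q_a \le z$ (where $p^\star_a = 0$). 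For $a^\star$ one takes $\nu_{a^\star} = 0$ and reads off $g_{a^\star} = q_{a^\star} - \beta = \lambda$. For a dropped action one takes $\sigma_a = -1$ and $\nu_a = z - q_a$, which is nonnegative exactly because $q_a \le z$, and complementary slackness holds since $p^\star_a = 0$.

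The crux is the kept actions, and it is exactly here that the threshold $z = \max_a q_a - 2\beta$ is forced. Setting $\nu_a = 0$, stationarity demands $q_a - \beta \sigma_a = \lambda$, i.e. $\sigma_a = 1 - \tfrac{\max_a q_a - q_a}{\beta}$; this lies in $[-1,1]$ if and only if $0 \le \max_a q_a - q_a \le 2\beta$, i.e. if and only if $z \le q_a \le \max_a q_a$, which is precisely the membership condition defining a kept action. Thus a valid supergradient exists in every coordinate. To finish, I would invoke the supergradient inequality $J(p) \le J(p^\star) + \sum_a g_a (p_a - p^\star_a)$ and compute, using $g_a = \lambda - \nu_a$ and $\sum_a p_a = \sum_a p^\star_a = 1$, that $\sum_a g_a(p_a - p^\star_a) = -\sum_a \nu_a p_a \le 0$ for every feasible $p$ (the term carrying $\lambda$ cancels by normalization and $\sum_a \nu_a p^\star_a = 0$ by complementary slackness). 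Hence $J(p) \le J(p^\star)$ for all feasible $p$, which is the desired inequality.

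The main obstacle, and the only place requiring care, is the nondifferentiability of $|p_a - \mu_a|$ at the kink $p_a = \mu_a$: a single multiplier $\lambda$ must be made consistent across all kept actions by selecting the right subgradient in each, and this is feasible precisely on the interval $[z, \max_a q_a]$, which is what pins down the constant $\frac{\alpha}{(1-\gamma)^2}$ in the definition of $z$. Beyond this, I would briefly dispatch the degenerate cases: actions with $\mu_a = 0$ (the same choices of $\sigma_a$ work, since the subdifferential at the kink is all of $[-1,1]$ and complementary slackness is automatic when $p^\star_a = 0$), ties for the maximum of $q$ (any tied action other than the designated $a^\star$ is a kept action with $q_a = \max_a q_a$, giving $\sigma_a = 1$), and the already-noted boundary $\alpha = 0$.
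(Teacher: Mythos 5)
Your proof is correct, but it takes a genuinely different route from the paper's. The paper argues by local exchanges: it considers an arbitrary competitor policy, decomposes its deviation from $\pi_{\textrm{localopt}}$ into pairwise transfers of probability mass ($+x$ on one action, $-x$ on another), and checks case-by-case that moving mass between any two of the three action categories (argmax, kept, dropped) can only decrease the penalized return; general deviations are then handled by writing them as sums of such pairs. You instead recognize the problem as a separable concave maximization over the simplex and produce a KKT-style dual certificate: a multiplier $\lambda = \max_a \qval(\sa) - \beta$ for normalization, slack multipliers $\nu_a = z - \qval(\sa) \ge 0$ on dropped actions, and a careful selection of subgradients $\sigma_a \in [-1,1]$ of the kink $|p_a - \emp(a|s)|$ on kept actions, followed by the supergradient inequality. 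Your approach buys rigor and uniformity: the verification I checked goes through in every case (including $\mu_a = 0$, ties, and the $p^\star_{a^\star} = \mu_{a^\star}$ boundary), and it isolates exactly where the threshold $z = \max_a \qval(\sa) - \frac{\alpha}{(1-\gamma)^2}$ comes from — it is precisely the condition under which the required subgradient lies in $[-1,1]$. By contrast, the paper's exchange argument is more elementary and intuitive but is only an outline; in particular, its reduction from many-coordinate deviations to pairwise ones silently relies on concavity (decreasing increments) to ensure the total change is dominated by the sum of pairwise changes, a point your certificate argument sidesteps entirely since the supergradient inequality handles arbitrary feasible $p$ in one stroke. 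You are also right to flag that at $\alpha = 0$ the displayed formula for $\pi_{\textrm{localopt}}$ fails to be a distribution (it sums to $1 + \emp(a^\star|s)$), an edge case the paper does not address; restricting to $\alpha > 0$, or reading the $\alpha=0$ case as plain greedy maximization as you do, is the sensible resolution.
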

\begin{proof}
We provide a brief outline of the proof. Consider any policy $\pi \neq \pi_{\textrm{localopt}}$ in some state $s$. First, assume that exactly two cells of $\pi(s) - \pi_{\textrm{localopt}}(s)$ are non-zero, meaning that one term is $x$ and the other is $-x$ (since both distributions sum to 1). If $|x| > 0$, the change in penalized return is non-positive, so $\pi$ is worse that $\pi_{\textrm{localopt}}$. To see this, we simply consider all possible mappings between $x, -x$ and actions. Actions fall into three categories, corresponding to the three cases of the construction: argmax-actions, empirical-actions, and zero-actions, respectively. It's easy to check each case, moving probability mass from any category of action to any other, and see that penalized return is always non-positive. Finally, if more than two cells of $\pi(s) - \pi_{\textrm{localopt}}(s)$ are non-zero, we can always rewrite it as a sum of positive/negative pairs, and thus the overall change in penalized return is a sum of non-positive terms, and is itself non-positive.
\end{proof}

This closed-form can be used anytime the action space is discrete, in both the tabular and neural proximal algorithms. (In the neural algorithm, it replaces the need to optimize $\Psi$ for a parameterized policy $\pi_\Psi$.)

\section{Related Work}
\label{sec:related}

\paragraph{Bandits.} Bandits are  equivalent to single-state MDPs, and the fixed-dataset setting has been studied in the bandit literature as the \textit{logged bandit feedback} setting. Swaminathan \& Joachims \citep{swaminathan2015batch} describe the Counterfactual Risk Minimization principle, and propose algorithms for maximizing the exploitation-only return. The UA pessimistic approach discussed in this work can be viewed as an extension of these ideas to the many-state MDP setting.

\paragraph{Approximate dynamic programming.} Some work in the \textit{approximate dynamic programming (ADP)} literature \citep{munos2007per} bears resemblance to ours. The setting is very similar; both FDPO and ADP are a modified version of dynamic programming in which errors are introduced, and research on algorithms in these settings studies the emergence and propagation of those errors. The key difference between the two settings lies in the origin of the errors. In ADP, the source of these errors is unspecified. In contrast, in this work, we focus specifically on statistical errors: where they emerge and what their implications are.

\paragraph{Off-policy evaluation and optimization.} The problem of off-policy reinforcement learning is typically posed as the setting where the data is being collected by a fixed stationary policy, but an infinite stream of data is being collected, so statistical issues introduced by finiteness of data can be safely ignored. This is clearly closely related to FDPO, but the focus of our work lies precisely on the statistical issues. However, there are close connections; for example, Jiang and Huang \cite{jiang2020minimax} show that in the off-policy function approximation setting, algorithms may obey a version of the pessimism principle in order to select a good approximation.

\paragraph{Exploration.} Since acting pessimistically is the symmetric opposite of acting optimistically, many papers which study exploration leverage math which is nearly identical (though of course used in an entirely different way). For example, the confidence intervals, modified Bellman equation, and dynamic programming approach of \cite{strehl2008analysis} closely mirror those used in the UA pessimistic algorithms.

\paragraph{Imitation learning.} Imitation learning (IL) \citep{imitate} algorithms learn a policy which mimics expert demonstrations. The setting in which these algorithms can be applied is closely related to the FDPO setting, in that IL algorithms map from a dataset of trajectories to a single stationary policy, which is evaluated by its suboptimality. (However, note that IL algorithms can be applied to a somewhat more general class of problems; for example, when no rewards are available.) In the FDPO setting, IL algorithms are limited by the fact that they can never improve on the quality of the empirical policy. In contrast, pessimistic FDPO algorithms will imitate the dataset when other actions are uncertain, but are also sometimes able to deviate and improve.

\paragraph{Safe reinforcement learning.} The sub-field of safe reinforcement learning studies reinforcement learning algorithms with constraints or guarantees that prevent bad behavior from occurring, or reduce it below a certain level \citep{thomas2019preventing}. To this end, many algorithms utilize variants of pessimism. Our contribution distinguishes itself from this line of work via its focus on the application of pessimism for worst-case guarantees on the more standard objective of expected suboptimality, rather than for guarantees on safety. However, there is a close relationship between our work and the algorithms and theory used for Safe RL. One popular framework is that of robust MDPs \cite{givan00,nilim2005robust,weissman03,iyengar2005robust}, an object which generalizes MDPs to account for specification uncertainty in the transition functions. This framework is closely related to the pessimistic approaches described in this work. In fact, the UA pessimistic approach is precisely equivalent to constructing a robust MDP from data using concentration inequalities, and then solving it for the policy with the optimal performance under an adversarial choice of parameters; this algorithm has been used as a baseline in the literature but not studied in detail \cite{petrik16, spibb}. Additionally, the sub-problem of Safe RL known as \textit{conservative policy improvement} focuses on making small changes to a baseline policy which guarantee that the new policy will have higher value \cite{thomas2015high, thomas2015high2,  petrik16, spibb, simao2019safe, nadjahi2019safe}, which bears strong resemblance to the proximal pessimistic algorithms discussed in this work.

\paragraph{Apprenticeship learning.} In the apprenticeship-learning setting \citep{walsh2011blending,cohen2020pessimism}, an agent has the choice to take an action of its own selection, or to imitate a (potentially non-optimal) mentor. The process of determining whether or not to imitate a mentor is similar to determining whether or not to imitate the behavior policy used to collect the data. As a result, the underlying principle of pessimism is relevant in both situations. However, the setting of apprenticeship learning is also different from that of FDPO in several ways (availability of a mentor, ability to collect data), and so prior works do not provide a clear analysis of the importance of pessimism in FDPO.

\paragraph{Proximal algorithms.} Several approaches to online deep reinforcement learning, including TRPO and PPO, have been described as ``proximal'' \citep{trpo,ppo}. These algorithms are derived from CPI \citep{cpi}, which also introduces a notion of conservatism. In that body of work, this refers to the notion that, after each policy update, the resulting policy is close to the previous iterate. In contrast, the proximal algorithm described in this work is proximal with respect to a fixed policy (typically, the empirical policy defined by the dataset). The contrast between the importance of pessimism in these two cases can be seen by comparing the settings. CPI-type algorithms use pessimism to ensure that the policy remains good throughout the RL procedure, guaranteeing a monotonically-increasing performance as data is collected and the optimal policy is approached. Whereas, FDPO proximal approaches have no guarantees on the intermediate iterates, but rather, use pessimism to ensure that the final policy selected is as close to optimal as possible.

\paragraph{Deep learning FDPO approaches.} Recently, deep learning FDPO has received significant attention \citep{agarwal2019striving,bcq,bear,spibb,klc,brac,morel,mopo,crr,cql,liu2020provably}. At a high level, these works are each primarily focused around proposing and analyzing some specific method. In contrast, the objective of this work is theoretical: we focus on providing a clean mathematical framework through which to understand this setting. We now provide specific details for how our contribution relates to each of these works.

The algorithms introduced in \cite{bcq,spibb,bear,klc,liu2020provably,cql} can all be viewed as variants of the proximal pessimistic approach described in this paper. The implementations vary in a number of ways, but at its core, the primary difference between these algorithms lies in the choice of regularizer: KL, MMD, scalar penalty, or hard constraint. This connection has also been noted in \cite{brac}, which provides a unifying algorihtmic framework for these approaches, BRAC, and also performs various empirical ablations. Interestingly, all of these regularizers can be expressed as upper-bounds to $\tv_\sspace(\pi,\emp)$, and as such, our proof of the suboptimality of proximal pessimistic algorithms can be used to justify all of these approaches. One aspect of our contribution is therefore providing the theoretical framework justifying BRAC. Furthermore, conceptually, these works differ from our contribution due to their focus on error propagation; in contrast, our results show that poor suboptimality of na\"ive algorithms is an issue even when there is no function approximation error at all.

\cite{crr} provides an alternative algorithm for pessimistic FDPO, utilizing a policy-gradient based algorithm instead of the value-iteration-style algorithms in other related work. Similarly to \cite{bear}, this approach utilizes a policy constraint to prevent boostrapping from actions which are poorly represented in the training data. However, this difference is purely algorithmic: since the fixed-point found by the optimization is the same, this algorithm can also be justified by our proximal pessimistic suboptimality bounds.

Model-based methods \citep{mopo, morel} have recently been proposed which implement pessimism via planning in a pessimistic model. This procedure can be viewed as learning a model which defines an implicit value function (derived from applying the planner to the model), and from there defines an implicit policy (which is greedy with respect to the implicit value function). The implicit value functions learned by the algorithms in \cite{mopo, morel} obey the same fixed-point identity as the solutions to the UA pessimistic approach discussed in our work. Thus, both of these works are implementations of UA pessimistic approaches, and our UA pessimistic suboptimality bound can be viewed as justification for this family of algorithms. However, both of these works rely on ensembles to compute the uncertainty penalty, which is not a theoretically well-motivated technique.

The work of \cite{agarwal2019striving} provides empirical evidence supporting the claims in this paper. The authors demonstrate that on realistic tasks like Atari, na\"ive algorithms are able to get good performance given an extremely large and diverse dataset, but they collapse on smaller or less exploratory datasets. It also highlights the difficulties that emerge when function approximation is introduced. Certain na\"ive algorithms can be seen to still sometimes collapse when datasets are large and diverse, because of other instabilities associated with function approximation.

\section{Experimental Details}
\label{sec:experimentsetup}

\subsection{Tabular}

The first set of experiments utilize a simple tabular gridworld. The state space is 8x8, and the action space is \{\textsc{Up}, \textsc{Down}, \textsc{Left}, \textsc{Right}\}. Rewards are Bernoulli-distributed, with the mean reward for each state-action sampled from Beta(3,1); transitions are stochastic, moving in a random direction with probability 0.2; the discount is .99. This environment was selected to be simple and generic. We compare the performance of four approaches: imitation, na\"ive, uncertainty-aware pessimistic, and proximal pessimistic. The imitation algorithm simply returns the policy which takes actions in proportion to their observed frequencies in the dataset. For the UA pessimistic algorithm, we use the technique described in Appendix \ref{sec:saunc} to implement Bellman uncertainty functions. For both pessimistic algorithms, we absorb all constants into the hyperparameter $\alpha$, which we selected to be $\alpha=1$ for both algorithms by a simple manual search. For state-actions with no observations, we select $\erewd$ uniformly at random in $[0,1]$, $\trnsd$ transitions uniformly at random, and $\emp$ acts uniformly at random. We report the average of 1000 trials. The shaded region represents a 95\% confidence interval.

\subsection{Deep Learning}
\label{sec:deeplearnexp}

The second setting we evaluate on consists of four environments from the MinAtar suite \citep{minatar}. In order to derive a near-optimal policy on each environment, we run DQN to convergence and save the resulting policy. We report the average of 3 trials. The shaded area represents the range between the maximum and minimum values.

We implement deep learning versions of the above algorithms in the style of Neural Fitted Q-Iteration \citep{riedmiller2005neural}.
In this setting, we implement only proximal pessimistic algorithms. To compute the penalty term, we must approximate $\emp$; this can be done by training a policy network on the dataset to predict actions via maximum likelihood. Just as in the tabular setting, we absorb all constant coefficients into our pessimism hyperparameter, here setting $\alpha=.25$.

All experiments used identical hyperparameters. Hyperparameter tuning was done on just two experimental setups: \textsc{Breakout} using $\epsilon=0$, and \textsc{Breakout} using $\epsilon=1$. Tuning was very minimal, and done via a small manual search.

\section{Additional Content}

\subsection{An Illustrative Example}
\label{sec:example}

\begin{figure}%
\centering
\includegraphics[width=2.5in]{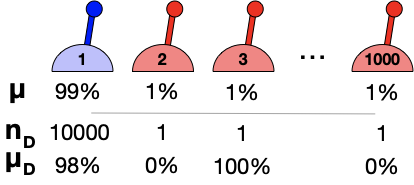}%
\caption{Bandit-like MDP, with accompanying dataset. $\mu$ gives the true mean of each action. $n$\textsubscript{D} gives the counts of the pulls used to construct dataset $D$, and $\mu$\textsubscript{D} gives our empirical estimate of the mean reward. On this problem, any algorithm that selects the action with the highest empirical mean reward will almost always pick a suboptimal action. In contrast, a pessimistic algorithm, which selects the action with the highest lower bound, will almost always pick the correct action.}%
\label{fig:bandit}%
\end{figure}

Consider the following problem setting. We are given an MDP with a single state and some number of actions, each of which return rewards sampled from a Bernoulli distribution on $\{0,1\}$. (An MDP with a single state is isomorphic to a multi-armed bandit.) Furthermore, for each action, we are given a dataset containing the outcomes of some number of pulls. We are now given the opportunity to take an action, with the goal of maximizing our expected reward. What strategy should we use?

One obvious strategy is to estimate the expected reward for each action by computing its mean reward in the dataset, and then select the arm with the highest empirical reward. However, in certain problem instances, this ``na\"ive strategy'' fails. We can illustrate this with a simple example (visualized in Figure \ref{fig:bandit}).
Consider an MDP with a single state and 1000 actions. Let the reward distribution of the first action have mean of 0.99, while the reward distributions of all other actions have means of 0.01. Construct a dataset for this bandit by pulling the first arm 10000 times, and the other arms 1 time each.

Although this problem seems easy, the na\"ive algorithm achieves close to the worst possible performance. It's clear that in this problem, the best policy selects the first action. The empirical estimate of the mean of the first action will be less than 1 with probability $1 - 2 \times 10^{-44}$, and there will be at least one other action which has empirical mean of 1 with probability $1 - 4 \times 10^{-5}$. Thus, the first action is almost never selected.

This issue can be resolved by identifying a fundamental failing of the na\"ive approach: it ignores epistemic uncertainty around the expected return of each action. In order to guarantee good performance on all problem instances, we need to avoid playing actions that we are uncertain about. One way to do this is to construct a high-probability lower bound on the value of each action (for example using concentration inequalities), and select the action with the highest lower bound. If we consider the upper and lower bounds as defining the set of possible ``worlds'' that we could be in, acting according to the lower bound of every arm means acting as though we are in the worst possible world. In other words: being pessimistic.

The above example may seem somewhat contrived, due to the enormous size of the action space and skewed data collection procedure. However, we argue that it serves as a good analogy for the more common setting of an MDP with many states and a small number of actions at each state. Roughly speaking, selecting an action in a one-state MDP is analogous to selecting a deterministic policy in a multi-state MDP, and the number of policies is exponentially large in the size of the state space. Additionally, it's very plausible in practical situations that data is collected according to only a small set of similar policies, e.g. expert demonstrations, leading to skewed data coverage.

\subsection{Extreme Pessimsim: Value Lower Bound Algorithms}
\label{sec:vallowbdiscuss}

We begin with a further corollary to Corollary \ref{cor:oposub}. Consider any value-based FDPO algorithm whose value function is guaranteed to always return values which underestimate the expected return with high probability.
\begin{corollary}[Value-lower-bound-based FDPO suboptimality bound]
Consider any value-based fixed-dataset policy optimization algorithm $\underline{\mathcal{O}}^{\text{VB}}$, with internal fixed-dataset policy evaluation subroutine $\underline{\mathcal{E}}$, which has the lower-bound guarantee that $\underline{\mathcal{E}}(D, \pi) \leq \val_\mdp^\pi$ with probability at least $1 - \delta$. For any policy $\pi$, dataset $D$, denote $\lowvalpid := \underline{\mathcal{E}}(D, \pi)$. With probability at least $1 - \delta$ , the suboptimality of $\underline{\mathcal{O}}^{\text{VB}}$ is bounded by $$\subopt(\underline{\mathcal{O}}^{\text{VB}}(D)) \le \inf_\pi \left( \E_{\rho}[\val_\mdp^{\optp_\mdp} - \val_\mdp^{\pi}] + \E_{\rho}[\val_\mdp^{\pi} - \lowvalpid] \right)$$
\label{cor:pesssub}
\end{corollary}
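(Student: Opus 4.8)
The plan is to invoke Corollary \ref{cor:oposub} directly and then use the lower-bound hypothesis on $\underline{\mathcal{E}}$ to discard the overestimation (supremum) term entirely. Applying Corollary \ref{cor:oposub} to the algorithm $\underline{\mathcal{O}}^{\text{VB}}$ with its subroutine $\underline{\mathcal{E}}$, and writing $\lowvalpid := \underline{\mathcal{E}}(D,\pi)$ in place of $\valpid$, yields for every dataset $D$
$$\subopt(\underline{\mathcal{O}}^{\text{VB}}(D)) \le \inf_\pi \left( \E_{\rho}[\val_\mdp^{\optp_\mdp} - \val_\mdp^{\pi}] + \E_{\rho}[\val_\mdp^{\pi} - \lowvalpid] \right) + \sup_\pi \Bigl( \E_{\rho}[\lowvalpid - \valpi_\mdp] \Bigr).$$
Because Corollary \ref{cor:oposub} is a plain instantiation of the deterministic Theorem \ref{thm:proxyreg}, this inequality carries no failure probability of its own; the only randomness in the final statement will enter through the lower-bound guarantee.

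Next I would condition on the event — which by assumption has probability at least $1-\delta$ — that $\lowvalpid \le \valpi_\mdp$ holds simultaneously for all policies $\pi$. On this event each quantity $\E_{\rho}[\lowvalpid - \valpi_\mdp]$ is the $\rho$-weighted average of a componentwise non-positive vector, hence is itself non-positive, so $\sup_\pi \E_{\rho}[\lowvalpid - \valpi_\mdp] \le 0$. Dropping this non-positive supremum term from the right-hand side only weakens the inequality, which leaves precisely the claimed bound
$$\subopt(\underline{\mathcal{O}}^{\text{VB}}(D)) \le \inf_\pi \left( \E_{\rho}[\val_\mdp^{\optp_\mdp} - \val_\mdp^{\pi}] + \E_{\rho}[\val_\mdp^{\pi} - \lowvalpid] \right)$$
on the same event, i.e. with probability at least $1-\delta$.

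The argument is short because all the substantive work is already packaged into Corollary \ref{cor:oposub}; this corollary is essentially the remark that a one-sided (lower-bound) value estimate forces the overestimation term — the term responsible for worst-case blowups in the na\"ive analysis — to vanish. The only point requiring genuine care is the probabilistic bookkeeping: one must read the lower-bound guarantee as a single event holding uniformly over all $\pi$, matching the uniform-over-policies convention used for the uncertainty functions in Definition \ref{def:vunc}, so that the $1-\delta$ confidence of the hypothesis transfers directly to the conclusion with no union bound and no additional estimation or tightness step.
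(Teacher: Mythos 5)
Your proposal is correct and follows exactly the paper's own argument: the paper likewise invokes Corollary \ref{cor:oposub} and drops the supremum term using the fact that $\lowvalpid - \val_\mdp^\pi \leq \mathbf{0}$ on the high-probability event. Your additional care about the lower-bound guarantee holding uniformly over policies is a faithful (and slightly more explicit) reading of the same one-line proof.
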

\vspace{-18pt}
\begin{proof} This result follows directly from Corollary \ref{cor:oposub}, using the fact that $\lowvalpid - \val_\mdp^\pi \leq \mathbf{0}$.
\end{proof}
This bound is identical to the term labeled $\textsc{(a)}$ from Corollary \ref{cor:oposub}. The term labeled $\textsc{(b)}$, which contained a supremum over policies, has vanished, leaving only the term containing an infimum.
Where the bound of Corollary \ref{cor:oposub} demands that some condition hold for all policies, we now only require that there exists any single suitable policy. It is clear that this condition is much easier to satisfy, and thus, this suboptimality bound will typically be much smaller.

A value lower-bound algorithm is in some sense the most extreme example of a pessimistic approach. Any algorithm which penalizes its predictions to decrease overestimation can be described as pessimistic. In such cases, $\textsc{(b)}$ will be decreased, rather than removed entirely. Still, any amount of pessimism reduces dependence on a global condition, decreasing overall suboptimality.

Furthermore, blind pessimism is not helpful. For example, one could trivially construct a value lower-bound algorithm from a na\"ive algorithm by simply subtracting a large constant from the na\"ive value estimate of every policy. But this would achieve nothing, as the infimum term would immediately increase by this same amount. To yield a productive change in policy, pessimism must instead vary across states in an intelligent way.

\subsection{Practical Considerations}
\label{sec:practicalcons}

Through the process of running experiments in the deep learning setting, the authors noted that several aspects of the experimental setup, which have not been addressed in previous work, had surprisingly large impacts on the results. In this section, we informally report some of our findings, which future researchers may find useful. Additionally, we hope these effects will be studied more rigorously in future work.

The first consideration is that performance is highly nonmonotonic with respect to training time. In almost all experiments, it was observed that with every target-network update, the performance of the policy would oscillate wildly. Even after performing many Bellman updates, few experiments showed anything resembling convergence. It is therefore important that the total number of steps be selected beforehand, to avoid unintentional cherry-picking of results. Additionally, one common trend was for algorithms to have high performance early on, and then eventually crash. For this reason, it is important that algorithms be run for a long duration, in order to be certain that the conclusions drawn are valid. 

The second consideration is the degree to which the inner-loop optimization process succeeds. If throughout training, whenever we update the target network, its error is low, convergence near to the fixed point is guaranteed \citep{antos2007value}. However, computational restrictions force us to make tradeoffs about the degree to which this condition is satisfied. Experimentally, we found this property to be very important: when the error was not properly minimized, performance was negatively impacted, sometimes even leading to divergence. There are three notable algorithmic decisions which we found were required to ensure that the error was adequately minimized.

\textbf{The size} of the network. It is important to ensure that the network is large enough to reasonably fit the values at all points throughout training. Most prior works \citep{bear,bcq,cql} utilize the same network architecture as the original DQN \citep{dqn}, which is fairly small by modern standards. We found that this size of network was adequate to fit MinAtar environments, but that decreasing the size of the network further led to significant performance degradation. Preliminary experiments indicated that since the full Atari environment is much more complex than MinAtar, a larger network may be required.

\textbf{The amount of training steps} in the inner loop of Neural Fitted Q-Iteration. If the number of steps is too small, error will not be adequately minimized. In our experiments, approximately 250,000 gradient steps per target update were required to consistently minimize error enough to avoid divergence. We note that many prior works \citep{bear,bcq,cql} do not adjust this hyperparameter; typical results in the literature use fewer than 10,000 gradient steps per update.

\textbf{The per-update initialization} of the network. When changing the target network, we are in essence beginning a new supervised learning problem. Thus, to ensure that we could reliably find a good solution, we found that we needed to fully reinitialize the neural network whenever we updated the target network. The dynamics of training neural networks via gradient descent are still not fully understood, but it is well known that the initialization is of great importance \citep{hu2020provable}. It has been observed that certain initializations can seriously impact training: for example, if the final parameters of a classifier trained on randomly-relabeled CIFAR classifier are used as the initialization for the regular CIFAR classification task, the trained network will have worse test-set performance.\footnote{Chelsea Finn, Suraj Nair, Henrik Marklund; personal communication.}

\end{document}